\theoremstyle{plain}
\newtheorem{theorem}{Theorem}[section]
\newtheorem{propo}[theorem]{Proposition}
\newtheorem{lemma}[theorem]{Lemma}
\newtheorem{corollary}[theorem]{Corollary}
\theoremstyle{definition}
\newtheorem{definition}[theorem]{Definition}
\newtheorem{assumption}{Assumption}
\theoremstyle{remark}
\newtheorem{remark}[theorem]{Remark}
\newtheorem{example}[theorem]{Example}
\global\long\def\esp{\mathbb{E}}%
\global\long\def\F{\mathcal{F}}%
\global\long\def\K{\mathcal{K}}%
\global\long\def\R{\mathbb{R}}%
\global\long\def\N{\mathbb{N}}%
\global\long\def\P{\mathbb{P}}%
\newcommand{\e}[1]{\mathbb{E}\left[#1\right]  } % esperance
\newcommand{\V}[1]{\mathbb{V}\left[#1\right]  } % variance
\newcommand{\ind}{\mathds{1}} 
\newcommand{\norm}[1]{\left\Vert#1\right\Vert} 
\newcommand{\supp}[1]{\rm{Supp}\left(#1\right)} 
\newcommand{\M}{\mathcal{M}}
\newcommand{\mm}{\gamma}
\newcommand{\prob}[1]{\mathbb{P}\left(#1\right)} %proba 
\newcommand{\cb}[1]{}
\newcommand{\al}[1]{}
\newcommand{\ad}[1]{}
\newcommand{\es}[1]{}
\definecolor{OliveGreen}{rgb}{0.24, 0.71, 0.54}
\definecolor{RoyalBlue}{rgb}{0.0, 0.47, 0.75}
\definecolor{BrickRed}{rgb}{0.77, 0.12, 0.23}
\definecolor{Vert}{RGB}{0,128,0}
\newcommand{\red}[1]{#1}
\newcommand{\orange}[1]{#1}
\author[,1]{Alexis Ayme\thanks{Corresponding author: \texttt{alexis.ayme@sorbonne\_universite.fr}}}
\author[1,2]{Claire Boyer}
\author[3]{Aymeric Dieuleveut}
\author[3]{Erwan Scornet}
\affil[1]{Sorbonne Université, CNRS, Laboratoire de Probabilités, Statistique et Modélisation (LPSM), F-75005 Paris, France}
\affil[2]{MOKAPLAN, INRIA Paris}
\affil[3]{CMAP, UMR7641, Ecole Polytechnique, IP Paris, 91128 Palaiseau, France}
\begin{document}
\title{Minimax rate of consistency for linear models with missing values}

\maketitle

\begin{abstract}
Missing values arise in most real-world data sets due to the aggregation of multiple sources and intrinsically missing information (sensor failure, unanswered questions in surveys...). 
In fact, the very nature of missing values usually prevents us from running standard learning algorithms. 
In this paper, we focus on the extensively-studied linear models, but in presence of missing values, which turns out to be quite a challenging task. 
Indeed, the Bayes rule can be decomposed as a sum of predictors corresponding to each missing pattern.
This eventually requires to solve a number of learning tasks, exponential in the number of input features, which makes predictions impossible for current real-world datasets. 
First, we propose a rigorous setting to analyze a least-square type estimator and establish a bound on the excess risk which increases exponentially in the dimension. Consequently, 
we leverage the missing data distribution to propose a new algorithm, and
derive associated adaptive risk bounds that turn out to be minimax optimal. 
Numerical experiments highlight the benefits of our method compared to state-of-the-art algorithms used for predictions with missing values.
\end{abstract}

\section{Introduction}

Missing values are more and more present as the size of datasets increases. These missing values can occur for a variety of reasons, such as sensor failures, refusals to answer poll questions, or aggregations of data coming from different sources (with different methods of data collection). 
There may be different processes of missing value generation on the same dataset, which makes the task of data cleaning difficult or impossible without creating large biases.  
%The analysis of data containing missing entries has already been formalized and studied in various topics.
In his leading work, \citet{RUBIN76} distinguishes three missing values scenarios:  Missing Completely At Random (MCAR),  Missing At Random (MAR), and Missing Not At Random (MNAR), depending on the links between the observed variables, the missing ones, and the missing pattern. 
%The MCAR (missing completely at random) setting is valid when the occurrence of missing data is independent of the data. This is the case for example in the case of a machine failure. The MAR (Missing At Random) one characterizes missing values that depend only on the observed data. Finally, the MNAR (Missing Not At Random) setting is such that missing values depend on observed and unobserved data. 

In the linear regression framework, most of the literature focuses on \textit{parameter estimation}  \citep{little1992regression,Jones1996IndicatorAS}, using sometimes a sparse prior leading to the Lasso estimator \citep{LohWainwright12} or the Dantzig selector \citep{Rosenbaum2009}. 
Note that the robust estimation literature \citep{DalThompson2019,chen13} could be also used to handle missing values, as the latter can be reinterpreted as a multiplicative noise in linear models.  
Besides, \citet{sportisse2020debiasing} adapt and theoretically study the famous stochastic gradient algorithm for model estimation in online linear regression.

On the other hand, \textit{prediction} with missing values in a parametric framework -even under a linear model- is in fact not an easy task. 
Indeed, the prediction task is distinct from model estimation: estimated model parameters cannot be directly used to predict on a test sample containing missing values as well. 
As a matter of fact, the occurrence of missing data turns the linear regression problem into a semi-discrete one of very high complexity. 
Finally, establishing risk bounds -even without missing values- for random designs is already a challenge as studied in papers \citep{gyorfi2006distribution,audibert2011robust,dieuleveut2017harder} and more recently in \citep{mourtada2019exact}.

\textbf{Related work.} 
There is actually little work on prediction with missing values.
\citet{pelckmans2005handling} adapt the SVM classifier to the case of missing values.
\citet{josse2019consistency} study the consistency of imputation strategies prior to non-parametric learning methods.
Prediction under linear models has been studied in \citep{lemorvan:hal-02888867,le2020linear},  
by exploiting the peculiar pattern-by-pattern structure of the Bayes predictor (i.e.\ decomposable into predictors specific to each missing pattern), and estimating it when the input variables are assumed to be Gaussian. \citet{le2020linear} obtain risk bounds, that suffer from the curse of dimensionality, and are actually not compatible with their Gaussian assumption.

\textbf{Contributions.} In this paper, we study pattern-by-pattern predictors for regression with missing input variables. %as in \citep{le2020linear}. 
First, we provide a synthetic overview of all assumptions that allow to obtain a pattern-wise linear Bayes predictor, and we propose a detailed study on how these assumptions are related (\Cref{sec:Generalization}).
Second, we provide a distribution-free excess risk bound for a least-square estimator handling unbounded features (\Cref{sec:unboundedFeatures}), but suffering from the curse of dimensionality. 
We therefore introduce a novel thresholded estimator for which we establish an excess risk bound adaptive to the missing pattern distribution (\Cref{sec:main_result_regression}). 
The latter actually applies to all types of missing data (MCAR, MAR, MNAR) and is shown to be minimax optimal.
We exhibit three settings in which our bound is precisely evaluated, improving upon state-of-the-art results.
%We highlight  the resulting theoretical improvement on two examples of missing pattern distribution, 
Finally, we experimentally illustrate our  method on three %(or four it depends on what we can do by the end of the deadline)
different simulation settings, outperforming existing competitors designed to handle missing values, both in terms of predictive performance and computational time (\Cref{sec:experiments}).
All the proofs of theoretical results can be found in the supplementary materials.

\textbf{Notations. } For $n\in \mathbb N$, we denote $[n] = \lbrace 1, \dots, n\rbrace$. We use $\lesssim$ to denote inequality up to a universal constant. We denote $a\wedge b= \min(a,b)$ and $a\vee b= \max(a,b)$.

% \replace{}{} 

\section{Typology of missing value and its consequence on the Bayes predictor}\label{sec:Generalization}

\subsection{Setting}

In a context of regression, we observe $n\in \mathbb N$ input/output observations $(X_i,Y_i)_{i\in [n]}$, i.i.d.~copies of a generic pair $(X,Y)\in \mathbb{R}^d \times \mathbb{R}$, assuming that the underlying model linking $Y$ to $X$ is linear.

\begin{assumption}[Linear Model]\label{ass:model_lin} 
%\begin{align}
$Y = \beta_0+\beta^{\top}X +\epsilon,$ with a Gaussian noise $\epsilon\sim\mathcal{N}(0,\sigma^2)$ independent of $X$.
%\end{align}
\end{assumption}

The (unknown) model parameters are therefore $(\beta_0,\beta)\in\mathbb{R}^{d+1}$. Although standard linear regression is a well-understood problem in statistics, 
we consider here that only a fraction of the components of $X$ is available: to the data $X\in\mathbb{R}^d$ one associates the missing values pattern $M\in \{0,1\}^d$, such that $M_j=1$ if and only if  $X_j$ is missing. Let $\M=\{0,1\}^d$ be the set of missing values patterns. 
For $m\in\mathcal{M}$, we denote by $\rm{obs}(m)$ (resp. $\rm{mis}(m)$) the set of indexes of the observed variables (resp. the missing variables) and $X_{\rm{obs}(m)}$ (resp. $X_{\rm{mis}(m)}$) the vector of observed components (resp. unobserved components) of $X$.
Thus, under a linear model with missing covariates, our goal is to predict $Y$ given $ \left( X_{\rm{obs}(m)},M\right)$, denoted $Z$ in the sequel.

\subsection{Bayes predictor}

The Bayes predictor for the quadratic loss can be decomposed according to the possible missing data patterns, as

\begin{align}
        \notag
		f^{\star}(Z) &= \e{Y|Z}= \e{Y|X_{\rm{obs}(m)},M}\\
		\notag 
		%&= \sum_{m\in\M}\e{Y|X_{\rm{obs}(m)},M=m}\ind_{M=m} \\
		&=\sum_{m\in\M}f_m^{\star}(X_{\rm{obs}(m)})\ind_{M=m},
		\label{eq:bayes_m}
\end{align}
where $f_m^{\star}(X_{\rm{obs}(m)}):=\e{Y|X_{\rm{obs}(m)},M=m}$ can be seen as the Bayes predictor conditionally on the event $``M=m"$.
Under Assumption \ref{ass:model_lin}, $f_m^\star$ can be written as 
 \begin{equation*}
     f_m^{\star}(X_{\rm{obs}(m)})= \beta_0 +\beta_{\rm{obs}(m)}^{\top}X_{\rm{obs}(m)}
     +\beta_{\textrm{mis}(m)}^{\top}\e{X_{\rm{mis}(m)}|X_{\rm{obs}(m)},M=m}.
 \end{equation*}
Thus, $f_m^{\star}$ remains linear in the observed variables $X_{obs}$, provided that $x\mapsto \e{X_{\rm{mis}(m)}|X_{\rm{obs}(m)}=x,M=m}$ is a linear function. 
This is not always true as shown in the following example. 

\begin{example}\label{ex:nonlinear}
Let $Y=X_1+X_2+X_3+\epsilon$, where $X_3=X_2e^{X_1}$. Then 
\begin{equation*}
    f_{(0,0,1)}^\star(X_1, X_2)=X_1+X_2+X_2e^{X_1},  
\end{equation*}
where $m=(0,0,1)$ is the missing value pattern where only $X_1$ and $X_2$ are observed. Despite Assumption \ref{ass:model_lin}, the predictor $f_{(0,0,1)}^\star$ is not linear in the observed covariates, due to the non-linear link between the observed variables $X_1,X_2$ and the missing one $X_3$.
Therefore, linear regression with missing data is hard to analyze without any additional assumptions on the joint distribution $(X,M)$.
%{This non-linearity makes linear regression very tricky as one cannot neither linearly link $Y$ and the covariates, nor the covariates to each other, in full generality.} 
\end{example}

\subsection{Data scenarios}\label{sec:dataScenario}
There exist two main approaches for modelling the joint distribution of $X$ and $M$: selection models \citep{heckman} and pattern-mixture ones \citep{little1993pattern}.
% \begin{enumerate}
%     \item selection models \citep{heckman} relying on modeling of the distributions of $X$ and $M|X$, i.e.\ 
%     $$\P\left(X,M\right)=\P(X)\P(M|X),$$
%     \item pattern-mixture models \citep{little1993pattern} relying on modeling of the distributions of $X$ and $X|M$, i.e.\ 
%     $$\P\left(X,M\right)=\P(M)\P(X|M).$$
    
% \end{enumerate}
% Now, here are the relevant models for linear regression classified in these two categories. 

\paragraph{Selection models.} They rely on the following factorization of the joint distribution $
\P\left(X,M\right)=\P(X)\P(M|X).
$
Therefore, in selection models, one specifies the distributions of $X$ (the most common ones being  Assumptions~\ref{ass:indepX} and \ref{ass:Gaussian} below) and $M|X$ (Assumptions~\ref{ass:MCAR}, \ref{ass:MAR} or \ref{ass:MNAR} below). %Regarding covariates, a standard assumption is the Gaussian one. 
\begin{assumption}[Independent covariates]\label{ass:indepX}
The covariates $\{X_j\}_{j\in[d]}$ are mutually independent.
\end{assumption}
\begin{assumption}[Gaussian covariates]\label{ass:Gaussian}
There exist $\mu\in \mathbb{R}^d$ and $\Sigma\in \mathbb{R}^{d\times d}$ such that $X\sim\mathcal{N}\left(\mu,\Sigma \right)$.
\end{assumption}
Note that this latter assumption excludes the pathological \cref{ex:nonlinear}. 
Regarding the distribution of $M|X$, \citet{RUBIN76} introduces the three following missingness mechanisms.
\begin{assumption}[Missing Completely At Random - MCAR]\label{ass:MCAR}
    For all $m\in\M$, $\prob{M=m|X}=\prob{M=m}$.
\end{assumption}

\begin{assumption}[Missing At Random - MAR]\label{ass:MAR}
    For all $m\in\M$,  $\prob{M=m|X}=\prob{M=m|X_{\rm{obs}(m)}}$.
\end{assumption}

\begin{assumption}[Missing Non At Random - MNAR]\label{ass:MNAR}
    The missing pattern $M$ depends on the full vector $X$ (thus, on the observed and missing entries).
    %We denote $\mathcal{P}_{\text{MNAR}}$ the set of these problems. 
\end{assumption}

To illustrate these scenarios, consider the simple situation of a survey
with two variables, \textit{Income} and \textit{Age}, with missing values only on the \textit{Income} variable. 
The MCAR setting (Assumption \ref{ass:MCAR}) holds when the missing values are independent of any value (e.g. respondents
have forgotten to fill the form). The MAR situation (Assumption \ref{ass:MAR}) is verified when missing values on \textit{Income}
depend on the values of \textit{Age} (e.g. younger respondents would be less inclined to reveal their
income). The MNAR scenario (Assumption \ref{ass:MNAR}) allows the occurrence of the missing values on \textit{Income} to
depend on the values of the income itself (e.g.\  poor and rich respondents would be less inclined to reveal their income).
%even though Age and Income are related, the process that causes
%the missing data is not fully explained by Age. Consequently, knowing the value of Age is
%not enough to retrieve the value of Income.
% Assumption \ref{ass:MCAR} is the most restrictive one: the generation of MCAR data is data-independent, meaning in the example that . This assumption can be valid for example in the case of a sensor failure or an involuntary omission. MCAR missingness mechanisms are included in MAR ones (Assumption \ref{ass:MAR}) since for the latter, the occurrence of missing data depends on the observed data. For example, if a doctor requests additional examinations following a doubtful check-up. 
% The last scenario \ref{ass:MNAR}, called MNAR, is the most general one as the introduction of missing values may depend not only on the observed values, but also on the missing ones. This is the case when the missing value is determined by the missing data. For example, wealthy people who refuse to disclose their salary or sensor failure produced by an extreme value. 
A particular case of the last example consists in considering that the missingness mechanism for a given variable is only dictated by its underlying value.
\begin{assumption}[Gaussian Self-Masking]\label{ass:selfmasking}  %\citep[Assumption 4]{lemorvan:hal-02888867}.
For all $m\in\M$, $\P(M=m|X)=\prod_{j=1}^d\P\left(M_j=m_j|X_j\right)$ and for $ j\in[d]$, 
\begin{equation*}
    \P(M_j=1|X_j)\propto\exp\left(-\frac{1}{2}\frac{(X_j-\widetilde{\mu}_j)^2}{\widetilde{\sigma}_j^2}\right).
\end{equation*}
\end{assumption}

\paragraph{Pattern-mixture models. }  Such models rely on the following factorization of the joint distribution 
$
\P\left(X,M\right)=\P(M)\P(X|M).
$
Therefore, in pattern-mixture models, one specifies the distributions of $M$ and $X|M$: 
one can therefore appeal to the Gaussian pattern mixture model (GPMM).
\begin{assumption}[Gaussian Pattern Mixture Model-GPMM]  %\citep[Assumption 4.1]{le2020linear}.
	\label{ass:ass4.1} 
	For all  $m\in\mathcal{M}$, $ X|(M=m)\sim\mathcal{N}(\mu^{(m)},\Sigma^{(m)}).$
\end{assumption}

%The independence between components allows to have the same results as in the Gaussian case on the Bayes predictor (~\cref{prop:f_mLineaire}).

%This Assumption is taken from \cite[Assumption 4]{lemorvan:hal-02888867}.  It covers scenarios that are neither MCAR nor MAR. Depending on the value of $\widetilde{\mu}_j$, this model handles both missing values around a certain value, or missing values on extreme values.  

% The following assumption allows to model the distribution of the covariates conditionally to the missing pattern. This model matches the one presented in \cite[Assumption 4.1]{le2020linear}.
% \begin{assumption}{3e}[Gaussian Pattern Mixture]
% 	\label{ass:ass4.1} 
% 	For all  $m\in\mathcal{M}$, 
% 	$$ X|(M=m)\sim\mathcal{N}(\mu^{(m)},\Sigma^{(m)}).$$
% \end{assumption}

\subsection{Links between Gaussian PMM \& selection models.} 
In this subsection, we investigate the links between the different sets of assumptions of~\Cref{sec:dataScenario}, summarized in~\Cref{fig:GMM}.
\begin{figure}[h]
    \centering
    \begin{tikzpicture}[every node/.style={scale=1.1},scale=0.9]
\node[draw] (A) at (1,0) {G+MAR (\ref{ass:Gaussian} and \ref{ass:MAR})};
\node[draw] (B) at (7.5,0) {MAR (\ref{ass:MAR})};
\node[draw] (C) at (0,-3) {G+MCAR (\ref{ass:Gaussian} and \ref{ass:MCAR})};
\node[draw,very thick] (D) at (5,-3) {GPMM (\ref{ass:ass4.1})};
\node[draw] (F) at (9,-3) {MNAR (\ref{ass:MNAR})};
\draw [->,>=latex] (A) ->  (B);
\draw[->,>=latex] (C) -> (A);
\draw[->,>=latex] (C) -> (D);
\draw[dashed,->] (D) to[bend left]node[anchor=north] { Example \ref{ex:MCAR_GMM} } (C);
\draw[dashed,->] (D) to[bend left]node[anchor=south] {Example \ref{ex:GMMAR}$\qquad\qquad\;$} (B);
\draw[->,>=latex] (D) to node[draw, red, sloped, cross out, line width=.5ex, minimum width=1.5ex, minimum height=1ex, anchor=center]{}node[anchor=north] { $\qquad\qquad\qquad$Example \ref{ex:GMMNAR}}  (B);
\draw[->,>=latex] (A) to node[draw, red, sloped, cross out, line width=.5ex, minimum width=1.5ex, minimum height=1ex, anchor=center]{}node[anchor=north] {$\qquad$  Example \ref{ex:MARnoGMM}$\qquad\qquad\qquad\qquad$}  (D);
\draw[->,>=latex] (B) -> (F);
\draw[->,>=latex] (D) -> (F);
\end{tikzpicture}
    \caption{Links between Gaussian pattern mixture models (GPMM) and Gaussian selection models. Solid arrows correspond to inclusions, dotted (resp. crossed) arrows illustrate a partial inclusion (resp. non-inclusion).}
    \label{fig:GMM}
\end{figure}
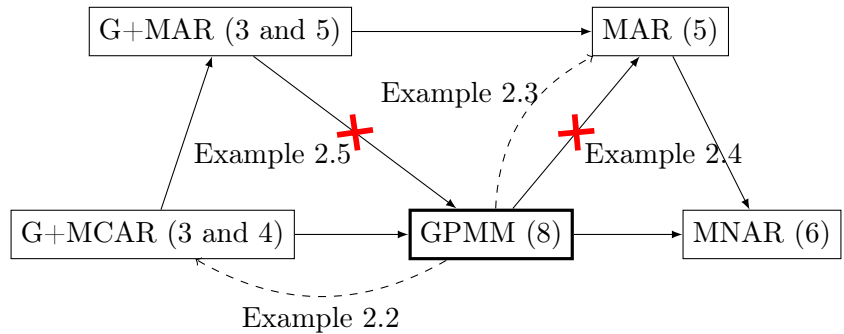

First, we remark that GPMM may implicitly encode for M(C)AR and MNAR scenarios. 
\begin{example}[From GPMM to MCAR]\label{ex:MCAR_GMM}
Consider a GPMM such that there exist $\mu$ and $\Sigma$ such that $\mu^{(m)}=\mu$ and $\Sigma^{(m)}=\Sigma$ for all $m\in\M$. One can show that the latter is necessary and sufficient to get a MCAR dataset (Assumption \ref{ass:MCAR}) with Gaussian covariates $X$ (Assumption \ref{ass:Gaussian}). 
\end{example}

\begin{example}[From GPMM to MAR]\label{ex:GMMAR}
Consider a subset of always observed variables indexed by $J\subset [d]$  (i.e.\ $\P(M_j=0)=1$ for $j \in J$) and a GPMM such that %\text{drawn uniformly in }
\[
\begin{cases}
\mu^{(m)}_J & =\mu_m \, \,\sim \mathcal{U}([-1,1]^{|J|}) \\
\mu^{(m)}_{J^c}&=\mu \in \mathbb{R}^{|J^c|} \, \, \text{(fixed)}
\end{cases}
\]
and
\[
\begin{cases}
\Sigma_{J,J}^{(m)} & =\Sigma_{m}  \in \mathbb{R}^{|J|\times |J|}\\
\Sigma_{J^{c},J^{c}}^{(m)} & =\Sigma  \in \mathbb{R}^{|J^c|\times |J^c|} \, \, \text{(fixed)}\\
\Sigma_{J^{c},J}^{(m)} & =0,
\end{cases}
\]
where $\Sigma_{m}\in\R^{J\times J}$ can depend on $m$. 
In such a case, for all $m\in \mathcal{M}$, 
$
    \P\left(M=m|X\right)= \P\left(M=m|X_J\right)
$
with $X_J$ always observed, thus the missing mechanism can be qualified of MAR. 
Furthermore, note that as soon as there exist $m,m'\in\M$, such that $\mu_m \neq\mu_{m'}$ or $\Sigma_m\neq\Sigma_m'$ then the dataset is ensured not to be MCAR. 
\end{example}

\begin{example}[From GPMM to MNAR]\label{ex:GMMNAR}
Consider a GPMM  such that for all $m\in \mathcal{M}$, $\Sigma^{(m)}=I_d$ and $\mu^{(m)}$ is  uniformly drawn at random in $[-1,1]^d$. In such a case, the missing mechanism is MNAR and almost surely not MAR.
\end{example}

Note that, Gaussian linear models with MAR missing values (Assumptions \ref{ass:Gaussian} and \ref{ass:MAR}) are not necessarily included in Gaussian pattern mixture models (Assumption \ref{ass:ass4.1}). This is in particular highlighted by the following example.

\begin{example}[G+MAR $\not \subseteq$ GPMM]\label{ex:MARnoGMM}
Let $(X_1,X_2)\sim\mathcal{N}\left(0,I_2 \right)$ such that $X_1$ is always observed ($M_1=0$) and $X_2$ is observed if and only if $X_1\leq 0$ ($M_2=\ind_{X_1>0}$). This corresponds to a linear Gaussian model (Assumption \ref{ass:Gaussian}), with a MAR missing variable $X_2$ (Assumption \ref{ass:MAR}) since missing values on $X_2$ only depend on $X_1$ which is  always observed.
However, this cannot be a GPMM (Assumption \ref{ass:ass4.1}) as the distribution of $X|(M=(0,1))$ is supported on a half space (preventing $X|M$ from being Gaussian). 
\end{example}

%However, Assumption \ref{ass:ass4.1} seems relevant in some cases where the missing pattern gives a lot of information about a subpopulation. For instance, in the case of merged data, the missing pattern may allow to retrieve the original databases. For example, if the data is collected by region and then aggregated, each region may have its own methodology that corresponds to one or more missing patterns, and the population distribution may change by region. 

% \paragraph{Going further than Gaussian models.}
% While most of the results will be derived for GPMM or Gaussian selection models, one can go further than Gaussian models by invoking the following assumption. 
% \begin{assumption}{5}[Variables Independence]\label{ass:variableindependence}
% $\{X_j\}_{j\in[d]}$ is mutually independent.
% \end{assumption}
% \replace{
% This assumption will be particularly useful when the vector $X$ is not Gaussian, to still preserve a similar linear structure of the Bayes predictor (see~\cref{prop:f_mLineaire}). }{}

% \begin{assumption}{3f}[Variables Independence]\label{ass:variableindependence}
% $\{X_j\}_{j\in[d]}$ is mutually independent.
% \end{assumption}
% This assumption can be used in some cases where the vector $X$ is not Gaussian. The fact of having independent components allows to have the same results as in the Gaussian case on the Bayes predictor (~\cref{prop:f_mLineaire}).

\subsection{Linearity of the Bayes predictor}
In this subsection, we give an overview of  the properties that ensure the 
linearity of $f_m^{\star}$. 
%To study Bayes predictor, we need than this following defined set of function. 
\begin{definition}
\label{def:linearity_bayes}
Consider the vector space of linear predictors in the observed variables
%\al{j'ai enlever la formule explicite de ce truc}
%\begin{equation}\label{eq:Fb}
%     \F_b:=\left\{f(Z)=\sum_{m\in\M} \ind_{M=m}\left(\xi_0^{(m)}+ % \scal{X_{\rm{obs}(m)}}{\xi^{(m)}}\right),\\(\xi^{(m)}_0,\xi^{(m)})\in\R^{|m|+1}\right\}.
%\end{equation}
i.e.\ $f\in\F_b$ if $f(.,m)$ is linear for all $m\in\M$. The dimension of $\mathcal{F}_b$ is $p:=2^{d-1}(d+2)$. 
\end{definition}
%$\mathcal{F}_b$ is the vector space of function such that $f(.,m)$ is linear for all $m\in\M$. 

% The deviation of the underlying model $f^\star$ from a set $\mathcal{F}$ of hypotheses can be captured through the approximation error. 
% \begin{definition} We consider $\F$ a set of measurable function, we define
% \begin{equation}\label{eq:defApprox}
%     \text{Approx}\left(\F,f^\star\right)=\inf_{f\in\F}\e{\left(f(Z)-f^\star(Z)\right)^2}.
% \end{equation}
% \end{definition}

%This following result is derived from \cite[Proposition 4.1]{le2020linear}  and \cite[Proposition 2.1]{lemorvan:hal-02888867}. 
\begin{propo}\label{prop:f_mLineaire}[\citealt{le2020linear,lemorvan:hal-02888867}, resp.\ Prop. 4.1 and Prop. 2.1]
{Assume one of the following hypotheses
\begin{enumerate}[noitemsep,topsep=0pt]
    \item Gaussian covariates  with M(C)AR mechanisms (Assumption \ref{ass:Gaussian} and (\ref{ass:MCAR} or \ref{ass:MAR})),
    \item Gaussian covariates with Gausian Self-Masking mechanisms (Assumption \ref{ass:Gaussian} and \ref{ass:selfmasking}),
    \item Gaussian Pattern Mixture Model (Assumption \ref{ass:ass4.1}),
    \item Independent covariates (Assumption~\ref{ass:indepX}). 
\end{enumerate}
}
 Then $f^\star\in\mathcal{F}_b$ i.e. for all $m\in\M$ there exist $\delta^{(m)}_0 \in \R$ and $\delta^{(m)}\in\R^{|m|}$ such that
\begin{equation*}
    f_m^{\star}(X_{\rm{obs}(m)})=\delta^{(m)}_0 +\left( \delta^{(m)}\right)^{\top}X_{\rm{obs}(m)}.
\end{equation*}
 
\end{propo}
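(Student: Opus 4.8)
The plan is to verify linearity of $f_m^\star$ separately under each of the four hypotheses, reducing each case to the question of whether $x \mapsto \e{X_{\mathrm{mis}(m)} \mid X_{\mathrm{obs}(m)} = x, M = m}$ is an affine map; once this conditional expectation is affine, the displayed formula for $f_m^\star$ from Section~2.2 immediately gives the claimed form with $\delta_0^{(m)} = \beta_0 + \beta_{\mathrm{mis}(m)}^\top a^{(m)}$ and $\delta^{(m)} = \beta_{\mathrm{obs}(m)} + (A^{(m)})^\top \beta_{\mathrm{mis}(m)}$, where $a^{(m)}$ and $A^{(m)}$ are the intercept and slope of the affine conditional expectation. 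So the whole proof is really four computations of a conditional expectation.

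\textbf{Case 3 (GPMM) first, as it is the cleanest.} Under Assumption~\ref{ass:ass4.1}, conditionally on $M = m$ the vector $X$ is Gaussian $\mathcal{N}(\mu^{(m)}, \Sigma^{(m)})$, so the conditional law of $X_{\mathrm{mis}(m)}$ given $X_{\mathrm{obs}(m)}$ (still under $M = m$) is the usual Gaussian conditional, whose mean is the affine function $x \mapsto \mu_{\mathrm{mis}(m)}^{(m)} + \Sigma_{\mathrm{mis},\mathrm{obs}}^{(m)} (\Sigma_{\mathrm{obs},\mathrm{obs}}^{(m)})^{-1}(x - \mu_{\mathrm{obs}(m)}^{(m)})$. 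Linearity follows. \textbf{Case 1 (Gaussian + M(C)AR)} I would handle by showing it reduces to Case~3: under Assumptions~\ref{ass:Gaussian} and \ref{ass:MAR}, I claim $X \mid (M=m)$ is again Gaussian. Indeed, by Bayes' rule the density of $X_{\mathrm{obs}(m)}$ given $M = m$ is proportional to $\P(M = m \mid X_{\mathrm{obs}(m)}) \, \phi_{\mathrm{obs}}(x_{\mathrm{obs}(m)})$, and — this is the key point — $X_{\mathrm{mis}(m)} \mid (X_{\mathrm{obs}(m)}, M = m)$ has the same law as $X_{\mathrm{mis}(m)} \mid X_{\mathrm{obs}(m)}$ because, under MAR, $M \perp X_{\mathrm{mis}(m)} \mid X_{\mathrm{obs}(m)}$; the latter conditional law is Gaussian with affine mean by the Gaussian conditioning formula applied to the \emph{unconditional} law of $X$. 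That affine mean does not depend on the event $M = m$ at all, which is exactly what is needed. (MCAR is a special case of MAR.)

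\textbf{Case 4 (independent covariates)} is immediate: if the $X_j$ are mutually independent, then $\e{X_{\mathrm{mis}(m)} \mid X_{\mathrm{obs}(m)} = x, M = m}$ — well, here one must be slightly careful because $M$ need not be independent of $X$, so conditioning on $M = m$ can reintroduce dependence; but actually the cleanest route is to note that independence of the coordinates of $X$ makes $f_m^\star = \beta_0 + \beta_{\mathrm{obs}(m)}^\top X_{\mathrm{obs}(m)} + \beta_{\mathrm{mis}(m)}^\top \e{X_{\mathrm{mis}(m)} \mid X_{\mathrm{obs}(m)}, M = m}$, and I would argue the conditional expectation term is a constant (not merely affine) whenever $M$ is, say, MCAR, and more generally remains affine — I would need to state the precise sub-hypothesis the original papers use here, since ``Assumption~\ref{ass:indepX}'' alone with an arbitrary MNAR mechanism does not obviously suffice. \textbf{Case 2 (Gaussian Self-Masking)} is the one I expect to be the main obstacle: here the mechanism is MNAR, so none of the MAR shortcuts apply. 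I would compute directly: the conditional density of $X_{\mathrm{mis}(m)}$ given $X_{\mathrm{obs}(m)} = x$ and $M = m$ is proportional to $\P(M = m \mid X) \, \phi(x_{\mathrm{obs}(m)}, x_{\mathrm{mis}(m)})$, and by the self-masking product form $\P(M = m \mid X) = \prod_{j \in \mathrm{mis}(m)} \P(M_j = 1 \mid X_j) \prod_{j \in \mathrm{obs}(m)} \P(M_j = 0 \mid X_j)$; the observed-coordinate factors are constants in $x_{\mathrm{mis}(m)}$, and each missing-coordinate factor $\P(M_j = 1 \mid X_j) \propto \exp(-\tfrac12 (X_j - \widetilde\mu_j)^2/\widetilde\sigma_j^2)$ is \emph{Gaussian} in $X_j$. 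Multiplying this Gaussian ``weight'' against the Gaussian conditional $\phi(x_{\mathrm{mis}(m)} \mid x_{\mathrm{obs}(m)})$ and completing the square, one gets again a Gaussian in $x_{\mathrm{mis}(m)}$ whose mean is affine in $x_{\mathrm{obs}(m)}$ — the algebra of combining the precision matrices and mean vectors is the routine-but-delicate part. Collecting the affine intercept and slope in each case and plugging into the Section~2.2 formula for $f_m^\star$ finishes the proof; since this is essentially the content of Prop.~4.1 of \citet{le2020linear} and Prop.~2.1 of \citet{lemorvan:hal-02888867}, I would cite those for the detailed computations and only spell out the GPMM and Gaussian Self-Masking cases in full.
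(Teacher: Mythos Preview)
The paper does not supply its own proof of this proposition: it is attributed entirely to \citet{le2020linear} and \citet{lemorvan:hal-02888867}, so there is no in-paper argument to compare against. Your overall strategy --- reduce everything to showing that $x\mapsto \e{X_{\mathrm{mis}(m)}\mid X_{\mathrm{obs}(m)}=x,M=m}$ is affine and then read off $\delta_0^{(m)},\delta^{(m)}$ --- is exactly the route taken in those references, and your treatments of Cases~2 and~3 are correct.

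Two remarks. First, in Case~1 your opening claim that ``$X\mid(M=m)$ is again Gaussian'' is false: the paper's own Example~\ref{ex:MARnoGMM} exhibits a Gaussian $X$ with a MAR mechanism for which $X\mid(M=m)$ is supported on a half-space. Fortunately your \emph{actual} argument does not use this; you correctly invoke the MAR conditional independence $\{M=m\}\perp X_{\mathrm{mis}(m)}\mid X_{\mathrm{obs}(m)}$ to get $\e{X_{\mathrm{mis}(m)}\mid X_{\mathrm{obs}(m)},M=m}=\e{X_{\mathrm{mis}(m)}\mid X_{\mathrm{obs}(m)}}$, which is affine by Gaussian conditioning on the \emph{unconditional} law. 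So drop the ``reduces to Case~3'' framing and keep the conditional-independence step as the whole argument.

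Second, your caution about Case~4 is entirely warranted and is a genuine gap in the statement as written. Mutual independence of the $X_j$ \emph{alone}, with no restriction on the mechanism, does not force $f_m^\star$ to be affine: take $X_1,X_2$ i.i.d.\ $\mathcal N(0,1)$, $M_1\equiv 0$, $M_2=\ind_{X_2<X_1}$; then $\e{X_2\mid X_1=x,M_2=1}=\e{X_2\mid X_2<x}=-\phi(x)/\Phi(x)$, which is not affine in $x$. The cited references obtain linearity under independence only with an additional structural hypothesis on $M$ (e.g.\ MCAR, or a coordinate-wise mechanism $\P(M=m\mid X)=\prod_j\P(M_j=m_j\mid X_j)$, which together with coordinate independence makes each $\e{X_j\mid X_{\mathrm{obs}(m)},M=m}$ a constant). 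You should state and use such a side condition explicitly; without it the claim is false.
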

%This result ensures that either for Gaussian linear models with MCAR/MAR variables, or self masking Gaussian model, or for Gaussian pattern mixture models, the Bayes predictor conditional to the missing pattern remains linear, see Figure \ref{fig:assumption_relation} for an illustration.
Proposition \ref{prop:f_mLineaire} is summarized in Figure \ref{fig:assumption_relation}: there is indeed a wide variety of possible assumptions such that $f^\star\in\F_b$. 
Note that the covariates independence (Assumption~\ref{ass:indepX}) allows to get the Bayes rule linearity beyond Gaussian models.
Furthermore, Assumptions~\ref{ass:indepX}, \ref{ass:selfmasking}, and \ref{ass:ass4.1} may not only include MAR but MNAR scenarios as well, the latter known to be challenging in an inference setting.  However, this comes at the cost that the dimension $p$ of $\mathcal{F}_b$  grows exponentially with the ambient dimension $d$ ($p=2^{d-1}(d+2)$). 
%\es{$d+2$ ou $d-2$ ?}
%\cb{finalement en prédiction on va voir que ce n'est pas MNAr qui est limitant, ça c'était pour le monde de l'inférence - donc j'ai repris ce paragraphe à relire pour valider}

\begin{figure}[h]
    \centering
    \begin{tikzpicture}[every node/.style={scale=1.1},scale=0.9]
\node[draw] (A) at (0,0) {G+ MCAR (\ref{ass:Gaussian} and \ref{ass:MCAR})};
\node[draw] (B) at (6,0) {G+ MAR (\ref{ass:Gaussian} and \ref{ass:MAR})};
\node[draw,fill=gray!50] (C) at (11,-2) {GSM (\ref{ass:selfmasking})};
\node[draw,fill=gray!50] (D) at (0,-2) {GPM (\ref{ass:ass4.1})};
\node[draw,fill=gray!50] (E) at (6,-4) {Independent covariates (\ref{ass:indepX})};
\node[draw,very thick] (F) at (6,-2) {$f_m^\star$ linear for all $m\in\M$};
\draw[->,>=latex] (A) -> (B);
\draw[->,>=latex] (A) -> (D);
\draw[->,>=latex] (C) -> (F);
\draw[->,>=latex] (B) -> (F);
\draw[->,>=latex] (D) -> (F);
\draw[->,>=latex] (E) -> (F);
\end{tikzpicture}
    \caption{Links between the different assumptions to get the linearity of the Bayes rule as in~\cref{prop:f_mLineaire}.  
    Scenarios that may contain MNAR cases are depicted in gray. }
    \label{fig:assumption_relation}
\end{figure}
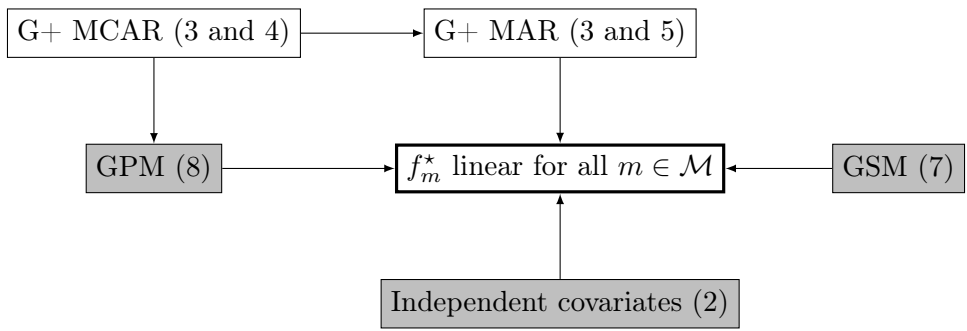

% . However, to deal with cases where this error is very small, we will let this term appear in the upper bounds. 
\section{A distribution-free bound on excess risk}\label{sec:unboundedFeatures}
In the framework of missing values,  let the excess risk be
%(also referred to as \textit{generalization error}) 
\begin{align}
    \label{eq:def_risk_integrated}
    \mathcal{E}\left(\widehat{f}\right):=\esp\left[\left.\left(\widehat{f}(Z) - f^\star(Z)\right)^2  \right| \mathcal{D}_n \right],
\end{align}
and its integrated version $\mathbb{E}[\mathcal{E}(\widehat{f})]$.
This quantity measures the quality of performance made by a prediction function $\widehat{f}$ compared to the optimal predictor $f^\star$. 
%In this subsection, we will present an upper bound on the integrated excess risk in the framework of linear regression with missing data. 
\citet{le2020linear}  propose a missing-pattern-distribution-free control on the integrated risk scaling in ${d 2^{d}/n}$ for the least-square estimator, requiring  (i) the Bayes predictor to be linear in the observed variables (Definition~\ref{def:linearity_bayes}) and (ii) the covariates boundedness. Unfortunately,~\cref{prop:f_mLineaire} highlights the convenience of Gaussian covariates to ensure the linearity of the Bayes predictor, and yet they are incompatible with (ii). Thus, the result of \citep{le2020linear} is valid under some contradictory assumptions.
We intend to fill this gap by providing a unified and more general analysis to include the case of unbounded covariates (Assumption~\ref{ass:bornitude} below) and 
where the Bayes predictor is assumed to be regular without being explicitly linear (Assumption~\ref{ass:Lipsch}). 
%
%To do this, we will consider two assumptions which will naturally be verified for the assumptions of~\cref{prop:f_mLineaire}, but which will give us a more general result %\replace{}{including cases where $A_{\F_b}>0$}. 

\begin{assumption}[Sub-Gaussian covariate] \label{ass:bornitude}
	There is a positive constant $\mm$ such that for all $j\in[d]$, $\e{X_j}^2\leq \mm$ 
	and $X_j-\e{X_j}$ is $\mm$-sub-Gaussian, that is
	\begin{equation}
	    \forall t>0,\qquad \prob{\left|X_j-\e{X_j}\right|>t}\leq e^{-\frac{t^2}{2\mm}}.
	\end{equation}
\end{assumption}

\begin{assumption}[Lipschitz]\label{ass:Lipsch}
There exists $B>0$ such that for all $m\in\M$, $f_m^\star$ is $B$-Lipschitz for the $\ell^\infty$-norm, and $|f_m^\star (0)|\leq B$. 
\end{assumption}
According to Assumption~\ref{ass:Lipsch}, one can control the $\ell^\infty$-norm of Bayes predictors on an $\ell^\infty$-ball. For instance, this assumption is easily verified when Bayes predictors are linear functions. Since covariates are assumed to be unbounded (\Cref{ass:bornitude}), one should consider the set 
\begin{equation}\label{eq:KD}
	\K_D:=\left\lbrace (x_{\rm{obs}(m)},m),\quad\left \Vert x_{\rm{obs}(m)}\right\Vert_\infty\leq D \right\rbrace ,
\end{equation}
for some $D>\sqrt{\mm}$, which consists in taking the covariates with all observed components in an $\ell^\infty$-ball of radius $D$. 
 %{This $\ell_{\infty}$-criterion will be further compatible with technical operations such as union bounds}.
%This also explains the choice of the $\ell_1$-norm in~\cref{ass:bornitude} because it is the dual norm of $\ell_{\infty}$-norm. 

Under~\cref{ass:bornitude}, 
an observation $Z$ falls into the bounded set $\K_D$ with high probability (see~\cref{lem:p_k}). One can then adapt the results in \cite{gyorfi2006distribution,audibert2011robust} when $X$ is on a bounded set to the sub-Gaussian case. To do so, consider the modified least-squares ($D$-LS) estimator taking into account only the observations falling into $\K_D$:
 \begin{equation}\label{eq:OLS_KD}
			\hat{f}^{(D\textrm{-LS})}\in\arg\!\min_{f\in\F_b}\sum_{Z_i\in\K_D}(f(Z_i)-Y_i)^2 
\end{equation}
if $\K_D \neq \emptyset $, and $\hat{f}^{(D\textrm{-LS})}=0$ otherwise. {Computing $\hat{f}^{(D\textrm{-LS})}$ amounts to perform one ordinary least-square procedure per missing pattern (as $\F_b$ is composed of functions that are linear on each missing pattern).} 
%This leads to the following upper-bound on the excess risk for linear models with sub-Gaussian covariates and missing values. 
Finally, for technical purposes, to ensure that the prediction is bounded, 
we consider the \textit{clipped} estimator at level $L$, $T_L \hat{f} := (-L)\vee \hat{f}\wedge L$.

\begin{theorem}\label{thm:Risque_Complet}
Under Assumptions \ref{ass:bornitude} and \ref{ass:Lipsch}, choosing $D=\sqrt{\mm}(1+\sqrt{\mm\log(n)})$, and $L=(D+1)(B+1)$ leads to 
\begin{equation}
%     \begin{split}
%         \esp\left[\mathcal{E}\left(T_{L}\hat{f}^{(D\textrm{-}\rm{LS})}\right)\right]\lesssim  \textcolor{OliveGreen}{(\log(n)+1)}
% \textcolor{RoyalBlue}{\left(\sigma_{\rm{na}}^{2}\vee L^2\right)}
% \textcolor{BrickRed}{2^{d}\frac{d}{n}}\\
% + A_{\F_b},
%     \end{split}
    \begin{split}
        \esp\left[\mathcal{E}\left(T_{L}\hat{f}^{(D\textrm{-}\rm{LS})}\right)\right]\lesssim  {(\log(n)+1)}
{\left(\sigma_{\rm{na}}^{2}\vee L^2\right)}
{2^{d}\frac{d}{n}}\\
+ A_{\F_b},
    \end{split}
\end{equation}
where $\hat{f}^{(D\textrm{-}\rm{LS})}$ is the estimator defined in \eqref{eq:OLS_KD}, and
\begin{equation}
    \begin{cases}
        \sigma_{\text{na}}^{2}:=\sup_{z\in\supp Z }\mathbb{V}\left[Y|Z=z\right]\\
        A_{\F_b}:=\inf_{f\in\F_b}\e{\left(f(Z)-f^\star(Z)\right)^2}.
    \end{cases}
\end{equation}

%and $c$ is a universal positive constant.
\end{theorem}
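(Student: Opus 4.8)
The plan is to reduce the bound to a sum of per-pattern least-squares risks on a bounded domain, and then control each of them by adapting the classical truncated least-squares analysis (in the spirit of \citealt{gyorfi2006distribution,audibert2011robust}) to the sub-Gaussian setting via the event $\{Z\in\K_D\}$. Concretely, I would split
\[
\esp\!\left[\mathcal{E}\!\left(T_L\hat{f}^{(D\textrm{-LS})}\right)\right]
= \esp\!\left[\mathcal{E}\!\left(T_L\hat{f}^{(D\textrm{-LS})}\right)\ind_{\{Z\in\K_D\}}\right]
+ \esp\!\left[\mathcal{E}\!\left(T_L\hat{f}^{(D\textrm{-LS})}\right)\ind_{\{Z\notin\K_D\}}\right].
\]
The second term is handled by the clipping: on $\{Z\notin\K_D\}$ the integrand is at most $(L+\norm{f^\star})_\infty^2$ up to constants, while $\prob{Z\notin\K_D}$ is exponentially small by a union bound over the $d$ coordinates and the sub-Gaussian tail in \Cref{ass:bornitude} (this is the content of \cref{lem:p_k}); with the choice $D=\sqrt{\mm}(1+\sqrt{\mm\log n})$ one gets $\prob{Z\notin\K_D}\lesssim 1/n$, which after multiplying by $L^2\sim D^2 B^2 \sim \mm^2 B^2\log n$ contributes a term of the claimed order (indeed dominated by the main term). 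So the real work is the first term.

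For the first term, I would condition on the design and decompose $\K_D$ into the $2^d$ pattern-slices $\K_D\cap\{M=m\}$. On the slice $M=m$, the estimator $\hat f^{(D\textrm{-LS})}$ restricted to that pattern is exactly the ordinary least-squares fit of $Y$ on $(1,X_{\textrm{obs}(m)})$ over the subsample $\{i : Z_i\in\K_D, M_i=m\}$, a linear regression problem in dimension $|\textrm{obs}(m)|+1\le d+1$ with covariates bounded in $\ell^\infty$ by $D$. Applying a truncated-least-squares risk bound for bounded linear regression (Theorem 11.3 of \citealt{gyorfi2006distribution}, in its empirical-process / chaining form, adapted to give a $\log n$ factor rather than $\log N_i$) to each slice yields, for each $m$, a bound of the form
\[
\esp\!\left[\mathcal{E}_m\right]\ \lesssim\ (\log n + 1)\,(\sigma_{\rm na}^2\vee L^2)\,\frac{|\textrm{obs}(m)|+1}{n}\ +\ A_{\F_b,m},
\]
where $\mathcal{E}_m$ is the excess-risk contribution of pattern $m$, $\sigma_{\rm na}^2$ upper-bounds the conditional variance of $Y$ uniformly, $L$ the clipping level (which upper-bounds $|f_m^\star|$ on $\K_D$ by \Cref{ass:Lipsch} and the choice $L=(D+1)(B+1)$), and $A_{\F_b,m}$ the approximation error of the pattern-$m$ linear class. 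Summing over $m\in\M$ gives $\sum_m (|\textrm{obs}(m)|+1) \le (d+1)2^d \lesssim d\,2^d$ for the variance term and $\sum_m A_{\F_b,m}=A_{\F_b}$ for the bias term, which is precisely the stated inequality. One technical point to be careful about: the effective noise seen by the slice-$m$ regression is $Y-f_m^\star(X_{\textrm{obs}(m)})$, which is \emph{not} Gaussian in general (it is $\beta_{\textrm{mis}(m)}^\top(X_{\textrm{mis}(m)}-\esp[X_{\textrm{mis}(m)}|X_{\textrm{obs}(m)},M=m])+\epsilon$), but it is mean-zero conditionally on $Z$ and has conditional variance at most $\sigma_{\rm na}^2$, so a sub-exponential / bounded-moment version of the bounded-regression argument applies; restricting to $\K_D$ keeps all the relevant quantities controlled.

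\textbf{Main obstacle.} The delicate step is the per-slice truncated-least-squares bound: the subsample size $n_m := \#\{i: Z_i\in\K_D, M_i=m\}$ is random and can be small (or zero) for rare patterns, and the covariate design matrix on that subsample may be ill-conditioned. The standard fix — and what I expect the proof to do — is to work directly with the clipped estimator and an empirical $L^2$ bound that does not require invertibility (the truncated LS risk bound of \citealt{gyorfi2006distribution} is stated in exactly this way), then pass from empirical to population $L^2$ norm uniformly over the (finite-dimensional, hence VC/pseudo-dimension $\lesssim d$) class $\F_b$ restricted to $\K_D$, paying the $(\sigma_{\rm na}^2\vee L^2)(\log n)\,d\,2^d/n$ price; the event $\{n_m=0\}$ is absorbed since then that slice contributes nothing to $\hat f$ and only an $A_{\F_b,m}$-type term (plus the exponentially small probability already accounted for). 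Handling the interplay between the random restriction to $\K_D$ and the empirical process over $\F_b$ — in particular making sure the $\log n$ factor, and not a $\log(\text{covering number})$ that would scale with $d$, is what appears — is where the care is needed.
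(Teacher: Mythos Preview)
Your plan is sound and leads to the same bound, but it is \emph{not} the route the paper takes for this theorem. The paper does not decompose per pattern here: it first proves an ``unbounded-covariate'' version of \citet[Theorem~11.3]{gyorfi2006distribution} (their Corollary just after the Györfi theorem), which applies the bounded-regression result directly to the \emph{full} linear class $\F_b$ on the event $\{Z\in\K_D\}$, and then simply reads off $\dim(\F_b)=2^{d-1}(d+2)\sim d\,2^d$. The reduction to the bounded set is handled by a rejection-sampling argument (the subsample in $\K_D$ has law $(X,Y)\mid X\in\K_D$), conditioning on the random size $|E_{\K_D}|$, and the inverse-binomial inequality $\esp[\ind_{B>0}/B]\le 2/(p(n+1))$. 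The residual term outside $\K_D$ is controlled by Cauchy--Schwarz with a fourth-moment bound on $f^\star(Z)$ (not via $\|f^\star\|_\infty$, which need not be finite). Your per-pattern decomposition is exactly what the paper uses later, for Theorem~\ref{thm:borne_reg_threhold}, where it is needed to extract the adaptive complexity $\mathfrak{C}_p$; for Theorem~\ref{thm:Risque_Complet} the one-shot argument is shorter.

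Two points in your sketch need tightening. First, the step from the per-slice Györfi bound (which has $n_m$ in the denominator) to the displayed bound with $n$ in the denominator is not free: you need $p_m\,\esp[\ind_{n_m>0}/n_m]\lesssim 1/n$, i.e.\ precisely the inverse-binomial lemma, and this should be made explicit. Second, your claim that when $n_m=0$ ``that slice contributes nothing to $\hat f$ and only an $A_{\F_b,m}$-type term'' is not right: on that event the estimator predicts $0$ (or some fixed value), and the contribution is $p_m\,\esp[f_m^\star(X_{\mathrm{obs}(m)})^2\mid M=m]$, which is a bias term of order $L^2$ on $\K_D$ plus a tail piece, not the approximation error $A_{\F_b,m}$. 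This term is harmless (it is multiplied by $(1-p_m)^n$ or absorbed into $R_{\K_D}$), but it is a separate contribution and must be bounded as such. Finally, your remark that $\F_b$ has ``VC/pseudo-dimension $\lesssim d$'' is a slip: the linear space $\F_b$ has dimension $2^{d-1}(d+2)$, and that is precisely where the $d\,2^d$ comes from in the paper's argument.
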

% \begin{theorem}\label{thm:Risque_Complet}
% Under Assumptions \ref{ass:Lipsch} and \ref{ass:bornitude}, choosing $D=\sqrt{\mm}\left(1+\sqrt{\mm\log(n)}\right)$, and $L=(D+1)(B+1)$ leads to 
% \begin{equation}
%     \begin{split}
%         \esp\left[\mathcal{E}\left(T_L\hat{f}^{(D\textrm{-}\rm{LS})}\right)\right]\lesssim  \textcolor{OliveGreen}{(\log(n)+1)^2}
% \textcolor{RoyalBlue}{\left(\sigma^2+R^2\right)}
% \textcolor{BrickRed}{2^{d}\frac{d}{n}}\\
% + A_{\F_b},
%     \end{split}
% \end{equation}
% \al{version avec R}
% where $\hat{f}^{(D\textrm{-}\rm{LS})}$ is the estimator defined in \eqref{eq:OLS_KD}, $T_L \hat{f}^{(D\textrm{-}\rm{LS})}= (-L)\vee \hat{f}^{(D\textrm{-}\rm{LS})}\wedge L$ is called a clipped estimator, and
% \begin{equation}
%     \begin{cases}
%         R:=\left(\sqrt{\gamma}+1\right)(B+1)\\
%         A_{\F_b}:=\inf_{f\in\F_b}\e{\left(f(Z)-f^\star(Z)\right)^2}.
%     \end{cases}
% \end{equation}

% %and $c$ is a universal positive constant.
% \end{theorem}
% \cb{cleaner Annexe pour envoyer directement à la preuve du résultat}
% The proof can be found in~\cref{proof:Risque_Complet}.
Theorem \ref{thm:Risque_Complet} is the first theoretical result that provides a control on the excess risk of a least-square-type predictor under very general assumptions on the input variables distribution and without any assumption on the missing pattern distribution.
This result only relies on concentration and regularity arguments. 
Note that leaving the approximation error aside, the obtained upper bound is the multiplication of three terms. The first factor ${(\log(n)+1)}$ is due to \citep[Theorem 11.3]{gyorfi2006distribution}  on which our result is built upon. 
The second factor ${\sigma_{\text{na}}^{2}\vee L^{2}}$ should be seen as a tight bound for $\esp[Y^2]$, which corresponds to the risk of the trivial predictor (predicting $0$ for any value of $Z$).
{Note that the coefficient $L$ logarithmically depends on $n$: the truncation of the predictor should be less stringent with an increasing number of observations.}
The rate of convergence is eventually dictated by the factor ${2^{d}\frac{d}{n}}$, which remains problematic as it grows exponentially with the dimension. It reflects the fact that a different regression model is required for each missing value pattern. 
Overall, the bound ensures that when $n>d2^d$, the least-square predictor is better than the zero one.
This curse of dimensionality is the price to pay as the result is valid for any missing pattern distribution.

In the framework of \cref{prop:f_mLineaire} (cases 1-3), ~\Cref{ass:bornitude} and \ref{ass:Lipsch} trivially hold and the Bayes predictor is ensured to be linear. This wipes the approximation error out in~\cref{thm:Risque_Complet} as underlined in the following result.
\begin{corollary}\label{cor:freeBoundAndLin}
    Under Assumptions [\ref{ass:Gaussian} and (\ref{ass:MCAR} or \ref{ass:MAR})] or \ref{ass:ass4.1}, with the same choice of $D$ and $L$ as in Theorem~\ref{thm:Risque_Complet} with $B=  \max_{m\in\M}\max[|\delta_{0}^{(m)}|,\Vert \delta^{(m)}\Vert _{1}]$, we have
    %choosing $D=\sqrt{\mm}\left(1+\sqrt{\mm\log(n)}\right)$, and $L=DB$ leads to 
% \[
% \esp\left[\mathcal{E}\left(T_L\hat{f}^{(\text{MLS})}\right)\right]\lesssim  \textcolor{OliveGreen}{(\log(n)+1)}\textcolor{RoyalBlue}{\left(\sigma_{\text{na}}^{2}\vee L^2\right)}\textcolor{BrickRed}{2^{d}\frac{d}{n}},
% \] 
\[
\esp\left[\mathcal{E}\left(T_L\hat{f}^{(D\textrm{-}\rm{LS})}\right)\right]\lesssim  {(\log(n)+1)}{\left(\sigma_{\text{na}}^{2}\vee L^2\right)}{2^{d}\frac{d}{n}},
\] 
where the sub-Gaussian parameter $\gamma$ in $L,D$ is
\begin{equation*}
    \gamma = \begin{cases}
     \max\limits_{j\in[d]}\esp\left[X_{j}^{2}\right] & \textrm{(Assumption \ref{ass:Gaussian})},\\
     
     \max\limits_{m\in \mathcal{M} \atop j\in[d]}\esp\left[X_{j}^{2}|M=m\right] & \textrm{(Assumption \ref{ass:ass4.1})}.
    \end{cases}
\end{equation*}
\end{corollary}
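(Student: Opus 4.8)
The plan is to derive the corollary as a direct specialization of Theorem \ref{thm:Risque_Complet}, so the work reduces to checking that the two structural hypotheses of that theorem --- sub-Gaussianity of the covariates (\cref{ass:bornitude}) and the Lipschitz/boundedness condition on the $f_m^\star$ (\cref{ass:Lipsch}) --- hold under each of the assumption sets [\ref{ass:Gaussian} and (\ref{ass:MCAR} or \ref{ass:MAR})] and \ref{ass:ass4.1}, and then verifying that under these same assumptions the approximation term $A_{\F_b}$ vanishes. The last point is immediate: by \cref{prop:f_mLineaire} (cases 1 and 3), under either assumption set one has $f^\star\in\F_b$, hence $A_{\F_b}=\inf_{f\in\F_b}\e{(f(Z)-f^\star(Z))^2}=0$, and the bound of \cref{thm:Risque_Complet} collapses to exactly the stated inequality once we plug in the value of $\gamma$.

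First I would handle \cref{ass:bornitude}. Under \cref{ass:Gaussian}, $X\sim\mathcal N(\mu,\Sigma)$, so each marginal $X_j$ is Gaussian with variance $\Sigma_{jj}$; a centered Gaussian of variance $v$ is $v$-sub-Gaussian, and $\e{X_j}^2=\mu_j^2\le \e{X_j^2}$, so \cref{ass:bornitude} holds with $\gamma=\max_j\e{X_j^2}$. Under \cref{ass:ass4.1} the covariate distribution is a finite mixture of Gaussians $X\mid(M=m)\sim\mathcal N(\mu^{(m)},\Sigma^{(m)})$; here I would argue that the (unconditional) tail of $|X_j-\e{X_j}|$ is dominated by the worst component, so that taking $\gamma=\max_{m,j}\e{X_j^2\mid M=m}$ suffices --- the small technical care being that $\e{X_j}$ (the unconditional mean) differs from the conditional means, so one bounds $|X_j-\e{X_j}|$ by $|X_j-\mu^{(m)}_j|$ plus a deterministic shift on each mixture piece, and absorbs the shift using that second moments control the means. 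I expect this mixture bookkeeping to be the only mildly delicate point.

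Next, \cref{ass:Lipsch}: since \cref{prop:f_mLineaire} gives $f_m^\star(x_{\rm{obs}(m)})=\delta^{(m)}_0+(\delta^{(m)})^\top x_{\rm{obs}(m)}$, the function $f_m^\star$ is $\|\delta^{(m)}\|_1$-Lipschitz for the $\ell^\infty$-norm and $|f_m^\star(0)|=|\delta^{(m)}_0|$, so \cref{ass:Lipsch} holds with $B=\max_{m\in\M}\max(|\delta^{(m)}_0|,\|\delta^{(m)}\|_1)$, which is exactly the choice of $B$ in the statement. With both assumptions verified and $A_{\F_b}=0$, Theorem \ref{thm:Risque_Complet} applied with $D=\sqrt\gamma(1+\sqrt{\gamma\log n})$ and $L=(D+1)(B+1)$ yields the claimed bound; the two displayed formulas for $\gamma$ are simply the sub-Gaussian parameters identified in the two cases above. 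The main (and only real) obstacle is making the mixture-of-Gaussians sub-Gaussianity argument clean, so that the constant $\gamma$ can indeed be taken to be $\max_{m,j}\e{X_j^2\mid M=m}$ up to the universal constant hidden in $\lesssim$.
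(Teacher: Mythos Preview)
Your proposal is correct and follows essentially the same route as the paper: verify \cref{ass:Lipsch} from the explicit linear form of $f_m^\star$ (giving the stated $B$), verify \cref{ass:bornitude} in each case with the indicated $\gamma$, invoke \cref{prop:f_mLineaire} to get $A_{\F_b}=0$, and apply \cref{thm:Risque_Complet}. The only minor methodological difference is in the GPMM sub-Gaussianity check: you sketch a tail-decomposition argument on the mixture pieces, whereas the paper bounds even moments directly via $\esp[(X_j-\esp X_j)^{2q}]\le \esp[X_j^{2q}]=\sum_m p_m\,\esp[X_j^{2q}\mid M=m]$ and a binomial expansion of each Gaussian component, obtaining an $8\gamma$-sub-Gaussian constant absorbed into $\lesssim$; both routes are standard and yield the same conclusion.
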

% See  \cref{proof:freeBoundAndLin} for the proof. 
To ease the readability, we define when possible
\begin{equation}\label{def:a_n}
    % a_n:=  \textcolor{OliveGreen}{(\log(n)+1)}\textcolor{RoyalBlue}{\left(\sigma_{\text{na}}^{2}\vee L^2\right)},
    a_n:=  {(\log(n)+1)}{\left(\sigma_{\text{na}}^{2}\vee L^2\right)},
\end{equation} 
which logarithmically grows with $n$ and depends on the distribution of $(X,Y)$. 

\section{Main result: an excess risk bound adaptive to the missing pattern distribution}
\label{sec:main_result_regression}

The error bound obtained in \Cref{thm:Risque_Complet} holds for any missing pattern distribution.
For instance, when all the $2^d$ missing patterns are equiprobable, the bound of  \Cref{thm:Risque_Complet} appears sharp -as one should actually perform $2^d$ ``independent" regressions- and then suffers from the curse of dimensio\-na\-li\-ty. 
However, this bound is pessimistic when some missing patterns are not observed or, more generally, when the missing pattern distribution is non-uniform, i.e. of low entropy. In this section, we leverage the distribution of the missing patterns in order to derive better theoretical bounds compared to \Cref{thm:Risque_Complet}. To this end, we propose a refined version of the predictor introduced in \cref{eq:OLS_KD}.
%One can try to remove or at least reduce the $\red{2^{d}\frac{d}{n}}$ factor by taking advantage of a less entropical missing patterns distribution. 
%This is the purpose of the next section. 
%replace it by something smaller when the distribution is less uniform. 
%In this way, one should focus on the most probable missing patterns in order to improve the training. Indeed, a missing pattern that is not very represented in the training sample has little chance of being frequent in the test sample. 
%\replace{}{In addition this approach will be shown to lead to (i) better theoretical bounds on the generalization error, and (ii) better numerical results as it will actually act as a regularizer reducing the overfitting.}

\subsection{Regression only on high frequency missing patterns }\label{subsec:reg_rule}

For any missing pattern $m\in\M$, we denote $E_{m}=\left\{ i\in[n],M_{i}=m\right\} $
and $\mathcal{D}_{n}^{(m)}=((X_{i,\rm{obs}(m)},Y_{i}))_{i\in E_{m}}$  respectively the observation indices and the sub-sample with missing pattern $m$.
%The regression rule used is close to \eqref{eq:OLS_KD}.
%Recall that the Bayes predictor \eqref{eq:bayes_m} can be written as follows
% \[
% f^{\star}(Z)=\sum_{m\in\M}f_{m}^\star(X_{\rm{obs}(m)})\ind_{M=m}.
% \]
For any $m \in \M$, we build an estimator $\widetilde{f}_{m}$ of $f_m^\star$ as 
%We estimate  the Bayes estimator $f_m^\star$ conditionally to observe the missing pattern $m \in \M$ by %$\widehat{f}_m^{(\tau)}$, defined as 
% \begin{equation}
% \widehat{f}_m^{(\tau)}=\begin{cases}
% \widetilde{f}^{(m)} & \text{if }\widehat{p}_{m}:=\frac{|E_{m}|}{n}>\tau\\
% 0 & \text{if }\widehat{p}_{m}\leq \tau,
% \end{cases}\label{eq:fms_def}
% \end{equation}
%where $\tau\in(0,1)$ is a threshold set by the user and  
%the modified LS estimator $\widetilde{f}_{m}$:
\begin{equation}
\label{eq:OLSechantillon2}
\widetilde{f}_{m}\in\arg\min_{f\in\F_m}\sum_{i\in \overline{E}_m}(f(X_i)-Y_i)^2
\end{equation}
if $\overline{E}_m :=\left\{ i\in E_{m},\left\Vert X_{\rm{obs}(m)}\right\Vert _{\infty}\leq D\right\}$ is non-empty, and $\widetilde{f}_{m}=0$ otherwise.
The global predictor is then obtained by combining the previous pattern-by-pattern predictors for all patterns $m\in \M$ that appear with a frequency $\widehat{p}_{m}:=\frac{|E_{m}|}{n}$ larger than a threshold $\tau \in [0,1]$, 
\begin{equation}\label{eq:reg_with_tau}
\widehat{f}^{(\tau)}(Z)=\sum_{m\in\M}\widetilde{f}_{m}\left(X_{\rm{obs}(m)}\right)\ind_{\widehat{p}_{m}>\tau}\ind_{M=m}.
\end{equation}

Contrary to the naive estimator $\hat{f}^{(D\textrm{-LS})}$ defined in \eqref{eq:OLS_KD}, computing $\widehat{f}^{(\tau)}$ may not require to perform up to $2^d$ linear regressions. Indeed, linear regressions are only computed  for patterns with a frequency larger than the threshold $\tau$. 
%For a given missing pattern $m$, a regression is performed only if enough observations of the dataset have the missing pattern $m$ according to a threshold $\tau$ ($|E_m|\geq\tau n$). 
%This may have the computational advantage of not performing a regression for the $2^d$ missing pattern but only on those of high frequency. 
This new predictor \eqref{eq:reg_with_tau} enjoys the following risk bounds.%, depending on the threshold value $\tau$. 

\begin{theorem}\label{thm:borne_reg_threhold}
% Consider Assumptions \ref{ass:Lipsch} and \ref{ass:bornitude}, choose $D=\sqrt{\mm}\left(1+\sqrt{\mm\log(n)}\right)$, $L=DB$, and 
Under the same assumptions as in \Cref{thm:Risque_Complet}, for any  $\tau\geq 1/n$,  the generalization bound for the predictor $\widehat{f}^{(\tau)}$ defined in \eqref{eq:reg_with_tau}, reads as
\begin{equation}\label{eq:reg_bound_tau1}
\begin{split}
     \esp\left[\mathcal{E}\left(T_L\widehat{f}^{(\tau)}\right)\right]\lesssim  a_n{\left(1\vee\frac{d}{n\tau}\right)\mathfrak{C}_p(\tau)} + A_{\F_b}.
\end{split}
       \end{equation}
   where $a_n$ is defined in \eqref{def:a_n},  and with the \emph{missing patterns distribution complexity} $\mathfrak{C}_p(\tau)$ defined by
\begin{equation}\label{def:E}
 \mathfrak{C}_p(\tau):=\sum_{m\in\M}p_m\wedge \tau.  
\end{equation}
The upper bound in Inequality \eqref{eq:reg_bound_tau1} is minimal for the choice $\tau = d/n$ which leads to 
\begin{align}\label{eq:reg_bound_tau_opti}
\esp\left[\mathcal{E}\left(T_L\widehat{f}^{(d/n)}\right)\right]\lesssim  a_n{\mathfrak{C}_{p}\left(\frac{d}{n}\right)}+ A_{\F_b}.
\end{align}

% In particular, 
% \begin{itemize}[noitemsep,topsep=0pt]
%     \item the choice $\tau=1/n$ leads to 
%     \begin{equation}\label{eq:reg_bound_tau1}
%     \begin{split}
%                 \esp\left[\mathcal{E}\left(T_L\widehat{f}^{(1/n)}\right)\right]\lesssim  a_n\textcolor{orange}{\textcolor{BrickRed}{d}\mathfrak{C}_{p}\left(\frac{1}{n}\right)}+ A_{\F_b},
%     \end{split}
% \end{equation}

% \item  the choice $\tau=d/n$ leads to 
% \begin{equation}\label{eq:reg_bound_tau2}
% \begin{split}
%     \esp\left[\mathcal{E}\left(T_L\widehat{f}^{(d/n)}\right)\right]\lesssim a_n\textcolor{orange}{\mathfrak{C}_{p}\left(\frac{\textcolor{BrickRed}{d}}{n}\right)}+ A_{\F_b}.
% \end{split}
% \end{equation}
% \end{itemize}

\end{theorem}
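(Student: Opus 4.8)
The plan is to decompose the integrated excess risk of $\widehat{f}^{(\tau)}$ pattern by pattern, conditioning on the observed missing patterns, and then to split the patterns into three groups: (a) those that are ``truly frequent'' ($p_m$ large), where we expect to incur a full least-squares cost but there are few such patterns; (b) those that are ``truly rare'' ($p_m$ small), where with high probability $\widehat{p}_m \le \tau$ so $\widetilde f_m$ is not used and the only cost is the (bounded) approximation error $f_m^\star$ restricted to that pattern; and (c) a boundary regime. The key quantity controlling the trade-off is exactly $\mathfrak{C}_p(\tau) = \sum_m p_m \wedge \tau$.

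First I would write, using the law of total expectation over $M$ and the decomposition \eqref{eq:reg_with_tau},
\[
\esp\left[\mathcal{E}\left(T_L\widehat{f}^{(\tau)}\right)\right]
= \sum_{m\in\M}\esp\left[\,\ind_{M=m}\bigl(T_L\widetilde f_m(X_{\rm{obs}(m)})\ind_{\widehat p_m>\tau} - f_m^\star(X_{\rm{obs}(m)})\bigr)^2\,\right].
\]
Conditioning on $|E_m|$ (hence on the event $\{\widehat p_m>\tau\}$), each term splits as $p_m\,\esp[(T_L\widetilde f_m - f_m^\star)^2\mid \ldots]\,\P(\widehat p_m>\tau)$ on the event that the regression is run, plus $p_m\,\esp[(f_m^\star)^2\mid M=m]\,\P(\widehat p_m\le\tau)$ otherwise. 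On the event $\{\widehat p_m>\tau\}$ we have $|E_m|\ge n\tau$, so I can invoke the single-pattern analogue of \Cref{thm:Risque_Complet} (equivalently \citep[Theorem 11.3]{gyorfi2006distribution} applied on the sub-sample $\overline E_m$ with the clipping and the set $\K_D$): conditionally on $|E_m|$,
\[
\esp\!\left[(T_L\widetilde f_m - f_m^\star)^2 \mid |E_m|, M=m\right] \lesssim a_n\,\frac{d}{|E_m|} + A_{m},
\]
where $A_m$ is the pattern-$m$ approximation error, and $\sum_m p_m A_m \le A_{\F_b}$ up to constants. Multiplying by $p_m$ and using $|E_m|\ge n\tau$ gives a contribution $\lesssim a_n\, p_m\, \frac{d}{n\tau}$; but we also always have the trivial bound $p_m\,\esp[(T_L\widetilde f_m)^2 + (f_m^\star)^2]\lesssim a_n\,p_m$ (since both are clipped/Lipschitz-bounded by $L$-type quantities absorbed in $a_n$). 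Hence each pattern contributes $\lesssim a_n\,p_m\bigl(1\wedge \frac{d}{n\tau}\bigr)$ from the ``regression is run'' branch, but we must also show the ``regression not run'' branch contributes $\lesssim a_n\,(p_m\wedge\tau)$.

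The crux — and the step I expect to be the main obstacle — is controlling the ``not run'' branch, $p_m\,\esp[(f_m^\star)^2\mid M=m]\,\P(\widehat p_m\le\tau)$, and more subtly the mismatch term when $p_m$ is large but $\widehat p_m$ happens to fall below $\tau$. Here I would use a Bernstein/Chernoff bound on $\widehat p_m = \mathrm{Bin}(n,p_m)/n$: if $p_m \ge 2\tau$ then $\P(\widehat p_m \le \tau) \le e^{-cn\tau} \le e^{-cd}$, which is tiny and makes that term negligible relative to $a_n p_m$; if $p_m \le 2\tau$ then trivially $p_m\,\P(\widehat p_m\le\tau) \le p_m \le 2\tau$, i.e.\ $p_m \wedge \tau$ up to a factor $2$. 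Combining both branches, the per-pattern bound is $\lesssim a_n\bigl[(p_m\wedge\tau)\,(1\vee \tfrac{d}{n\tau}) + (\text{exp.\ small})\bigr] + p_m A_m$; noting $p_m\wedge\tau \le p_m$ absorbs the exponentially small leftover, summing over $m$ yields $a_n\,(1\vee\tfrac{d}{n\tau})\,\mathfrak{C}_p(\tau) + A_{\F_b}$, which is \eqref{eq:reg_bound_tau1}. Finally, for the optimal choice: the map $\tau\mapsto (1\vee\tfrac{d}{n\tau})\mathfrak{C}_p(\tau)$ is, for $\tau \le d/n$, equal to $\tfrac{d}{n\tau}\sum_m (p_m\wedge\tau) \ge \tfrac{d}{n\tau}\cdot(\text{something increasing slower than }\tau)$; a short monotonicity argument (the factor $\mathfrak{C}_p(\tau)/\tau$ is nonincreasing in $\tau$ while $d/(n\tau)$ is decreasing, and for $\tau\ge d/n$ the prefactor is $1$ and $\mathfrak{C}_p$ is nondecreasing) shows the minimum over $\tau\ge 1/n$ is attained at $\tau = d/n$, giving \eqref{eq:reg_bound_tau_opti}. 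I would double-check the edge constraint $\tau\ge 1/n$ is exactly what guarantees $n\tau\ge 1$ so that ``$\widehat p_m>\tau$'' forces $|E_m|\ge 1$, keeping the single-pattern LS bound well-defined.
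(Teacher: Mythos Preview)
Your overall architecture is right and matches the paper: decompose the excess risk pattern by pattern, condition on $|E_m|$, apply the clipped least-squares bound on the sub-sample $\overline E_m$, and separate the ``regression run'' ($\widehat p_m>\tau$) and ``not run'' ($\widehat p_m\le\tau$) events. The optimality argument at the end is also correct and essentially the paper's Lemma~\ref{lem:tau_opti}.

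There is, however, a genuine gap in the ``regression run'' branch. You bound $d/|E_m|$ by $d/(n\tau)$ on the event $|E_m|>n\tau$, obtaining a per-pattern contribution $a_n\, p_m\,(1\wedge \tfrac{d}{n\tau})$, and then assert that this combines with the ``not run'' branch into $a_n\,(p_m\wedge\tau)(1\vee \tfrac{d}{n\tau})$. That combination step is false for frequent patterns. Take a single pattern $m_0$ with $p_{m_0}=1$ and $\tau=d/n$: your run-branch bound is $a_n\cdot 1\cdot (1\wedge 1)=a_n$, whereas the target $(p_{m_0}\wedge\tau)(1\vee \tfrac{d}{n\tau})=\tau= d/n$. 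The crude replacement $|E_m|\ge n\tau$ throws away the fact that for $p_m\gg\tau$ the sub-sample size concentrates around $n p_m$, not $n\tau$; this is exactly what makes the complexity $\mathfrak{C}_p(\tau)$ involve $K_\tau\tau$ rather than $\sum_{p_m>\tau}p_m$.

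The fix, and this is what the paper does, is to keep the random $|E_m|$ and take expectation using the binomial-inverse bound $\esp\!\big[\ind_{|E_m|>0}/|E_m|\big]\le 2/((n+1)p_m)$. This yields, for each $m$ with $p_m>\tau$, a contribution $p_m\cdot d\cdot \tfrac{2}{(n+1)p_m}\lesssim d/n$, independent of $p_m$; summing over the $K_\tau$ such patterns gives $K_\tau\, d/n\le (1\vee \tfrac{d}{n\tau})\,K_\tau\tau$, which is the correct piece of $\mathfrak{C}_p(\tau)$. The paper packages both branches into the single quantity
\[
R_{\tau,p}(n)=\esp\!\left[\sum_{m\in\M}p_m\Big(\tfrac{d}{|E_m|}\ind_{|E_m|>n\tau}+\ind_{|E_m|\le n\tau}\Big)\right]
\]
and bounds it directly by $5\,(1\vee \tfrac{d}{n\tau})\,\mathfrak{C}_p(\tau)$ via the binomial-inverse lemma and a Markov argument for $\P(|E_m|\le n\tau)$ (no Chernoff needed). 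Your Chernoff route for the ``not run'' branch can be made to work with some care, but the run-branch issue is not cosmetic: without the $\esp[1/|E_m|]$ control you cannot reach \eqref{eq:reg_bound_tau1}.
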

% The proof can be found in Appendix \ref{proof_of_reg_tau}. 
%A corollary in the same flavor as Corollary \ref{cor:freeBoundAndLin} could be derived for the predictor $\widehat{f}^{(\tau)}$.
Theorem \ref{thm:borne_reg_threhold} is the first result controlling the excess risk of a pattern-by-pattern least-square-type predictor with a bound depending on the missing pattern distribution through the complexity $\mathfrak{C}_{p}$, and holds for any type of missing patterns. 
\Cref{thm:borne_reg_threhold} improves over \cref{thm:Risque_Complet}, as the pattern distribution complexity $\orange{\mathfrak{C}_p}$ is a lower bound of $\red{2^dd/n}$. 
Note that choosing $\tau=d/n$ is relevant only in the case where $d<n$ (otherwise, the proposed predictor is the zero one).
The adaptivity of $\mathfrak{C}_p$ to the missing pattern distribution is illustrated in the following examples.

\subsection{Examples}\label{subsec:exemple_borne}

In this subsection, we compute the quantity $\mathfrak{C}_{p}\left(\frac{d}{n}\right)$, driving the bound obtained in~\cref{thm:borne_reg_threhold},  for different missing data settings.
We focus on the case
$d\leq n\leq d2^{d}$, i.e.\ when we have enough
observations for statistical guarantees in standard linear regression (w/out missing values) but not enough when missing values occur (setting of~\cref{thm:Risque_Complet}.) 

\subsubsection{Example 1: Few frequent missing patterns}
One can actually write another characterization of the complexity $\mathfrak{C}_p$, as precised in the following lemma. 
\begin{lemma} 
\label{lem:Cp_characterization}
For any distribution $p$ on the missing patterns
\begin{equation*}
    \mathfrak{C}_p\left(\frac{d}{n}\right)=\inf_{\mathcal{B}\subset\M}\left\{{\rm{Card}}(\mathcal{B})\frac{d}{n}+\P\left(M\in \mathcal{B}^c\right) \right\},
\end{equation*}
where $\P\left(M\in \mathcal{B}^c\right) = \sum_{m\in \mathcal{B}^c} p_m$.
\end{lemma}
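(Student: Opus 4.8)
Write $\tau = d/n$ for brevity, so that the claim is $\sum_{m\in\M}(p_m\wedge\tau) = \inf_{\mathcal{B}\subset\M}\bigl\{\mathrm{Card}(\mathcal{B})\,\tau + \sum_{m\in\mathcal{B}^c}p_m\bigr\}$. The plan is to prove the two inequalities separately, both by a direct termwise comparison, and to note that the infimum is in fact attained.

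For the ``$\leq$'' direction, I would fix an arbitrary $\mathcal{B}\subset\M$ and split the sum defining $\mathfrak{C}_p(\tau)$ over $\mathcal{B}$ and $\mathcal{B}^c$:
\begin{equation*}
\mathfrak{C}_p(\tau) = \sum_{m\in\mathcal{B}}(p_m\wedge\tau) + \sum_{m\in\mathcal{B}^c}(p_m\wedge\tau).
\end{equation*}
Since $p_m\wedge\tau\leq\tau$ for every $m\in\mathcal{B}$ and $p_m\wedge\tau\leq p_m$ for every $m\in\mathcal{B}^c$, the right-hand side is at most $\mathrm{Card}(\mathcal{B})\,\tau + \sum_{m\in\mathcal{B}^c}p_m$. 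As $\mathcal{B}$ was arbitrary, taking the infimum over $\mathcal{B}$ gives $\mathfrak{C}_p(\tau)\leq\inf_{\mathcal{B}\subset\M}\bigl\{\mathrm{Card}(\mathcal{B})\,\tau + \P(M\in\mathcal{B}^c)\bigr\}$.

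For the ``$\geq$'' direction, I would exhibit a specific choice of $\mathcal{B}$ achieving equality, namely $\mathcal{B}^\star := \{m\in\M : p_m > \tau\}$ (the set of ``high frequency'' patterns). For $m\in\mathcal{B}^\star$ one has $p_m\wedge\tau = \tau$, and for $m\in(\mathcal{B}^\star)^c$ one has $p_m\leq\tau$, hence $p_m\wedge\tau = p_m$. Summing these identities yields
\begin{equation*}
\mathfrak{C}_p(\tau) = \sum_{m\in\mathcal{B}^\star}\tau + \sum_{m\in(\mathcal{B}^\star)^c}p_m = \mathrm{Card}(\mathcal{B}^\star)\,\tau + \P\bigl(M\in(\mathcal{B}^\star)^c\bigr),
\end{equation*}
so the infimum on the right is $\leq\mathfrak{C}_p(\tau)$; combined with the first inequality this proves the identity (and shows the infimum is a minimum, attained at $\mathcal{B}^\star$). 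There is no real obstacle here: the only mild subtlety is the bookkeeping at the boundary $p_m = \tau$, which is harmless since with the strict inequality defining $\mathcal{B}^\star$ such patterns land in $(\mathcal{B}^\star)^c$ and satisfy $p_m\wedge\tau = p_m = \tau$ anyway, so either convention works.
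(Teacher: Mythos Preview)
Your proof is correct and follows exactly the same approach as the paper: first bound $\mathfrak{C}_p(\tau)$ above by $\mathrm{Card}(\mathcal{B})\,\tau + \P(M\in\mathcal{B}^c)$ for any $\mathcal{B}$ via the termwise split, then observe that equality is achieved at $\mathcal{B}^\star = \{m\in\M : p_m > \tau\}$. The paper's version is slightly terser but the argument is identical.
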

The proof can be found in Appendix \ref{proof:Cp_characterization}. To illustrate this lemma, consider a subset $\mathcal{B}\subset\mathcal{M}$ of small cardinality $| \mathcal B|$,
so that only missing patterns in $ \mathcal B$ are very frequent and that the other missing patterns occur with a residual probability $\delta=\P\left(M\in \mathcal{B}^c\right)$. \Cref{lem:Cp_characterization} entails that
\begin{equation}\label{eq:adaptability}
    \mathfrak{C}_p \left(\frac{d}{n}\right) \leq | \mathcal B|\frac{d}{n}+\delta.
\end{equation}
and thus by \Cref{thm:borne_reg_threhold},
\begin{equation}
\label{eq:adaptability_ex1}
     \esp\left[\mathcal{E}\left(T_L\hat{f}^{(d/n)}\right)\right]\lesssim a_n | \mathcal B|\frac{d}{n}+a_n\delta+ A_{\F_b}.
 \end{equation}
This bound clearly improves upon \Cref{thm:Risque_Complet}, as the complexity is now controlled by $| \mathcal B|\frac{d}{n}$ instead of $2^d\frac{d}{n}$.
This bound reflects the good learning ability of the regressor $\hat{f}^{(d/n)}$ when there are few frequent missing patterns.

Note that \Cref{lem:Cp_characterization} applies to any missing data mechanisms. In particular, MCAR, MAR and MNAR scenarios can be exemplified through the setting developed in this section, so that the upper bound \eqref{eq:adaptability_ex1} is very generic. The next two examples make use of this bound in two more specific scenarios, resulting in even more informative bounds.

\subsubsection{Example 2: The Bernoulli model} \label{sec:bernoulli}

Assume that the distribution $p$ of missing value patterns is $p=\mathcal{B}(\epsilon_{1})\otimes\cdots\otimes\mathcal{B}(\epsilon_{d})$ for $\epsilon_{j}\in\left[0,1\right]$ with 
$j\in[d],$ so that components $(M_j)_j$ are independent and of distribution $M_j \sim \mathcal{B}(\epsilon_j)$. % We denote $\bar \epsilon:= d^{-1} ()$
The model is said homogeneous when $\epsilon_{1}=\epsilon_{2}=\cdots=\epsilon_{d}=\epsilon\in\left[0,1\right]$, and heterogeneous otherwise.
Note that in such a setting, the missing mechanisms can be still of MCAR, MAR or MNAR nature.

Consider a homogeneous Bernoulli model with $\epsilon< 1/2$.
% \replace{}{Since a pattern $M$ has a probability $\epsilon^{\sum_{i=1}^d M_j}(1-\epsilon)^{d-\sum_{i=1}^d M_j}$}
Consequently, the most frequent patterns are those with the least missing values.
For a given $s\in[d]$, define $\mathcal B_s$ the set of missing patterns with less than $s$ missing values. Therefore, \Cref{eq:adaptability} reads as
\begin{equation}\label{eq:adaptabilityBernoulli}
    \mathfrak{C}_p \left(\frac{d}{n}\right) \leq | \mathcal B_s|\frac{d}{n}+\delta_s,
\end{equation}
where $\delta_s= \P\left(M\in \mathcal{B}_s^c\right)$ is the probability of having a pattern with more than $s$ missing values. 
Controlling each of these terms gives the following lemma. 
\begin{lemma}\label{lem:bernoulliOracle}
Under a homogeneous Bernoulli model with proportion $\epsilon$ of missing data, one has
\begin{equation*}
    \mathfrak{C}_p \left(\frac{d}{n}\right) \leq \inf_{s\in[d]}\left(\frac{d}{n}+\epsilon^s\right)\left(\frac{ed}{s}\right)^s.
\end{equation*}
\end{lemma}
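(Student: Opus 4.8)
The plan is to bound the two terms appearing in the characterization \eqref{eq:adaptabilityBernoulli} separately, namely $|\mathcal B_s|$ and $\delta_s$, and then optimize over $s$. First I would recall that $\mathcal B_s$ is the set of missing patterns with at most $s$ missing entries, so that
\begin{equation*}
{\rm Card}(\mathcal B_s)=\sum_{k=0}^{s}\binom{d}{k}.
\end{equation*}
The standard bound on a partial sum of binomial coefficients gives $\sum_{k=0}^s\binom{d}{k}\leq (ed/s)^s$ (valid for $1\leq s\leq d$), which handles the first term. For the second term $\delta_s=\P(M\in\mathcal B_s^c)$, i.e.\ the probability that a $\mathrm{Binomial}(d,\epsilon)$ variable exceeds $s$, I would again write $\delta_s=\sum_{k>s}\binom{d}{k}\epsilon^k(1-\epsilon)^{d-k}$ and bound it crudely: since $\epsilon<1/2$ and for the regime of interest $s$ will be taken large enough that the binomial tail is dominated by its leading behaviour, one gets $\delta_s\leq \binom{d}{s}\epsilon^s\cdot(\text{geometric factor})\lesssim (ed/s)^s\epsilon^s$, or more simply $\delta_s\leq \sum_{k>s}\binom{d}{k}\epsilon^s\leq \epsilon^s(ed/s)^s$ after bounding $\epsilon^k\le \epsilon^s$ for $k\ge s$ and $\sum_{k\ge s}\binom{d}{k}\le (ed/s)^s$ (using that the same binomial-tail estimate applies, or by noting $\sum_{k\ge s}\binom{d}{k}=\sum_{k\le d-s}\binom{d}{k}$ and $d-s\le d$ — one has to be slightly careful here, but a clean route is to keep the $(1-\epsilon)^{d-k}\le 1$ bound and the partial-sum estimate).

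Putting the two pieces together, \eqref{eq:adaptabilityBernoulli} yields, for every $s\in[d]$,
\begin{equation*}
\mathfrak{C}_p\left(\frac{d}{n}\right)\leq \frac{d}{n}\left(\frac{ed}{s}\right)^s+\epsilon^s\left(\frac{ed}{s}\right)^s=\left(\frac{d}{n}+\epsilon^s\right)\left(\frac{ed}{s}\right)^s,
\end{equation*}
and taking the infimum over $s\in[d]$ gives exactly the claimed inequality. The whole argument is therefore: (i) invoke \eqref{eq:adaptabilityBernoulli}, which itself follows from \Cref{lem:Cp_characterization} applied to $\mathcal B=\mathcal B_s$; (ii) count $\mathcal B_s$ and bound the count by $(ed/s)^s$; (iii) bound the binomial upper tail $\delta_s$ by $\epsilon^s(ed/s)^s$; (iv) combine and optimize.

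The only genuinely delicate step is the tail bound in (iii): one must make sure the geometric/combinatorial factors are absorbed into the stated constant-free form $\epsilon^s(ed/s)^s$ and that the manipulation of $\sum_{k>s}\binom{d}{k}$ does not secretly require $s\le d/2$ or lose a factor. I would resolve this by using the most robust available estimate — factor out $\epsilon^s$ (legitimate since $\epsilon\le 1$ and $k\ge s$), drop $(1-\epsilon)^{d-k}\le 1$, and then bound $\sum_{k=s}^d\binom{d}{k}$; if the cleanest partial-sum lemma is stated for $\sum_{k=0}^{s}$, I would either appeal to the symmetric version or simply note $\sum_{k=s}^d\binom{d}{k}\le 2^d$ is too weak and instead use the sharper $\sum_{k=s+1}^{d}\binom{d}{k}\le\binom{d}{s}\frac{\epsilon}{1-2\epsilon}$ type inequality when $\epsilon<1/2$, which also explains why the hypothesis $\epsilon<1/2$ appears. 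Either way the final bound collapses to the displayed form after absorbing universal constants, consistent with the paper's $\lesssim$ conventions; I expect the authors' proof to make one specific such choice and I would follow whichever keeps the constant exactly $1$ in front of $(ed/s)^s$.
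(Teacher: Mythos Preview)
Your overall plan matches the paper's: invoke \eqref{eq:adaptabilityBernoulli}, bound $|\mathcal{B}_s|$ by $(ed/s)^s$ via the standard partial-sum estimate, bound $\delta_s$ by $\epsilon^s(ed/s)^s$, and combine. Steps (i), (ii), (iv) are fine.

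The gap is in step (iii). Your ``factor out $\epsilon^s$'' route,
\[
\delta_s\leq \epsilon^s\sum_{k>s}\binom{d}{k},
\]
does not yield $\epsilon^s(ed/s)^s$: the bound $\sum_{k\ge s}\binom{d}{k}\le (ed/s)^s$ is simply false for $s<d/2$ (e.g.\ $d=10$, $s=2$: the sum is $1013$, while $(5e)^2\approx 184$). Your fallback suggestions are either stated incorrectly (the inequality ``$\sum_{k=s+1}^d\binom{d}{k}\le \binom{d}{s}\frac{\epsilon}{1-2\epsilon}$'' has an $\epsilon$-free left-hand side) or would produce an extra $1/(1-2\epsilon)$ factor, whereas the lemma is a sharp inequality, not a $\lesssim$.

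The paper gets (iii) cleanly via a Chernoff bound: for $t>1$,
\[
\P(B>s)\le \frac{\esp[t^B]}{t^s}=\frac{(\epsilon t+1-\epsilon)^d}{t^s}\le \exp\bigl(d\epsilon t - s\log t\bigr),
\]
and the choice $t=s/(\epsilon d)$ gives $\P(B>s)\le \epsilon^s(d/s)^s\le \epsilon^s(ed/s)^s$ with no stray constants. This is the missing idea you should use in place of the crude factorization.
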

One can then obtain a version of~\cref{thm:borne_reg_threhold} in the case of a Bernoulli model, by optimizing $s$ in Lemma \ref{lem:bernoulliOracle}.
  
\begin{propo}\label{prop:BernoulliEBound}
Under the assumptions of~\cref{thm:Risque_Complet},
\begin{equation*}
\begin{split}
  \esp\left[\mathcal{E}\left(T_L\hat{f}^{(d/n)}\right)\right]\lesssim a_n \left(\frac{ed}{s_{\epsilon}\left(d/n\right)}\right)^{s_{\epsilon}\left(d/n\right)}\frac{d}{n}+ A_{\F_b},
\end{split}
\end{equation*}
with
% \begin{equation}\label{eq:bernoulli_hidden_dim}
  $  s_{\epsilon}\left(d/n\right):=1 \vee \left \lfloor\frac{\log\left(\frac{n}{d}\right)}{\log(\epsilon^{-1})}\right\rfloor \wedge d.$
% \end{equation}
\end{propo}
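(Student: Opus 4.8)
The plan is to chain two results already at hand. First, by \Cref{thm:borne_reg_threhold} applied with the optimal threshold $\tau=d/n$ (inequality~\eqref{eq:reg_bound_tau_opti}), the predictor $\widehat f^{(d/n)}$ satisfies $\esp[\mathcal E(T_L\widehat f^{(d/n)})]\lesssim a_n\,\mathfrak C_p(d/n)+A_{\F_b}$. Second, under the homogeneous Bernoulli model, \Cref{lem:bernoulliOracle} bounds the missing-pattern complexity by $\mathfrak C_p(d/n)\le \inf_{s\in[d]}(d/n+\epsilon^{s})(ed/s)^{s}$. Hence the whole argument reduces to exhibiting a near-minimiser of $s\mapsto(d/n+\epsilon^{s})(ed/s)^{s}$ over $s\in[d]$ and identifying it with the integer $s_\epsilon(d/n)$ of the statement.

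\textbf{Choice of $s$.} I would introduce the (real) balancing point $s^\star:=\log(n/d)/\log(\epsilon^{-1})$, characterised by $\epsilon^{s^\star}=d/n$: it equalises, up to constants, the decreasing factor $d/n+\epsilon^{s}$ and the increasing factor $(ed/s)^{s}$, the latter being increasing on $[1,d]$ since $\frac{d}{ds}\big(s\log(ed/s)\big)=\log(d/s)\ge 0$. In the regime $d\le n\le d2^{d}$ one has $\log(n/d)\le d\log 2<d\log(\epsilon^{-1})$ because $\epsilon<1/2$, so $s^\star<d$ and the truncation $\wedge\,d$ in $s_\epsilon(d/n)=1\vee\lfloor s^\star\rfloor\wedge d$ is never active. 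Then I would split into two cases. If $s^\star\ge 1$, take $s=\lfloor s^\star\rfloor=s_\epsilon(d/n)\in[1,d]$; as $x\mapsto\epsilon^{x}$ is decreasing, $\epsilon^{s_\epsilon(d/n)}\le\epsilon^{s^\star-1}=\epsilon^{-1}\,d/n$, whence $(d/n+\epsilon^{s_\epsilon(d/n)})(ed/s_\epsilon(d/n))^{s_\epsilon(d/n)}\lesssim\frac dn\big(ed/s_\epsilon(d/n)\big)^{s_\epsilon(d/n)}$. If instead $s^\star<1$, then $s_\epsilon(d/n)=1$, and $s^\star<1$ is equivalent to $\epsilon<d/n$, so $(d/n+\epsilon)(ed)\le 2\frac dn(ed)=2\frac dn\big(ed/s_\epsilon(d/n)\big)^{s_\epsilon(d/n)}$. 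In both cases $\mathfrak C_p(d/n)\lesssim\frac dn\big(ed/s_\epsilon(d/n)\big)^{s_\epsilon(d/n)}$.

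\textbf{Conclusion and main difficulty.} Injecting this last bound into~\eqref{eq:reg_bound_tau_opti} yields exactly the announced inequality $\esp[\mathcal E(T_L\widehat f^{(d/n)})]\lesssim a_n\big(ed/s_\epsilon(d/n)\big)^{s_\epsilon(d/n)}\frac dn+A_{\F_b}$. No step is genuinely deep: the whole content sits in the elementary but delicate discrete optimisation above — pinpointing the continuous balance point $s^\star$, controlling the error incurred by rounding it to an integer (which is what forces the two-case split), and verifying that the $1\vee(\cdot)\wedge d$ clipping of $s_\epsilon$ is harmless precisely when $d\le n\le d2^{d}$. The one point to watch is that rounding $s^\star$ down costs a multiplicative $\epsilon^{-1}$ in $\epsilon^{s_\epsilon(d/n)}$; this is the only $\epsilon$-dependence entering beyond $a_n$, and it is absorbed in the $\lesssim$ (alternatively, rounding $s^\star$ up removes it, at the price of replacing $\lfloor s^\star\rfloor$ by $\lceil s^\star\rceil$ in the exponent).
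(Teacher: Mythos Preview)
Your proposal is correct and follows exactly the paper's route: combine \Cref{thm:borne_reg_threhold} at $\tau=d/n$ with \Cref{lem:bernoulliOracle}, then select $s=s_\epsilon(d/n)$. The paper in fact only writes out the proof of the lemma and leaves the optimisation implicit (``by optimizing $s$ in Lemma~\ref{lem:bernoulliOracle}''), so your two-case analysis around $s^\star$ and your observation about the $\epsilon^{-1}$ cost of rounding supply more detail than the original.
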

Here, $s_{\epsilon}\left(d/n\right)$, being in $[d]$, can be interpreted as a \textit{hidden dimension} (relative to the missing pattern distribution). 
Indeed, the initial complexity scaling as $2^d$ in~\cref{thm:Risque_Complet} is replaced by 
$(\frac{ed}{s_{\epsilon}\left(d/n\right)})^{s_{\epsilon}\left(d/n\right)}$ 
in  \Cref{prop:BernoulliEBound} for this Bernoulli model.

Observe that the bound improves as $\epsilon $ decreases, for example for $\epsilon \leq \frac{d}{n}$, the excess risk bound scales as $\frac{d^2}{n}$. This again  highlights the benefit of adaptivity in \Cref{thm:borne_reg_threhold}, which allows us to obtain a bound that improves when the fraction of missing data decreases below a certain level.

% When $\epsilon$ decreases, the bound improves: for $\epsilon = O(\frac{d}{n})$, the upper bound scales as $\frac{d^2}{}$

% Indeed, one has  $\left(\frac{ed}{s_{\epsilon}\left(d/n\right)}\right)^{s_{\epsilon}\left(d/n\right)} = e^{s_{\epsilon}\left(d/n\right)\left(1+\log\left(\frac{d}{s_{\epsilon}\left(d/n\right)}\right)\right) }$: this quantity mostly grows as $s_{\epsilon}\left(d/n\right)$ grows while $1+\log\left(\frac{d}{s_{\epsilon}\left(d/n\right)}\right)$  is upper bounded by $1+\log(d) $

%  In addition, note that
%   $\left(\frac{ed}{s_{\epsilon}\left(d/n\right)}\right)^{s_{\epsilon}\left(d/n\right)} = e^{s_{\epsilon}\left(d/n\right)\left(\log\left(\frac{d}{s_{\epsilon}\left(d/n\right)}\right)+1\right)
    % }$, so that $s_{\epsilon}\left(d/n\right)$ can be interpreted as a hidden dimension (relative to the missing pattern distribution) in the framework of regression under a Bernoulli model, and $\log\frac{d}{s_{\epsilon}\left(d/n\right)}$ as the price to find it.
%So with respect to the initial upper bound of~\cref{thm:Risque_Complet}. Dimension $d$ had replaced by $s_{\epsilon}\left(d/n\right)$ in the exponential. Thus, we can interpret $s_{\epsilon}\left(d/n\right)$ as a hidden dimension (relative to missing pattern distribution) and $\log\frac{d}{s_{\epsilon}\left(d/n\right)}$ as the price to find it. 

% {Note as well that $s_{\epsilon}\left(d/n\right)\ll d$ as soon as there is a small number $n$ of observations, or as $\epsilon$ is small. This in particular testifies the benefit of thresholding in the definition of $f^{(d/n)}$ when $n$ is limited.}
 
 We extend the result above to the \textit{heterogeneous} case in \Cref{sec:hetero}, 
 and provide a discussion on the comparison between the complexities for  homogeneous and heterogeneous Bernoulli models that share \textit{the same} overall fraction of missing data~$\epsilon$ in \Cref{sec:bernoulli_num}.

\subsubsection{Example 3: Database Merge Model}\label{sec:database}

Consider a context of multi-sources data, where for instance a medical register results from merging $d$-dimensional data coming from $h$ different hospitals:
\begin{enumerate}[noitemsep,topsep=0pt]
\item each hospital $k\in\left\{ 1,\ldots,h\right\} $ has its own measurement
protocol, resulting in the missing pattern $P_{k}$ ($P_{k,j}=1$ if measure $j\in \{1,\hdots , d\}$,  is not performed in hospital $k$). Note that  
this missing pattern is shared by all the patients in care in hospital $k$.
\item in addition, for each measure $j\in\{1,\ldots,d\}$, the measuring device may make
a protocol-independent error, that produces a missing value with
probability $\eta$. %We note $N_{j}$ for this process.
\end{enumerate}
For an entry of the merged medical register, call $P$ (taking values in $P_1,\hdots, P_h$) the missing pattern  coding for the protocol effective in the hospital where this information has been collected, and $N \in \{0,1\}^d$ the missing pattern coding for the measurement failure.
Therefore, the eventual missing value pattern can be decomposed as,
\begin{equation}\label{eq:DatabaseMissing}
   (1-M)=(1-P_{H})\odot (1-N),  
\end{equation}
where $\odot$ is the Hadamard product.

This model is compatible with MNAR missing data mechanisms. 
Indeed, the missing pattern may be informative about the missing data values, as it encloses information about the hospital where the data is collected, and thereby may depend on a certain type of population distribution (geographical location, level of wealth...) frequenting the above hospital.~\cref{thm:borne_reg_threhold} can be adapted in such a setting as follows. 

\begin{propo}
\label{prop:datamerge}
Under Assumptions of~\cref{thm:Risque_Complet},
\begin{align*}
  \esp\left[\mathcal{E}\left(T_L\hat{f}^{(d/n)}\right)\right]\lesssim a_n\left(\frac{ed}{s_{\eta}\left(d/n\right)}\right)^{s_{\eta}\left(d/n\right)}h\frac{d}{n}+ A_{\F_b},\label{ineq:HospitalModel}  
%\\ & h \eta \log\left(\frac{d}{\log(d/n)}\right)^{\log (d/n)}\frac{d}{n}
\end{align*}
where $s_\eta$ is defined in \Cref{prop:BernoulliEBound}.
\end{propo}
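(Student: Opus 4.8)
The goal is to apply Theorem~\ref{thm:borne_reg_threhold} (in the form \eqref{eq:reg_bound_tau_opti}) together with the characterization of $\mathfrak{C}_p(d/n)$ from Lemma~\ref{lem:Cp_characterization}, so the real work is to bound $\mathfrak{C}_p(d/n)$ for the database merge distribution \eqref{eq:DatabaseMissing}. First I would fix a threshold $s\in[d]$ and choose the candidate set $\mathcal{B}$ in Lemma~\ref{lem:Cp_characterization} to be the patterns $m$ that arise from \emph{some} protocol $P_k$ ($k\in[h]$) together with a measurement-failure pattern $N$ having at most $s$ ones; concretely $\mathcal{B}=\{\,m : \exists k\in[h],\ \mathrm{mis}(P_k)\subset\mathrm{mis}(m)\ \text{and}\ \lvert\mathrm{mis}(m)\setminus\mathrm{mis}(P_k)\rvert\le s\,\}$. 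Counting gives $\mathrm{Card}(\mathcal{B})\le h\sum_{j=0}^{s}\binom{d}{j}\le h\,(ed/s)^{s}$, using the standard binomial-sum bound $\sum_{j\le s}\binom{d}{j}\le(ed/s)^{s}$ already invoked in Lemma~\ref{lem:bernoulliOracle}.

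Next I would control the residual mass $\delta=\P(M\in\mathcal{B}^c)$. Conditionally on $H=k$, we have $M=k$-protocol $\vee$ $N$ with $N_j\sim\mathcal{B}(\eta)$ independent, so $M\in\mathcal{B}^c$ forces the number of extra failures $\lvert\mathrm{mis}(N)\setminus\mathrm{mis}(P_k)\rvert$ to exceed $s$; since these are at most $d$ independent Bernoulli$(\eta)$ variables, the union bound over which $(s{+}1)$-subset is entirely failed gives $\P(M\in\mathcal{B}^c\mid H=k)\le\binom{d}{s+1}\eta^{s+1}\le(ed/s)^{s}\eta^{s}$ (absorbing the extra $\eta\le 1$ and adjusting constants). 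Averaging over $k$ removes the dependence on $H$, so $\delta\le(ed/s)^{s}\eta^{s}$. Plugging both estimates into Lemma~\ref{lem:Cp_characterization} yields
\begin{equation*}
\mathfrak{C}_p\!\left(\tfrac{d}{n}\right)\le h\Bigl(\tfrac{ed}{s}\Bigr)^{s}\tfrac{d}{n}+\Bigl(\tfrac{ed}{s}\Bigr)^{s}\eta^{s}\le \Bigl(\tfrac{ed}{s}\Bigr)^{s}\Bigl(h\tfrac{d}{n}+\eta^{s}\Bigr),
\end{equation*}
valid for every $s\in[d]$; taking the infimum over $s$ and then the specific choice $s=s_\eta(d/n)=1\vee\lfloor\log(n/d)/\log(\eta^{-1})\rfloor\wedge d$ makes $\eta^{s}\lesssim d/n$, so that $h\,d/n$ dominates the parenthesis and $\mathfrak{C}_p(d/n)\lesssim (ed/s_\eta)^{s_\eta}h\,d/n$. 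Combining with \eqref{eq:reg_bound_tau_opti} gives the claimed bound.

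The main obstacle is the bookkeeping in the residual-mass estimate: one must be careful that the protocol $P_k$ already ``uses up'' some coordinates, so the extra failures live only on $\mathrm{obs}(P_k)$, and the clean bound $\binom{d}{s+1}\eta^{s+1}$ requires that we do not double-count patterns reachable from several protocols — this is harmless because Lemma~\ref{lem:Cp_characterization} only needs $\mathcal{B}$ to be \emph{some} set, so overlaps between protocols only help. A secondary subtlety is that the mixture over $H$ could in principle assign very unequal weights to hospitals, but since the bound on $\P(M\in\mathcal{B}^c\mid H=k)$ is uniform in $k$, the mixture weights integrate out trivially. The monotonicity/optimization step mirrors exactly the one in the proof of Proposition~\ref{prop:BernoulliEBound}, so no new idea is needed there.
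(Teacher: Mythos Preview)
Your argument is correct, but the paper takes a different and somewhat slicker route. Instead of building an explicit candidate set $\mathcal{B}$ and bounding its cardinality and complement mass by hand, the paper exploits two structural properties of $\mathfrak{C}_p$ proved in the appendix: a \emph{data processing inequality} $\mathfrak{C}_{p^f}(\tau)\le\mathfrak{C}_p(\tau)$ whenever $M=f(H,N)$ is a deterministic function of independent variables, and a \emph{tensorization inequality} $\mathfrak{C}_{p\otimes q}(\tau)\le\mathfrak{C}_p(\mathfrak{C}_q(\tau))$. Chaining these gives $\mathfrak{C}_p(d/n)\le\mathfrak{C}_{p_H}\bigl(\mathfrak{C}_{p_N}(d/n)\bigr)\le h\cdot\mathfrak{C}_{p_N}(d/n)$, since $H$ has support of size at most $h$; the Bernoulli bound (Proposition~\ref{prop:BernoulliEBound}) applied to $p_N$ finishes immediately. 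Your approach has the advantage of being self-contained and not requiring these auxiliary lemmas, and it makes the combinatorics of ``protocol plus at most $s$ extra failures'' completely explicit. The paper's approach, by contrast, is more modular and shows that the same machinery would handle any composite mechanism built from independent layers, without redoing the counting each time.
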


See~\cref{proof:database} for the proof. The excess risk bound in~\Cref{prop:BernoulliEBound} encompasses a term similar to that of the Bernoulli case involving only the measurement failure probability $\eta$ here, whereas the number of protocols $h$ linearly intervenes.
To understand why this could be an advantage, consider two hospitals ($h=2$) in which only 50\% of the variables are systematically measured, and assume that the probability of measurement failure $\eta$ equals $0.01$.
The overall proportion $\epsilon$ of missing values in the merged dataset is therefore high, i.e.\ $\epsilon=1-0.99/2 \simeq 0.5$. 
% One could wonder the benefit of the bound in \Cref{prop:datamerge} over that of \Cref{prop:BernoulliEBound}.
% On the one hand, the excess risk is controlled via $s_{\eta}$ with a final dependence in $h\eta=1/50$.
% On the other hand, the excess risk is controlled via $s_{\epsilon}$ with a final dependence in $\epsilon\simeq 0.5$.
%involves a factor $h\eta= 1/50$ and improves over the bound obtained with \Cref{prop:BernoulliEBound}) involving a factor $\epsilon=1/2$.
%bound reads as $\frac{h}{100} \log\left(\frac{d}{\log(d/n)}\right)^{\log (d/n)}\frac{d}{n}$ (instead of $\frac{1}{2} \log\left(\frac{d}{\log(d/n)}\right)^{\log (d/n)}\frac{d}{n}$ by \Cref{prop:BernoulliEBound}). 
Altogether, the bound  in  \Cref{prop:datamerge} (controlled via $s_{\eta}$) improves upon the one of  \Cref{prop:BernoulliEBound} (controlled via $s_{\epsilon}$)
% This mainly lies in
by a factor $\epsilon/(h\eta)=25$.
%In such a setting, \Cref{prop:BernoulliEBound} improves the control of the excess risk by a factor 25, without 
This means that the bound in  \Cref{prop:BernoulliEBound} does not suffer from the resulting proportion $\epsilon$ of missing values, and mostly depends on the probability $\eta$ of measurement failure.  
This outlines the great plasticity of the complexity $\mathfrak{C}_p$ even in regimes with a large proportion of missing values, by leveraging the missing value structure.

% The bound in~\cref{prop:BernoulliEBound} has a strong dependence on $\epsilon$ which corresponds to the proportion of missing values. 
% \cb{pas d'$\epsilon$ dans ce contexte}
% Thus, if we have for example more than half of missing value in data-set, the upper bounds of the Bernoulli model seem to be irrelevant in comparison with~\cref{thm:Risque_Complet}. The advantage of this model is that it covers situations where the proportion of missing data can be very large. The hidden dimension $s_{\eta}\left(hd/n\right)\leq s_{\eta}\left(d/n\right)$ depends on $\eta$ which is the proportion of failures, this is very small compared to the proportion of missing data. 

\subsection{Minimax aspects}\label{sec:minimax}

In this section we discuss the optimality of the risk bound obtained for $\hat{f}^{(\tau)}$. To this end, we consider the class below.

\begin{definition}
\label{def:classPsigmaR}
The class of problems $\mathcal{P}_{p}(\sigma,R)$ is assumed to satisfy the following conditions: for all $\P\in\mathcal{P}_{p}(\sigma,R)$
\begin{enumerate}[noitemsep,topsep=0pt]
\item $\forall m\in\M,\P\left(M=m\right)=p_{m},$
\item $Y=\left\langle \beta,X\right\rangle +\epsilon$ where $\epsilon\sim\mathcal{N}\left(0,\sigma{{}^2}\right)$,
\item Assumptions \ref{ass:Lipsch} and \ref{ass:bornitude} hold with $B^2(\mm+1)\leq 3R^2$,
\item $A_{\F_b}=0$.
\end{enumerate}
\end{definition}
Note that this class of problems includes the Gaussian case (\cref{ass:Gaussian}) with M(C)AR, or GPMM (\cref{ass:ass4.1}). 
For this large class of problems, the excess risk can be upper bounded by \cref{thm:borne_reg_threhold} at the rate $a_n\mathfrak{C}_p(d/n)$, with $a_n\leq (\log(n)+1)^2 (R^2\vee \sigma_{\text{na}}^2)$. 
The following result provides a \textit{lower bound} on the excess risk with the same dependency on the complexity $\mathfrak{C}_p$.

\begin{theorem} \label{thm:minimaxNAR}
Consider a distribution $p$ on $\mathcal{M}$, then $R,\sigma>0$ and $c$ be such that $ 16e^{-\frac{1}{4}\left(\frac{R}{d\sigma}\right)^{2}} \leq c$. 
Therefore,
\begin{equation*}
\begin{split}
     (1-c)\sigma^{2}\mathfrak{C}_{p}\left(\frac{1}{n}\right)\lesssim \min_{\hat{f}}\max_{\mathbb{P}\in\mathcal{P}_{p}(\sigma, R)}\esp_{\P}\left[\mathcal{E}\left(\hat{f}\right)\right] \ .
\end{split}
\end{equation*}
where the minimum is over all predictor $\hat{f}$.
\end{theorem}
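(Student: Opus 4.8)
The plan is to establish the minimax lower bound via a standard reduction to hypothesis testing (Fano/Assouad-type argument), but carefully engineered so that the "number of hard directions" matches the complexity $\mathfrak{C}_p(1/n)$ rather than the ambient dimension $2^{d-1}(d+2)$. The key observation is that $\mathfrak{C}_p(1/n) = \sum_{m\in\M} p_m \wedge \tfrac1n$; by the characterization in \Cref{lem:Cp_characterization} (with $d/n$ replaced by $1/n$), this equals $\inf_{\mathcal B\subset\M}\{\,|\mathcal B|/n + \P(M\in\mathcal B^c)\,\}$. So I would split the risk lower bound into two regimes corresponding to the two terms in this infimum: a "variance/testing" contribution of order $\sigma^2 |\mathcal B|/n$ coming from the patterns that ARE observed often enough, and an "approximation-free but unseen pattern" contribution of order $\sigma^2\,\P(M\in\mathcal B^c)$ coming from patterns so rare that no estimator can say anything about $f_m^\star$ on them.

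\textbf{Step 1: Reduction to per-pattern subproblems.} Fix a reference distribution where $X\mid(M=m)\sim\mathcal N(0,I_d)$ (GPMM, which lies in $\mathcal P_p(\sigma,R)$ after checking the scaling $B^2(\gamma+1)\le 3R^2$ — this is where the constraint $16e^{-\frac14(R/d\sigma)^2}\le c$ will enter, controlling how large the perturbation coefficients $\delta^{(m)}$ can be while staying in the class). Since $f^\star(Z)=\sum_m f_m^\star(X_{\mathrm{obs}(m)})\ind_{M=m}$ and the patterns partition the sample space, the excess risk decomposes as $\mathcal E(\hat f) = \sum_m p_m\, \e{(\hat f_m - f_m^\star)^2\mid M=m}$ where $\hat f_m$ is the restriction of $\hat f$ to pattern $m$. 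So it suffices to lower bound $\sum_m p_m \cdot (\text{minimax error of estimating the linear map }\delta^{(m)}\text{ from }|E_m|\text{ observations})$.

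\textbf{Step 2: Per-pattern minimax rate.} For a single pattern $m$ with $k=|m|$ observed coordinates, estimating a $(k{+}1)$-dimensional linear regression with Gaussian noise $\sigma^2$ from $N_m$ i.i.d.\ samples has minimax prediction error of order $\sigma^2 (k{+}1)/N_m \asymp \sigma^2 d/N_m$ (lower bound via Gaussian sequence model / local Fano, e.g.\ the $\chi^2$ or KL computation for Gaussian shifts). But $N_m = |E_m|$ is random, $\mathrm{Binomial}(n,p_m)$; the cleanest route is to make the lower bound conditional on $|E_m|$ and then take expectation, or to simply note that with constant probability $|E_m| \lesssim n p_m + 1$. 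Combining: each pattern contributes $\gtrsim p_m \cdot \sigma^2 \frac{d}{np_m + 1} \asymp \sigma^2 \cdot \frac{d p_m}{np_m+1}$, and one checks $\sum_m \frac{dp_m}{np_m+1} \asymp \sum_m (p_m \wedge \frac1n) \cdot d = d\,\mathfrak C_p(1/n)$ — actually I must be careful about the factor $d$ versus no factor: the statement has $\sigma^2\mathfrak C_p(1/n)$ with no $d$, so the relevant per-pattern contribution I should extract is $\sigma^2(p_m\wedge 1/n)$, meaning I only need to detect a \emph{one-dimensional} perturbation per pattern (a single fixed direction, amplitude $\asymp \sigma/\sqrt{n}$), which is enough and avoids needing the $d$. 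So in Step 2 I use a two-point (Le Cam) argument per pattern rather than the full $d$-dimensional Fano.

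\textbf{Step 3: Handling rare patterns and assembling.} For patterns with $p_m \le 1/n$, in expectation fewer than one sample has that pattern, so with probability $\ge (1-p_m)^n \gtrsim$ const (when $np_m\lesssim 1$) the estimator sees \emph{no} data from pattern $m$ and cannot distinguish $\delta^{(m)}=0$ from a fixed $\delta^{(m)}\neq 0$ of norm $\asymp \sigma$; this yields a contribution $\gtrsim \sigma^2 p_m = \sigma^2(p_m\wedge 1/n)$ for those patterns. For patterns with $p_m > 1/n$, the two-point argument of Step 2 yields $\gtrsim \sigma^2/n = \sigma^2 (p_m \wedge 1/n)$. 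Because the perturbations across different patterns are on disjoint events, the hard instances can be chosen independently per pattern (a product construction over $m$), so the errors add rather than compete — this is the Assouad-style aggregation and it is legitimate precisely because $\mathcal E$ is additive over patterns. Summing over all $m$ gives $\gtrsim (1-c)\sigma^2 \sum_m (p_m\wedge 1/n) = (1-c)\sigma^2\mathfrak C_p(1/n)$, as claimed. \textbf{The main obstacle} I anticipate is the bookkeeping that keeps everything simultaneously inside $\mathcal P_p(\sigma,R)$: the perturbation amplitudes must be uniformly bounded so that Assumption \ref{ass:Lipsch} holds with $B^2(\gamma+1)\le 3R^2$, yet large enough (order $\sigma$, up to constants) that Le Cam's two-point bound gives a constant-order testing error; reconciling these forces the hypothesis $16e^{-\frac14(R/d\sigma)^2}\le c$ and the factor $(1-c)$, and getting that dependence sharp (rather than just "for $R/\sigma$ large enough") is the delicate part.
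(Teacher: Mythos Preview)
Your plan is essentially correct and would go through, but the paper takes a different and somewhat slicker route. Two main differences:

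\textbf{(i) Degenerate covariates instead of Gaussian.} Rather than fixing $X\mid(M=m)\sim\mathcal N(0,I_d)$, the paper picks $X\mid(M=m)$ to be a \emph{Dirac mass} at a point $\mu^{(m)}$ with $\mu^{(m)}_{\mathrm{obs}(m)}=0$, so that $X_{\mathrm{obs}(M)}=0$ almost surely and the observed data reduce to $(M_i,Y_i)$ only. With $\beta=R(1/d,\dots,1/d)^\top$ fixed, varying one coordinate of $\mu^{(m)}_{\mathrm{mis}(m)}$ moves the scalar $f_m^\star(0)$ freely in $[-R,R]$; the problem collapses to the pure ``one mean per pattern'' model $Y=f^\star(M)+\epsilon$, for which a separate proposition gives the lower bound. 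This sidesteps the random-design issues and the extra noise term $\|\beta_{\mathrm{mis}(m)}\|^2$ that your Gaussian choice would introduce when integrating out $X_{\mathrm{mis}}$.

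\textbf{(ii) Bayesian lower bound instead of Le Cam/Assouad.} For the reduced problem, the paper does not use discrete two-point tests; it places an independent Gaussian prior $\Pi=\bigotimes_m\mathcal N(0,\lambda_m^2)$ on the values $(f^\star(m))_m$, computes the Bayes risk exactly via Gaussian--Gaussian conjugacy (yielding $\sum_m p_m\,\lambda_m^2\sigma^2/(\sigma^2+\lambda_m^2 n p_m)$ after taking expectation over $|E_m|$), and then handles the box constraint $|f^\star(m)|\le R$ by a truncation-comparison lemma showing the difference between the constrained and unconstrained Bayes risks is at most $8\lambda_m^2 e^{-R^2/(4\lambda_m^2)}$. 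Optimising $\lambda_m^2=\sigma^2\min(1,1/(np_m))$ produces $\mathfrak C_p(1/n)$ directly, and the truncation error is exactly where the hypothesis $16e^{-\frac14(R/(d\sigma))^2}\le c$ and the clean $(1-c)$ prefactor come from (the $d$ in $R/(d\sigma)$ arises because the reduction maps $R$ to $R/d$).

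What each buys: your Assouad route is more standard and would work, but the bookkeeping you flagged as ``the main obstacle'' is genuinely heavier---you would get a constant-factor lower bound rather than the explicit $(1-c)$. The paper's Dirac-plus-Bayes route makes the constraint handling almost automatic and gives sharper constants with less effort. The core structural insight you isolated---one free direction per pattern, contribution $\sigma^2(p_m\wedge 1/n)$, additivity across patterns---is exactly the right one.
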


This result highlights the relevancy of the complexity $\mathfrak{C}_{p}$ in the control of the excess risk.
% The proof can be found in \Cref{proof:minimaxNAR}.
Since $\mathfrak{C}_{p}\left(\frac{1}{n}\right) \geq d^{-1}\mathfrak{C}_{p}\left(\frac{d}{n}\right)$, the lower bound in Theorem~\ref{thm:minimaxNAR} is sharp up to a factor~$d$. Note that if the distribution of missing patterns is uniform, one gets $\mathfrak{C}_{p}\left(\frac{1}{n}\right) =\frac{2^d}{n}$, meaning that 
the upper-bound of~\cref{thm:borne_reg_threhold} cannot be improved in full generality. 
%Note that the quantity $\mathfrak{C}_{p}\left(\frac{1}{n}\right)$ may depend on $d$, for some missing data patterns. 
%Compared to the upper bound $\mathfrak{C}_{p}\left(\frac{d}{n}\right)$ (\Cref{thm:borne_reg_threhold}), one difference is that the lower bound  $\mathfrak{C}_{p}\left(\frac{1}{n}\right)$ is dimension-free.
%Remark that the upper bound in \Cref{thm:borne_reg_threhold} includes $\mathfrak{C}_{p}\left(\frac{d}{n}\right)$ instead of  $\mathfrak{C}_{p}\left(\frac{1}{n}\right)$: the lower-bound is dimension-free. 
%However, the term $\mathfrak{C}_{p}\left(\frac{1}{n}\right)$ remains the most critical as it characterizes the complexity of the missing pattern distribution. Indeed, when the distribution of missing pattern is uniform, one gets $\mathfrak{C}_{p}\left(\frac{1}{n}\right) =\frac{2^d}{n}$, meaning that 
%the upper-bound of~\cref{thm:borne_reg_threhold} cannot be improved in full generality. 
%without stronger assumptions than Assumptions \ref{ass:Lipsch} and \ref{ass:bornitude}. 
Restricting the considered class to the MAR ones does not impact the lower-bound, as outlined in what follows.
\begin{corollary}\label{cor:minimaxMAR}
Assume that one component of $X$ is always observed. Then $\mathcal{P}_{p}(\sigma,R)\cap\mathcal{P}_{\text{MAR}}$ is non empty and \begin{equation*}
 (1-c)\sigma^{2}\mathfrak{C}_{p}\left(\frac{1}{n}\right)\lesssim \min_{\hat{f}}\max_{\mathbb{P}\in\mathcal{P}_{p}(\sigma,R)\cap\mathcal{P}_{\rm{MAR}}}\esp_{\P}\left[\mathcal{E}\left(\hat{f}\right)\right].
\end{equation*}
\end{corollary}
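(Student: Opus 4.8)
The plan is to replay the lower-bound construction behind Theorem~\ref{thm:minimaxNAR}, but realised inside $\mathcal{P}_p(\sigma,R)\cap\mathcal{P}_{\mathrm{MAR}}$; the conclusion then follows from the very same information-theoretic argument. Recall that the proof of Theorem~\ref{thm:minimaxNAR} produces a finite family of problems $\{\P_\theta\}_\theta\subset\mathcal{P}_p(\sigma,R)$, indexed over a hypercube, that are mutually indistinguishable from $n$ i.i.d.\ samples and whose Bayes predictors $f_m^\star$ are perturbed independently over the sufficiently frequent patterns $m$, the quantity $\mathfrak{C}_p(1/n)$ counting the patterns one cannot learn from $\approx np_m$ observations. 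The only ingredient of these instances that can violate MAR is the joint law of $(X,M)$: there the conditional laws $X\mid M=m$ carry genuinely pattern-dependent parameters (a GPMM-type instance), hence the mechanism is in general MNAR, cf.\ Example~\ref{ex:GMMNAR} — and such a dependence of $X\mid M$ on $m$ seems unavoidable for a tight bound, since a fixed Gaussian covariate law with an MCAR mechanism cannot generate $2^{d-1}$ independently chosen predictors $f_m^\star$.

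To turn the family MAR while keeping conditions 1, 2 and 4 of Definition~\ref{def:classPsigmaR} intact, I would encode the pattern in the always-observed coordinate, say $X_1$ (so $p_m=0$ whenever $m_1=1$, and $\mathfrak{C}_p$ is a sum over the $\le 2^{d-1}$ patterns with $m_1=0$). Pick a law for $X_1$ and a partition $(I_m)_m$ of its support with $\P(X_1\in I_m)=p_m$, declare $M=m$ on $\{X_1\in I_m\}$ — so $M$ is a deterministic function of the observed $X_1$, i.e.\ the mechanism is MAR — and, conditionally on $\{X_1=x_1\}$ with $x_1\in I_m$, let $(X_2,\dots,X_d)$ be Gaussian with mean affine in $x_1$ and with the per-pattern parameters (carrying the bit $\theta_m$) taken from the original construction, while keeping $Y=\beta_0+\beta^\top X+\epsilon$ with the same $\beta$ and a fresh $\epsilon\sim\mathcal{N}(0,\sigma^2)$ independent of $X$. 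On $\{M=m\}$ one has $X_1\in I_m$ and $(X_2,\dots,X_d)\mid X_1$ Gaussian, so Gaussian conditioning shows $f_m^\star(x_{\mathrm{obs}(m)})=\esp[Y\mid X_{\mathrm{obs}(m)}=x_{\mathrm{obs}(m)},M=m]$ is affine in $x_{\mathrm{obs}(m)}$; hence $f_m^\star\in\F_b$, $A_{\F_b}=0$, $f_m^\star$ is Lipschitz (Assumption~\ref{ass:Lipsch}), and each $X_j$ is sub-Gaussian ($X_1$ can be taken bounded and, for $j\ge2$, $X_j$ is affine in $X_1$ plus a Gaussian), so Assumption~\ref{ass:bornitude} holds; rescaling the perturbations if needed — absorbing constants into $\lesssim$ — keeps $B^2(\gamma+1)\le 3R^2$. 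Thus every $\P_\theta$, and in particular the base instance, lies in $\mathcal{P}_p(\sigma,R)\cap\mathcal{P}_{\mathrm{MAR}}$, which proves this intersection is non-empty, and the separations between the $f_m^\star$'s are unchanged.

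Since the whole hypercube family now belongs to the restricted class, the reduction to a testing problem over the hypercube and the control of the testing error used to prove Theorem~\ref{thm:minimaxNAR} go through verbatim, yielding $(1-c)\sigma^2\mathfrak{C}_p(1/n)\lesssim \min_{\hat f}\max_{\P\in\mathcal{P}_p(\sigma,R)\cap\mathcal{P}_{\mathrm{MAR}}}\esp_\P[\mathcal{E}(\hat f)]$, which is Corollary~\ref{cor:minimaxMAR}. I expect the only delicate point to be the second step: one must check that encoding $M$ in $X_1$ neither breaks the exact linearity of the $f_m^\star$ (needed for $A_{\F_b}=0$) nor pushes the sub-Gaussian constant past the budget $3R^2$, both of which are handled by the explicit ``one Gaussian conditional law per cell $I_m$'' construction at the price of tracking a few constants.
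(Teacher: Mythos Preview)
Your approach is essentially the paper's: the key idea is to make $M$ a deterministic function of the always-observed coordinate $X_1$ by partitioning the support of $X_1$ into cells $(I_m)_m$ of mass $p_m$ and pushing the per-pattern free parameter into the conditional law of $X_{2:d}$, so that the hard family from Theorem~\ref{thm:minimaxNAR} lands inside $\mathcal{P}_p(\sigma,R)\cap\mathcal{P}_{\mathrm{MAR}}$; the paper simplifies further by taking $X_1\sim\mathcal{N}(0,1)$, $\beta_1=0$, and $X_{2:d}\mid X_1$ a \emph{Dirac} mass (so each $f_m^\star$ is constant and the reduction to Proposition~\ref{thm:minimax_onlyMP} is immediate). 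One small correction: the lower bound behind Theorem~\ref{thm:minimaxNAR} is obtained via a Bayesian argument with Gaussian priors over $a\in[-1,1]^{\M}$ (Lemmas~\ref{lem:gauss_mini}--\ref{lem:comparison}), not a hypercube testing reduction as you describe, but since you merely defer to that argument this does not affect the validity of your corollary.
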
 
%Except for the constraint on the always observed variable, we see that the lower bound is strictly identical to the previous theorem. 
% See~\Cref{proof:minimaxNAR} for the proof.
This lower bound is of the same order as that of the upper bound in  \Cref{thm:minimaxNAR}.
%is still as limiting in terms of convergence rate than in the general case of \Cref{thm:minimaxNAR}. 
While the MAR hypothesis facilitates the inference framework (the former actually originates from the latter, see~\citealt{RUBIN76}),  \Cref{cor:minimaxMAR} emphasizes that MAR scenarios do not help prediction purposes. 
%In prediction, finding suitable assumptions on the missing patterns still remains an open question. 

\section{Numerical experiments}\label{sec:experiments}

In this section, we numerically evaluate the performance of several regressors on varying missing data scenarios.

\paragraph{Regressors.} More specifically, we compare the following five regression methods. First, we consider two baselines consisting in imputation followed by standard linear regression (on the completed data):  for \textbf{Cst-imp+LR} we learn  optimal imputation constants for each variable (note that this is equivalent to performing a LR of $Y$ on $\left(X_{\rm{obs}(m)},M\right)$, see \citep[Proposition 3.1]{le2020linear}); for \textbf{MICE+LR}, the imputation is performed by the scikit-learn \verb|IterativeImputer|
which relies on {MICE} \cite{van2011mice}.
% (Multivariate Imputation by Chained Equations) 
Moreover, we add two pattern-by-pattern methods, that learn one regression model per pattern as defined in \Cref{eq:reg_with_tau}: for all patterns having at least one observation in \textbf{P-by-P imp} (i.e., $\tau=n^{-1}$ which matches the regressor in \eqref{eq:OLS_KD}), and 
% Pattern by Pattern linear regression as described in~\cref{subsec:reg_rule} 
% with frequency threshold $\tau=1/n$ i.e for each missing pattern
% having at least one observation
% is enough to perform regression. 
% This is an equivalent predictor to~\cref{thm:Risque_Complet} one.  
with $\tau=d/n$ for \textbf{Thresholded P-by-P imp}. For both, the technical $\ell^\infty$-ball condition is not considered in numerical experiments. Finally
{\textbf{NeuMiss} \cite{lemorvan:hal-02888867} is a neural network which architecture is specifically designed to handle missing data in linear regression.} 
% Thresholded Pattern by Pattern linear regression as described in~\cref{subsec:reg_rule} with frequency threshold .%Finally, we perform an \textbf{Expanded Ridge} regression performed on the concatenation of all vectors $\left\{\ind_{M=m}\left(X_{\rm{obs}(m)}^\top,1)\right)\right\}_{m\in\M}$. The ridge penalization parameter $\alpha$ is arbitrarily fixed to 1. 

%\begin{description}[noitemsep,topsep=0pt]
% \item[Cst-imp+LR:] denotes a. This is equivalent to learning and then regressing on the completed dataset \citep[Proposition 3.1]{le2020linear}. 
% \item[MICE+LR:] We used the
% \verb|IterativeImputer| of scikit-learn which adapts
% MICE (Multivariate Imputation by Chained
% Equations) to imputation of missing value. Then we performed a regression on completed data. 
% \item
% \item[Expanded Ridge:] We denote by $X'$ t. Expanded Ridge is a $\ell_2$ penalized regression of $Y$ on $X'$. We denote by $\alpha$ the penalization parameter. 
% \end{description}

\paragraph{Data generation settings.} 
We consider three different settings in dimension $d=8$ with increasing difficulty:  
(a) \textbf{MCAR Bernoulli} in which $X$ and $M$ are independent, $M$ is generated according to the homogeneous Bernoulli Model of~\cref{sec:bernoulli} with missing value proportion $\epsilon=10\%$ and $X\sim\mathcal{N}(\mu,\Sigma)$ where $\mu\neq 0$ and $\Sigma\neq I$; 
(b) \textbf{MAR} in which $X$ is separated into two blocks of components $X^{(1)},X^{(2)}$ each of size $4$, $X^{(1)}$ is a Gaussian isotropic vector that is always observed and the missing pattern associated to $X^{(2)}$ is $M^{(2)}=\ind_{X^{(1)}>0}$, and $X^{(2)}|X^{(1)}\sim\mathcal{N}(M^{(2)},\Sigma)$ where $\Sigma\neq I$. 
(c) \textbf{MNAR-GPMM} in which $(X,M)$ is distributed according to~\cref{ass:ass4.1} with $7$ non-null probability missing patterns. See~\cref{sec:GMMdetails} for details.

\begin{figure*}[h]
    \centering
    \begin{tabular}{ccc}
        \includegraphics[width=0.3\linewidth]{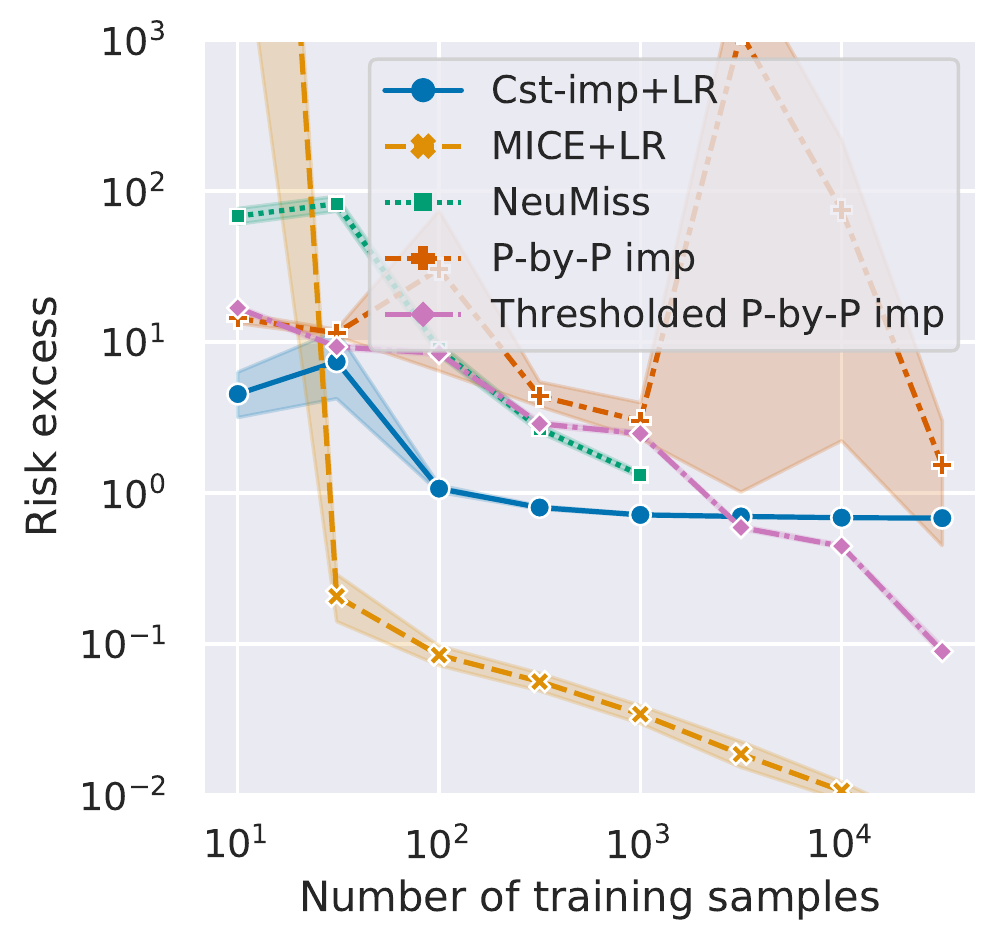}&
        \includegraphics[width=0.3\linewidth]{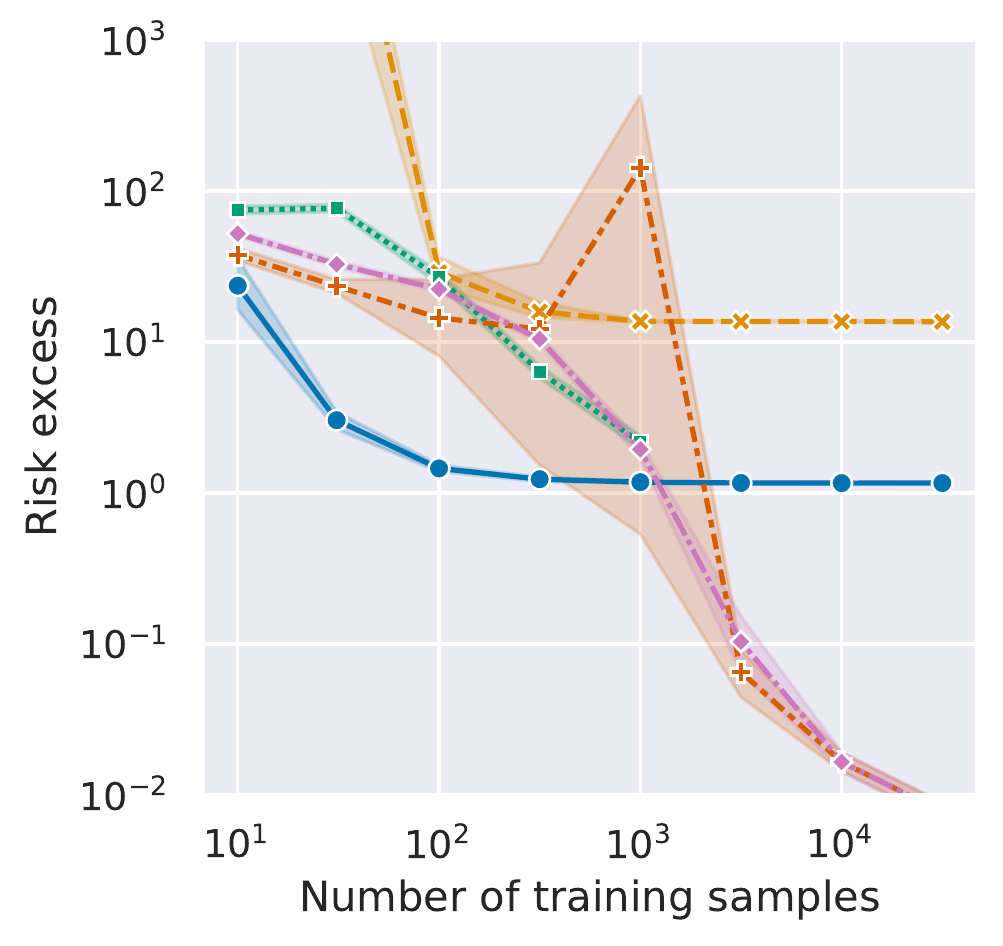}&
        \includegraphics[width=0.3\linewidth]{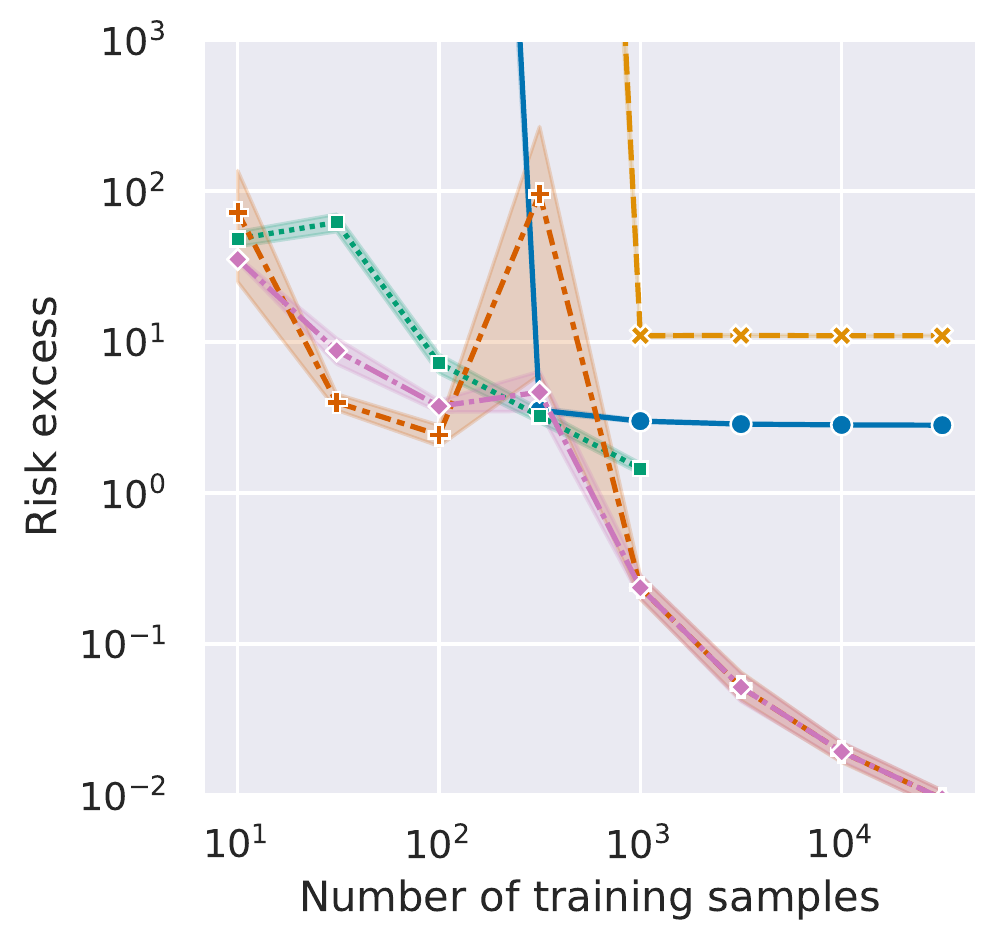} \\
        {(a) MCAR} & {(b) MAR} & {(c) MNAR-GPMM}
    \end{tabular}
    \caption{\label{fig:excess_risk_XP} Excess risk w.r.t.~the number of training samples. The curve represents the averaged excess risk over 100 repetitions within a 95\% confidence interval. }
\end{figure*}

\paragraph{Results.} The results are presented in Figure \ref{fig:excess_risk_XP}.
% on s'adapte au 3 scenar
First, the P-by-P methods (with and without threshold) and NeuMiss are the only ones that are Bayes consistent regardless of the scenario (the excess risk tends to 0 on Figures \ref{fig:excess_risk_XP}(a,b,c)).
Neumiss provides similar performances at least in the MCAR and MAR settings, but its computational complexity, even in dimension $d=8$, prevents from reaching large sample sizes (see~\Cref{sec:learningTime}).
All the previous methods clearly outperform the MICE+LR strategy as soon as the data are not MCAR anymore, by exploiting the information contained in the missing pattern. Note that the Cst-imp method poorly performs whatever the data setting is: this could be explained by the fact that the model includes $2d$ parameters, which is not sufficient to learn the correlations between the variables (which would require $d^2$ parameters at least). 
%Thus, it is recommended to the practitioner, who does not have a prior on the generation of the missing data, to choose one of these methods. 
% threshold = reduce overfitting 
Secondly, we remark the benefit of thresholding in P-by-P methods: \textbf{Thresholded P-by-P} outperforms the unthresholded version in particular for a small number of samples. 
Thresholding thus acts as a regularizer, by avoiding overfitting on the least frequent missing patterns.
\section{Conclusion}
In this paper, we propose a wide panel of data settings to study linear models with missing data. Contrary to most previous works, we focus on the prediction problem by evaluating the quadratic risk of linear models. We propose a new thresholded predictor coming with strong theoretical guarantees: the upper bound on its excess risk holds under very mild assumptions on the data, while integrating the complexity $\mathfrak{C}_p$, the missing patterns distribution. This quantity is interesting on its own as it describes the influence of the missing data distribution on the predictive performances. Several examples and a lower bound highlights the sharpness of our results. Numerical experiments emphasizes the improvement of our pattern-by-pattern estimator compared to state-of-the-art algorithms. 
% \begin{itemize}
%     \item Training thresholded pattern-by-pattern predictors in practice necessitates a high sample complexity.
%     \item Other type of regularization?
%     \item Set of hypotheses on the missing data distribution suitable for prediction purposes given a predictor?

% \end{itemize}
{Training thresholded pattern-by-pattern predictors is a way to regularize the learning process highly complex when missing data occur.
Other types of regularization should be investigated to break the induced curse of dimensionality. However, the lower bound on the minimax predictor suggest that current assumptions are not strong enough to obtain better guarantees.
In the formalism of prediction with missing values, finding suitable assumptions on the missing patterns still remains an open question. 
}
%We establish upper bound on the excess risks with various assumptions. This upper bound is based on a new quantity, the complexity of missing patterns, which turns out to be a relevant quantity in most scenarios. In particular, we prove a lower bound on the excess risk, involving $\mathfrak{C}_p$ and matching the upper bound. 

\bibliography{references}

\appendix
\section{Proofs for Section \ref{sec:Generalization}}\label{proof:Gyorfi_freeBound}
\subsection{Key lemma on Binomial law}
Let's begin by a useful lemma on binomial law, which can be found in \citep[Lemma A2 p 587]{devroye2013probabilistic}.
\begin{lemma}\label{lem:inverse_bernoulli}
Let $B\sim\mathcal{B}(p,n)$, we have 
\begin{equation}
\frac{1}{1+np}\leq\esp\left[\frac{1}{1+B}\right]\leq\frac{1}{p(n+1)},\label{eq:inverse_binome1}
\end{equation}
and 
\begin{equation}
\esp\left[\frac{\ind\{B>0\}}{B}\right]\leq\frac{2}{p(n+1)}.\label{eq:inverse_binome2}
\end{equation}
\end{lemma}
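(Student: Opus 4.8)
The statement to prove is Lemma~\ref{lem:inverse_bernoulli}, the two inequalities on $\esp[1/(1+B)]$ and $\esp[\ind\{B>0\}/B]$ for $B\sim\mathcal{B}(p,n)$. The plan is elementary: compute $\esp[1/(1+B)]$ exactly by a generating-function/combinatorial identity, then deduce the three bounds.

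\emph{Step 1: exact evaluation of $\esp[1/(1+B)]$.} Write
\begin{equation*}
\esp\left[\frac{1}{1+B}\right]=\sum_{k=0}^{n}\frac{1}{k+1}\binom{n}{k}p^k(1-p)^{n-k}.
\end{equation*}
Using the identity $\frac{1}{k+1}\binom{n}{k}=\frac{1}{n+1}\binom{n+1}{k+1}$, reindex with $j=k+1$ to get
\begin{equation*}
\esp\left[\frac{1}{1+B}\right]=\frac{1}{(n+1)p}\sum_{j=1}^{n+1}\binom{n+1}{j}p^{j}(1-p)^{n+1-j}=\frac{1-(1-p)^{n+1}}{(n+1)p}.
\end{equation*}

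\emph{Step 2: deduce the upper and lower bounds of \eqref{eq:inverse_binome1}.} The upper bound $\esp[1/(1+B)]\le \frac{1}{p(n+1)}$ is immediate since $1-(1-p)^{n+1}\le 1$. For the lower bound $\frac{1}{1+np}\le\esp[1/(1+B)]$, one option is convexity: $x\mapsto 1/(1+x)$ is convex on $[0,\infty)$, so Jensen gives $\esp[1/(1+B)]\ge 1/(1+\esp[B])=1/(1+np)$, which is exactly the claimed bound and avoids fiddling with $(1-p)^{n+1}$ altogether.

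\emph{Step 3: the bound \eqref{eq:inverse_binome2} on $\esp[\ind\{B>0\}/B]$.} The trick is to compare $\ind\{k>0\}/k$ with $2/(k+1)$ for every integer $k\ge 0$: indeed for $k=0$ both sides are $0$, and for $k\ge 1$ one has $1/k\le 2/(k+1)$ since $k+1\le 2k$. Hence
\begin{equation*}
\esp\left[\frac{\ind\{B>0\}}{B}\right]\le 2\,\esp\left[\frac{1}{1+B}\right]\le \frac{2}{p(n+1)},
\end{equation*}
using Step 1 (or Step 2's upper bound) for the last inequality.

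There is no real obstacle here; the only mild subtlety is getting the reindexing in Step~1 right and remembering that the lower bound is cleanest via Jensen rather than via the closed form. The whole argument is a short computation, and the lemma is in any case quoted from \citep{devroye2013probabilistic}, so it can simply be cited if a self-contained proof is not wanted.
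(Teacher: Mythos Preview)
Your proof is correct and follows essentially the same approach as the paper: the upper bound in \eqref{eq:inverse_binome1} via the identity $\frac{1}{k+1}\binom{n}{k}=\frac{1}{n+1}\binom{n+1}{k+1}$ and reindexing, the lower bound via Jensen, and \eqref{eq:inverse_binome2} via $1/k\le 2/(k+1)$ for $k\ge 1$. One tiny slip: in Step~3 you write ``for $k=0$ both sides are $0$'', but the right-hand side $2/(k+1)$ equals $2$ at $k=0$; the inequality $0\le 2$ of course still holds, so the argument is unaffected.
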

\begin{proof}
 
\begin{itemize}
\item Lower bound of \eqref{eq:inverse_binome1}: we use Jensen inequality
\[
\frac{1}{1+np}=\frac{1}{1+\esp B}\leq\esp\left[\frac{1}{1+B}\right].
\]
\item Upper bound of \eqref{eq:inverse_binome1}, 
\begin{align*}
\esp\left[\frac{1}{1+B}\right] & =\sum_{i=0}^{n}\binom{n}{i}\frac{1}{1+i}p^{i}(1-p)^{n-i}\\
 & =\sum_{i=0}^{n}\frac{n!}{i!(n-i)!(1+i)}p^{i}(1-p)^{n-i}\\
 & =\frac{1}{\left(n+1\right)p}\sum_{i=0}^{n}\frac{(n+1)!}{(i+1)!(n+1-i-1)!}p^{i+1}(1-p)^{n-i}\\
 & =\frac{1}{\left(n+1\right)p}\sum_{i=0}^{n}\binom{n+1}{i+1}p^{i+1}(1-p)^{n+1-i-1}\\
 & \leq\frac{1}{\left(n+1\right)p},
\end{align*}
using binomial formula. 
\item For \eqref{eq:inverse_binome2}, we use that $1/x\leq2/(x+1)$ on $x\geq1$ and previous result. 
\end{itemize}
\end{proof}

\subsection{A key intermediate result on regression}
First, let us mention a very useful theorem for analyzing the quadratic risk in the regression framework. 
\begin{theorem}[Theorem 11.3 in \cite{gyorfi2006distribution}]\label{thm:gyorfi}
	Let two random variables $X$ and $Y$ be such that 
	\begin{equation*}
		Y= f^{\star}(X)+\epsilon
	\end{equation*}
	and 
	\begin{equation*}
		\begin{cases}
			\norm{f^{\star}}_{\infty}=\sup_{x\in\supp{X}}|f^{\star}(x)|\leq L\\
			\sigma^2=\sup_{x\in\supp{X}}\V{Y|X=x}<\infty
		\end{cases}
	\end{equation*}
for some $L>0$. Let $\F$ be a linear vector space of function $f:\R^d\longrightarrow\R$. Define the estimate by $T_Lf_n(x)=(-L)\vee f_n(x)\wedge L $ where 
\begin{equation}\label{eq:OLS}
	f_n\in\arg\min_{f\in\F}\frac{1}{n}\sum_{i=1}^{n}(f(X_i)-Y_i)^2.
\end{equation}
Then 
\begin{equation}\label{eq:risque_gyo}
\esp\left[\left(T_{L}f_{n}(X)-f^\star(X)\right)^{2}\right]\leq c\max\left\{ \sigma^{2},L^{2}\right\} \frac{\text{dim}(\F)(1+\log(n))}{n}+8\inf_{f\in\F}\esp\left[\left(f(X)-f^\star(X)\right)^{2}\right],
\end{equation}
for some universal constant $c$. 
\end{theorem}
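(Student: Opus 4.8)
This statement is the classical least-squares bound of \citet[Theorem 11.3]{gyorfi2006distribution}, so the plan is to reproduce its empirical-process argument. Write $\mu$ for the law of $X$, $\mu_n$ for the empirical measure of $X_1,\dots,X_n$, and for a function $g$ set $\norm{g}^2=\int g^2\,d\mu$, $\norm{g}_n^2=\frac1n\sum_{i=1}^n g(X_i)^2$; put $m=f^\star$, $\epsilon_i=Y_i-m(X_i)$ and $D=\dim(\F)$. The goal is to control $\esp\,\norm{T_Lf_n-m}^2$. First I would use that $f_n$ minimises the empirical risk over the \emph{linear} space $\F$, so that $\norm{f_n-Y}_n^2\le\norm{f-Y}_n^2$ for every $f\in\F$; expanding $Y_i=m(X_i)+\epsilon_i$ and rearranging gives the basic inequality
\[ \norm{f_n-m}_n^2 \le \norm{f-m}_n^2 + \frac2n\sum_{i=1}^n \epsilon_i\big(f_n(X_i)-f(X_i)\big). \]
Since $\norm{m}_\infty\le L$, truncation can only help pointwise, so $\norm{T_Lf_n-m}_n^2\le\norm{f_n-m}_n^2$; combining this with the basic inequality and keeping track (as in the reference) of the extra cross term $\frac2n\sum_i\epsilon_i\big(f_n(X_i)-T_Lf_n(X_i)\big)$, one obtains, for every $f\in\F$,
\[ \norm{T_Lf_n-m}_n^2 - \frac2n\sum_{i=1}^n \epsilon_i\big(T_Lf_n(X_i)-m(X_i)\big) \le \norm{f-m}_n^2 + (\text{controlled remainder}). \]
Taking the infimum over $f\in\F$ and concentrating $\norm{f-m}_n^2$ around $\norm{f-m}^2$ introduces the approximation term $\inf_{f\in\F}\norm{f-m}^2$.

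The heart of the proof is then to transfer from the empirical to the population norm for the random function $\widehat g=T_Lf_n-m$, which lies in the class $\mathcal G=\{T_Lf-m:f\in\F\}$ of functions bounded by $2L$. I would establish a uniform deviation inequality of the form: for a suitable $\delta_n$,
\[ \P\!\left(\exists\,g\in\mathcal G:\ \norm{g}^2 - 2\norm{g}_n^2 - \frac2n\sum_{i=1}^n\epsilon_i\,g(X_i) > \delta_n\right) \]
is small, and the positive part has expectation $\lesssim\delta_n$. The key geometric input is that the subgraphs of the $D$-dimensional linear space $\F$ form a VC class of dimension at most $D+1$, so the sup-norm covering numbers of $\{T_Lf:f\in\F\}$ on an $n$-point sample at scale $u$ are polynomial, bounded by $C(L/u)^{c(D+1)}$; feeding this into the chaining/peeling device of the reference (its Theorem~11.4-type estimate) is what produces the factor $D(1+\log n)/n$ and the constant $\max\{\sigma^2,L^2\}$ via a Bernstein-type bound.

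The step I expect to be the main obstacle is handling the \emph{unbounded} noise: the hypothesis only gives $\sup_x\V{Y|X=x}\le\sigma^2$, not a bound on $|\epsilon|$. The standard remedy is to split $\epsilon_i=\epsilon_i\ind\{|\epsilon_i|\le T_n\}+\epsilon_i\ind\{|\epsilon_i|>T_n\}$ with $T_n$ of order $\sqrt n$: the truncated part is bounded, so the Bernstein/peeling machinery above applies (this is where the extra $\log n$ and the $\max\{\sigma^2,L^2\}$ enter), while the tail part is negligible in expectation using only the variance bound. Putting everything together and optimising the free radius yields
\[ \esp\,\norm{T_Lf_n-m}^2 \le c\,\max\{\sigma^2,L^2\}\,\frac{D(1+\log n)}{n} + 8\inf_{f\in\F}\norm{f-m}^2 \]
with $D=\dim(\F)$ and a universal constant $c$, which is the claim; the full argument is carried out in \citet[Chapter~11]{gyorfi2006distribution}.
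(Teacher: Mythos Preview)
Your proposal is a reasonable sketch of the original proof in \citet[Chapter~11]{gyorfi2006distribution}: the basic inequality from empirical risk minimisation, the pointwise contraction of truncation (using $\|f^\star\|_\infty\le L$), the VC/covering-number bound for a $D$-dimensional linear class, the peeling device of their Theorem~11.4, and the truncation of the noise to handle the variance-only hypothesis are indeed the ingredients that yield the $\max\{\sigma^2,L^2\}\,D(1+\log n)/n$ rate with the factor~$8$ on the approximation error.

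However, the paper does \emph{not} prove this statement at all: it is quoted verbatim as Theorem~11.3 of \citet{gyorfi2006distribution} and used as a black box to derive Corollary~\ref{cor:gyorfi} (the unbounded-covariate extension). So while your outline is essentially correct as a summary of the original argument, it goes well beyond what the present paper does; here the ``proof'' is simply a citation.
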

The main drawback of this theorem is that it is only useful if the support of X is bounded.~\cref{ass:ass4.1} requires the covariates to be unbounded as they are assumed to be Gaussian. However, the covariates are on a bounded set with a high probability. The following corollary is adapted to this case.  
\begin{corollary}[Unbounded case]\label{cor:gyorfi}
	Let two random variable $X$, $Y$ and a subset $\K\subset\R^d$ be such that 
	\begin{equation*}
		\begin{cases}
			\norm{f^{\star}}_{\infty,\K}=\sup_{x\in\K}|f^{\star}(x)|\leq L_{\K}\\
			\sigma^2=\sup_{x\in\supp{X}}\V{Y|X=x}<\infty
		\end{cases}
	\end{equation*}
	for some $L_{\K}>0$. Let $\F$ be a linear vector space of function $f:\R^d\longrightarrow\R$. 
	Define 
	\begin{align}
	\label{def:f_K}
	    f_{\K}(x) :=T_{L}f_{n,\K}(x)\ind_{x\in\K},
	\end{align} where 
	\begin{equation}\label{eq:OLSechantillon}
		\begin{cases}
			f_{n,\K}\in\arg\min_{f\in\F}\sum_{X_i\in\K}(f(X_i)-Y_i)^2& \text{if } \exists i\in[n],X_i\in\K,\\
			f_{n,\K}=0& \text{else.}
		\end{cases}
	\end{equation}
	Then 
	\begin{equation}\label{eq:risque_echantillon}
		\begin{split}
			\esp\left[\left(f_{\K}(X)-f^\star(X)\right)^{2}\right]\leq c\max\left\{ \sigma^{2},L_{\K}^{2}\right\} \frac{\text{dim}(\F)(1+\log(n))}{n}\\+8\inf_{f\in\F}\esp\left[\ind_{X\in \K}\left(f(X)-f^\star(X)\right)^{2}\right] +R_{\K},
		\end{split}	
	\end{equation}
	where $p_{\K}=\prob{X\in \K}$ and $R_{\K}=(1-p_{\K})^n\e{\ind_{X\in \K}f^\star(X)^2}+\e{\ind_{X\notin \K}f^\star(X)^2}$.
\end{corollary}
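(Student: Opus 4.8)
The plan is to reduce Corollary~\ref{cor:gyorfi} to Theorem~\ref{thm:gyorfi} by conditioning on the number $N_\K := \#\{i \in [n] : X_i \in \K\}$ of observations that land in $\K$, and then decomposing the risk according to where the test point $X$ falls. First I would write
\[
\esp\left[\left(f_\K(X) - f^\star(X)\right)^2\right] = \esp\left[\ind_{X\in\K}\left(T_L f_{n,\K}(X) - f^\star(X)\right)^2\right] + \esp\left[\ind_{X\notin\K} f^\star(X)^2\right],
\]
using that $f_\K(x) = T_L f_{n,\K}(x)\ind_{x\in\K}$ vanishes off $\K$. The second term is already one of the pieces of $R_\K$, so it only remains to control the first term, i.e.\ the risk on $\K$.

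For the first term, I would condition on the event $\{N_\K = k\}$ for $k \geq 1$. Conditionally on having exactly $k$ design points in $\K$, those $k$ points are i.i.d.\ draws from the law of $X$ restricted (renormalized) to $\K$, which is now a \emph{bounded} support, and $Y = f^\star(X) + \epsilon$ still holds with the same conditional variance bound $\sigma^2$ and with $\|f^\star\|_{\infty,\K} \leq L_\K$. Moreover $f_{n,\K}$ is, conditionally, exactly the least-squares estimator over $\F$ built from these $k$ bounded observations, and $T_L$ clips at $L \geq L_\K$ (taking $L = L_\K$, or any $L\geq L_\K$). Hence Theorem~\ref{thm:gyorfi} applies with sample size $k$ and gives, for the independent test point $X$ restricted to $\K$,
\[
\esp\!\left[\ind_{X\in\K}\left(T_L f_{n,\K}(X) - f^\star(X)\right)^2 \,\middle|\, N_\K = k\right] \leq p_\K\left( c\max\{\sigma^2, L_\K^2\}\frac{\dim(\F)(1+\log k)}{k} + 8 \inf_{f\in\F}\esp\!\left[\ind_{X\in\K}(f(X)-f^\star(X))^2\right]\!/p_\K\right),
\]
the factor $p_\K = \prob{X\in\K}$ coming from integrating the test-point indicator $\ind_{X\in\K}$, and the approximation term rewritten back in terms of the unconditioned law. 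Then I would take expectation over $N_\K \sim \mathcal{B}(p_\K, n)$: since $1+\log k \leq 1 + \log n$, the stochastic-term bound becomes $c\max\{\sigma^2,L_\K^2\}\dim(\F)(1+\log n)\, p_\K\, \esp[\ind\{N_\K>0\}/N_\K]$, and Lemma~\ref{lem:inverse_bernoulli}, inequality~\eqref{eq:inverse_binome2}, yields $p_\K\esp[\ind\{N_\K>0\}/N_\K] \leq 2/(n+1) \leq 2/n$, absorbing into the universal constant.

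Finally I would handle the event $\{N_\K = 0\}$: there $f_{n,\K} = 0$, so $f_\K \equiv 0$ and the contribution to the risk is $\prob{N_\K=0}\,\esp[\ind_{X\in\K}f^\star(X)^2 \mid N_\K=0] = (1-p_\K)^n\,\esp[\ind_{X\in\K}f^\star(X)^2]$ by independence of the test point from the design — this is the remaining piece of $R_\K$. Collecting the three contributions gives exactly~\eqref{eq:risque_echantillon}. The main obstacle, or at least the point requiring care, is the conditioning argument: one must check that conditionally on $\{N_\K = k\}$ the retained observations really are i.i.d.\ from the restricted law and independent of the test point, so that Theorem~\ref{thm:gyorfi} may be invoked verbatim; and one must be slightly careful that the approximation-error term in Theorem~\ref{thm:gyorfi}, which is an expectation under the \emph{restricted} law, matches $\inf_{f\in\F}\esp[\ind_{X\in\K}(f(X)-f^\star(X))^2]$ up to the $p_\K$ normalization that then cancels the $p_\K$ from the test indicator. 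Everything else is bookkeeping plus the two cited lemmas.
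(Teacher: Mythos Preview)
Your proposal is correct and follows essentially the same route as the paper: decompose the risk according to $\{X\in\K\}$ versus $\{X\notin\K\}$, condition on the subsample falling in $\K$ (the paper conditions on the index set $E_\K$, you on its cardinality $N_\K$, which is equivalent here), apply Theorem~\ref{thm:gyorfi} to the restricted problem when the subsample is nonempty, and average over the binomial count using Lemma~\ref{lem:inverse_bernoulli}. The paper makes explicit, via a short rejection-sampling lemma (Lemma~\ref{lem:rejection_sampling}), the fact you flag as the delicate point---that the retained observations are i.i.d.\ from the law of $(X,Y)\mid X\in\K$ with Bayes predictor $f^\star$ restricted to $\K$---and handles the $p_\K$ normalization of the approximation term exactly as you describe.
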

Compared to~\cref{thm:gyorfi}, the bound obtained in the previous corollary includes an additional term $R_{\K}$. The extension of~\cref{thm:gyorfi} to the unbounded covariates case as done in Corollary \ref{cor:gyorfi} will be therefore informative only if this new term $R_{\K}$ remains of small order compared to the other ones. In the next corollary, we will apply it for sub Gaussian covariates.

In particular, under assumption $\esp\left[f^{*}\left(Z\right)^{4}\right]<+\infty$,
we can use Cauchy-Schwarz inequality to obtain 
\begin{equation}
R_{\K}\leq2(1-p_{K})^{1/2}\esp\left[f^{*}\left(Z\right)^{4}\right]^{1/2}.\label{ineq:BorneM4}
\end{equation}
 
\paragraph{Proof of~\cref{cor:gyorfi}}
The main idea of the proof is to consider the subsample of observations that are in $\K$: 
\begin{equation}\label{eq:defEK}
	E_{\K}=\left\lbrace i\in[n],X_i\in\K \right\rbrace .
\end{equation}

\textbf{Step 1: Law on subsample}\\
Let's start with a useful lemma to describe the elements of subsample induced by $E_{\K}$: 
\begin{lemma}
\label{lem:rejection_sampling}
Let $S=(X_i)_{i\in\N}$ be a sequence of independent variables with same distribution and $i_S=\inf\{i,X_i\in\K\}$. We suppose that $p_{\K}>0$, then $i_S<\infty$ almost surely and $ X_{i_S} $ has the same distribution as $X|(X\in\K)$.  
\end{lemma}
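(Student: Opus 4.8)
The statement is the standard correctness guarantee for rejection sampling, and I would prove it in two short steps: first that the stopping index $i_S$ is a.s.\ finite, then that the accepted variable $X_{i_S}$ has the conditional law $X\mid (X\in\K)$. Throughout, write $q:=1-p_{\K}\in[0,1)$, and note that since the $X_i$ are i.i.d.\ the events $\{X_i\notin\K\}$ are independent, each of probability $q$.

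For the finiteness, the plan is to compute $\prob{i_S=+\infty}=\prob{\forall i\in\N,\,X_i\notin\K}$ directly: by independence this equals $\lim_{N\to\infty}\prod_{i=1}^N\prob{X_i\notin\K}=\lim_{N\to\infty}q^N=0$, because $q<1$ thanks to the hypothesis $p_{\K}>0$. (Equivalently one may invoke Borel--Cantelli.) Hence $i_S<\infty$ almost surely, and in particular $X_{i_S}$ is well defined a.s.

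For the law of $X_{i_S}$, I would fix an arbitrary Borel set $A\subset\R^d$ and decompose the event according to the value of $i_S$, which is a legitimate (measurable) partition since $\{i_S=k\}=\{X_1\notin\K,\dots,X_{k-1}\notin\K,\,X_k\in\K\}$. This gives
\begin{equation*}
\prob{X_{i_S}\in A}=\sum_{k=1}^{\infty}\prob{X_1\notin\K,\dots,X_{k-1}\notin\K,\,X_k\in A\cap\K}
=\sum_{k=1}^{\infty}q^{\,k-1}\,\prob{X\in A\cap\K},
\end{equation*}
using independence and the identical distribution of the $X_i$ in the second equality. Summing the geometric series (valid as $q<1$) yields $\prob{X_{i_S}\in A}=\prob{X\in A\cap\K}/p_{\K}=\prob{X\in A\mid X\in\K}$, which is exactly the claim.

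I do not anticipate a real obstacle here: the only points requiring a little care are the measurability of $i_S$ (handled by writing $\{i_S=k\}$ explicitly as above) and the interchange of the countable sum with the probability, which is justified by $\sigma$-additivity since the events $\{i_S=k\}$ are disjoint. The rest is the geometric-series computation.
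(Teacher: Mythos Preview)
Your proof is correct and follows essentially the same approach as the paper: both decompose on the value of $i_S$, exploit the independence of the $X_i$, and sum a geometric series. The only cosmetic difference is that the paper checks equality of distributions via bounded test functions $\phi$ (computing $\e{\phi(X_{i_S})}$) whereas you do it via Borel sets $A$ (computing $\prob{X_{i_S}\in A}$); both are standard and equivalent.
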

\begin{proof}
 $\prob{i_S> k}=p_{ K}^k$ thus $\sum \prob{i_S> k}$ is convergent. Borel-Cantelli lemma shows that $i_S<\infty$ almost surely. Consider a bounded function $\phi$:
\begin{align*}
	\e{\phi(X_{i_S})}&=\sum_{k=1}^{\infty}\prob{i_S= k}\e{\phi(X_i)|i_S=k}\\
	&=\sum_{k=1}^{\infty}\prob{i_S= k}\e{\phi(X_k)|X_k\in \K;X_1,...,X_{k-1}\notin\K}\\
	&=\sum_{k=1}^{\infty}\prob{i_S= k}\e{\phi(X_k)|X_k\in \K} \qquad  \text{because } X_k\text{ does not depend on }X_j,j<k\\
	&= \e{\phi(X_{1})|X_1\in \K}.	
\end{align*} 
This concludes the lemma.
\end{proof} 

Thanks to Lemma \ref{lem:rejection_sampling}, we can show $(X_i,Y_i)\sim \text{Law}\left( (X,Y)|(X\in\K)\right) $ for all $i\in E_{\K}$. \\Let $ (\widetilde{X},\widetilde{Y}) \sim \text{Law}\left( (X,Y)|X\in\K\right)$. Using the same notations of lemma, we can write the Bayes predictor for the regression problem involving the ``conditional" data $(\widetilde{Y},\widetilde{X})$: For all $x\in\K$,
\begin{align*}
	\widetilde{f}^{\star}(x)&=\e{\widetilde{Y}|\widetilde{X}=x}\\
	&=\e{Y_{i_S}|X_{i_S}=x}\\
	&=\sum_{k=1}^{\infty}\prob{i_S= k}\e{Y_{k}|X_k=x\in \K;X_1,...,X_{i-1}\notin\K}\\
	&=\sum_{k=1}^{\infty}\prob{i_S= k}\e{Y_{k}|X_k=x\in \K} \qquad  \text{because } X_k\text{ does not depend on }X_j,j<k\\
	&=\e{Y|X=x}\\
	&=f^{\star}(x). 
\end{align*}
Thus, 
\begin{equation}\label{eq:bayes_echantillon}
	\forall x\in \supp{X}, \quad\widetilde{f}^{\star}(x)=\ind_{x\in\K}f^{\star}(x).
\end{equation}
\textbf{Step 2: Decomposition of excess risk}\\
We can decompose:
\begin{align}
	\esp\left[\left(f_{\K}(X)-f^\star(X)\right)^{2}\right]&\leq\esp\left[\ind_{X\in\K}\left(f_{\K}(X)-f^\star(X)\right)^{2}\right]+\esp\left[\ind_{X\notin\K}\left(f_{\K}(X)-f^\star(X)\right)^{2}\right]\\
	&\leq p_{\K}\esp\left[\left(f_{\K}(X)-f^\star(X)\right)^{2}|X\in\K\right]+\esp\left[\ind_{X\notin\K} f^\star(X)^{2}\right]\label{eq:decompo_risque_echantillon},
\end{align}
using definition for the second term. We will bound the first term using Theorem \ref{thm:gyorfi} by conditioning according to $E_{\K}$. 
\begin{align}
	\esp\left[\left(f_{\K}(X)-f^\star(X)\right)^{2}|X\in\K\right]&=\esp\left[\left(f_{\K}(\widetilde{X})-\widetilde{f}^\star(\widetilde{X})\right)^{2}\right] & \text{by definition}\\
	&=\esp\left[\e{\left(f_{\K}(\widetilde{X})-\widetilde{f}^\star(\widetilde{X})\right)^{2}|E_{\K}}\right].
\end{align} 
Let $ E $ be a subset of $[n]$:
\begin{itemize}
	\item If $E$ is empty, \begin{equation}
		\e{\left(f_{\K}(\widetilde{X})-\widetilde{f}^\star(\widetilde{X})\right)^{2}|E_{\K}=E} =\e{\widetilde{f}^\star(\widetilde{X})^{2}}=\e{f^\star(\widetilde{X})^{2}},
	\end{equation}
using that $f_{\K}=0$ if $E_{\K}$ is empty. 
\item If $E$ is non-empty, $f_{\K}$ is the clipped OLS estimator for the problem $(\widetilde{Y},\widetilde{X})$ and the dataset $(Y_i,X_i)_{i\in E}$, thus
\begin{equation}
	\begin{split}
		\esp\left[\left(f_{\K}(\widetilde{X})-f^\star(\widetilde{X})\right)^{2}|E_{\K}=E\right]\leq c\max\left\{ \sigma^{2},L_{\K}^{2}\right\} \frac{\text{dim}(\F)(1+\log(|E|))}{|E|}\\+8\inf_{f\in\F}\esp\left[\left(f(\widetilde{X})-f^\star(\widetilde{X})\right)^{2}\right],
	\end{split}	
\end{equation}
using Theorem \ref{thm:gyorfi}. 
\end{itemize}
Therefore, 
\begin{equation}
	\begin{split}
		\esp\left[\left(f_{\K}(\widetilde{X})-f^\star(\widetilde{X})\right)^{2}|E_{\K}\right]\leq c\max\left\{ \sigma^{2},L_{\K}^{2}\right\} \frac{\text{dim}(\F)(1+\log(n))}{|E_{\K}|}\ind_{|E_{\K}|>0}+\ind_{|E_{\K}|=0}\e{f^\star(\widetilde{X})^{2}}\\+8\inf_{f\in\F}\esp\left[\left(f(\widetilde{X})-f^\star(\widetilde{X})\right)^{2}\right],
	\end{split}	
\end{equation}
using $\log(|E_{\K}|)\leq \log(n)$. Moreover $|E_{\K}|$ is a binomial distribution with parameters $p_{\K}$ and $n$, thus 

\begin{equation}\label{eq:Gyo_condi}
	\begin{split}
		\esp\left[\left(f_{\K}(\widetilde{X})-f^\star(\widetilde{X})\right)^{2}\right]\leq c\max\left\{ \sigma^{2},L_{\K}^{2}\right\} \frac{2\text{dim}(\F)(1+\log(n))}{(n+1)p_{\K}}+(1-p_{\K})^n\e{f^\star(\widetilde{X})^{2}}\\+8\inf_{f\in\F}\esp\left[\left(f(\widetilde{X})-f^\star(\widetilde{X})\right)^{2}\right], 
	\end{split}	
\end{equation}
using the expectation of the inverse of a binomial distribution. By combining  (\ref{eq:Gyo_condi}) and (\ref{eq:decompo_risque_echantillon}), and using 
\begin{align*}
	\esp\left[\left(f(\widetilde{X})-f^\star(\widetilde{X})\right)^{2}\right]&=\esp\left[\left(f(X)-f^\star(X)\right)^{2}|X\in\K\right]\\
	&=p_{\K}^{-1}\esp\left[\ind_{X\in \K}\left(f(X)-f^\star(X)\right)^{2}\right],
\end{align*}
we find
\begin{equation}
	\begin{split}
		\esp\left[\left(f_{\K}(X)-f^\star(X)\right)^{2}\right]\leq c\max\left\{ \sigma^{2},L_{\K}^{2}\right\} \frac{\text{dim}(\F)(1+\log(n))}{n}\\+8\inf_{f\in\F}\esp\left[\ind_{X\in \K}\left(f(X)-f^\star(X)\right)^{2}\right] +R_{\K},
	\end{split}	
\end{equation}
where $p_{\K}=\prob{X\in \K}$ and $R_{\K}=(1-p_{\K})^n\e{\ind_{X\in \K}f^\star(X)^2}+\e{\ind_{X\notin \K}f^\star(X)^2}$. 

\begin{lemma}\label{lem:p_k}
 	Under ~\cref{ass:bornitude}, we have for all $D>\sqrt{\mm}$,
 	\begin{equation}\label{eq:highPX_i}
 		\forall i\in[p], \qquad \forall D>\sqrt{\mm},\qquad\P\left[|X_{i}|>D\right]\leq2\exp\left[-\frac{\left(D-\sqrt{\mm}\right)^{2}}{2\mm}\right],
 	\end{equation}
 	and
 	\begin{equation}\label{eq:probaKD}
 		\prob{Z\notin\K_D}\leq 2d\exp\left(-\frac{\left(D-\sqrt{\mm}\right)^{2}}{2\mm}\right).
 	\end{equation}
\end{lemma}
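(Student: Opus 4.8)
The statement to prove is \Cref{lem:p_k}: a sub-Gaussian tail bound on individual coordinates and a union bound giving $\prob{Z \notin \K_D}$.

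\medskip

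The plan is to start from \cref{ass:bornitude}, which gives that each centered coordinate $X_j - \esp[X_j]$ is $\mm$-sub-Gaussian and that $\esp[X_j]^2 \le \mm$. First I would write $|X_j| \le |X_j - \esp[X_j]| + |\esp[X_j]| \le |X_j - \esp[X_j]| + \sqrt{\mm}$, using the mean bound. Hence for any $D > \sqrt{\mm}$, the event $\{|X_j| > D\}$ is contained in $\{|X_j - \esp[X_j]| > D - \sqrt{\mm}\}$, and since $D - \sqrt{\mm} > 0$ we may apply the sub-Gaussian tail inequality in \cref{ass:bornitude} with $t = D - \sqrt{\mm}$ to get
\[
\prob{|X_j| > D} \le \prob{|X_j - \esp[X_j]| > D - \sqrt{\mm}} \le \exp\!\left(-\frac{(D-\sqrt{\mm})^2}{2\mm}\right).
\]
(A factor $2$ appears if one splits into the two one-sided tails, matching the $2$ in \eqref{eq:highPX_i}; the bound in \cref{ass:bornitude} is already stated two-sided, so the $2$ is a harmless slack.) This establishes \eqref{eq:highPX_i}.

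\medskip

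For \eqref{eq:probaKD}, recall from \eqref{eq:KD} that $\K_D = \{(x_{\rm{obs}(m)}, m) : \|x_{\rm{obs}(m)}\|_\infty \le D\}$, so $Z = (X_{\rm{obs}(M)}, M) \notin \K_D$ exactly when some observed coordinate exceeds $D$ in absolute value, i.e.\ $\{Z \notin \K_D\} \subset \bigcup_{j \in [d]} \{|X_j| > D\}$ (enlarging the union over all coordinates rather than only the observed ones only increases the probability). A union bound over the $d$ coordinates together with \eqref{eq:highPX_i} then gives
\[
\prob{Z \notin \K_D} \le \sum_{j=1}^d \prob{|X_j| > D} \le 2d \exp\!\left(-\frac{(D-\sqrt{\mm})^2}{2\mm}\right),
\]
which is \eqref{eq:probaKD}.

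\medskip

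There is essentially no obstacle here: the lemma is a routine consequence of the sub-Gaussian assumption plus a union bound. The only mild subtlety is handling the non-centered mean correctly, which is taken care of by the triangle-inequality step $|X_j| \le |X_j - \esp[X_j]| + \sqrt{\mm}$ and the requirement $D > \sqrt{\mm}$ so that the shifted threshold $D - \sqrt{\mm}$ is positive and the tail bound applies.
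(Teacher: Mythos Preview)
Your proof is correct and follows essentially the same approach as the paper: a triangle-inequality shift to center the coordinate, application of the sub-Gaussian tail from \cref{ass:bornitude}, and then a union bound over the $d$ coordinates. Your observation about the factor $2$ being slack (since \cref{ass:bornitude} is already two-sided) is accurate; the paper simply carries it along as well.
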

\begin{proof}
	Let's fix $m\in\M$ and $i\in[p]$, by Assumption \ref{ass:bornitude}:
	By bounding the tail of a Sub-Gaussian distribution, 
	\begin{equation}
	    \prob{\left|X_i-\esp X_i\right|>D}\leq 2\exp\left(-\frac{D^2}{2\gamma}\right).
	\end{equation}
	Since $\left|X_{i}\right|+\sqrt{\mm}\geq \left|X_{i}\right|+\left|\esp X_i\right|\geq\left|X_{i}-\esp X_i\right|$, we have 
	\begin{equation*}
 		\forall i\in[p],\forall D>\sqrt{\mm},\qquad\P\left(|X_{i}|>D\right)\leq2\exp\left(-\frac{\left(D-\sqrt{\mm}\right)^{2}}{2\mm}\right). 
 	\end{equation*}
 	For the second point, we make a union bound and we use (\ref{eq:highPX_i}):
 \begin{align*}
\P\left(Z\notin \K_{D}\right)= &  \P\left(\left\Vert X_{\rm{obs}(m)}\right\Vert_{\infty}>D\right)\\
 & \leq\P\left(\left\Vert X\right\Vert _{\infty}>D\right)\\
 & \leq\sum_{i=1}^{d}\P\left(\left|X_{i}\right|>D\right)\\
 & \leq2d\exp\left(-\frac{\left(D-\sqrt{\mm}\right)^{2}}{2\mm}\right). 
\end{align*}
\end{proof}

\subsection{Proof of Theorem \ref{thm:Risque_Complet}}\label{proof:Risque_Complet}
The proof consists in verifying the assumptions of the corollary \ref{cor:gyorfi} and to bound the additional term $R_{\K_D}$.
\begin{itemize}
\item $\sup_{Z\in\K_{D}}\left|f^\star(Z)\right|$ upper bound:
\begin{align*}
\sup_{Z\in\K_{D}}\left|f^\star(Z)\right| & =\sup_{m\in\M}\sup_{\left\Vert x_{\rm{obs}(m)}\right\Vert _{\infty}\leq D}\left|f^\star_m(x_{\rm{obs}(m)}) \right|\\
& =\sup_{m\in\M}\sup_{\left\Vert x_{\rm{obs}(m)}\right\Vert _{\infty}\leq D}\left|f^\star_m(x_{\rm{obs}(m)}) -f^\star_m(0)\right|+\left|f^\star_m(0)\right|\\
\\
& =\sup_{m\in\M}\sup_{\left\Vert x_{\rm{obs}(m)}\right\Vert _{\infty}\leq D}B\left\Vert x_{\rm{obs}(m)}\right\Vert _{\infty}+B,
\end{align*}
using Assumption \ref{ass:Lipsch}, thus
\begin{equation}\label{eq:def_L}
  \sup_{Z\in\K_{D}}\left|f^\star(Z)\right|\leq (D+1)B\leq L.  
\end{equation}

\item $R_{\K_D}$ upper bound: From \eqref{ineq:BorneM4} 
\begin{align}
R_{\K} & \leq2(1-p_{\K_D})^{1/2}\esp\left[f^{*}\left(Z\right)^{4}\right]^{1/2}\nonumber \\
 & \leq2\sqrt{\P\left(Z\notin\K_D\right)\esp\left[f^{*}\left(Z\right)^{4}\right]}\nonumber \\
 & \leq2\sqrt{2d}\exp\left(-\frac{\left(D-\sqrt{\mm}\right)^{2}}{4\mm}\right)\sqrt{\esp\left[f^{*}\left(Z\right)^{4}\right]}.\label{eq:borne_Rk1}
\end{align}
It remains to bound
\begin{align}
\esp\left[f^{*}\left(Z\right)^{4}\right] & =\sum_{m\in\M}\P\left(M=m\right)\esp\left[f_{m}^{*}\left(X_{\rm{obs}(m)}\right)^{4}|M=m\right]\nonumber \\
 & =\sum_{m\in\M}\P\left(M=m\right)\esp\left[\left|\left|f_{m}^{*}\left(X_{\rm{obs}(m)}\right)-f_{m}^{*}\left(0\right)\right|+\left|f_{m}^{*}\left(0\right)\right|\right|^{4}|M=m\right]\nonumber \\
 & \leq8\sum_{m\in\M}\P\left(M=m\right)\esp\left[B{{}^4}\left\Vert X_{\rm{obs}(m)}\right\Vert _{2}^{4}+B^{4}|M=m\right] & \text{by Assumption \ref{ass:Lipsch}}\nonumber \\
 & \leq8B^{4}\sum_{m\in\M}\P\left(M=m\right)\esp\left[\left\Vert X\right\Vert _{\infty}^{4}|M=m\right]+B^{4}.\nonumber \\
 & \leq8B^{4}\left(\esp\left[\left\Vert X\right\Vert _{\infty}^{4}\right]+1\right).\label{eq:borne_Rk2}
\end{align}
Then,
\begin{align}
\esp\left[\left\Vert X\right\Vert _{\infty}^{4}\right] & \leq\esp\left[\left(\left\Vert X-\esp X\right\Vert _{\infty}+\left\Vert \esp X\right\Vert _{\infty}\right)^{4}\right] & \text{by triangular inequality }\nonumber \\
 & \leq8\esp\left[\left\Vert X-\esp X\right\Vert _{\infty}^{4}+\left\Vert \esp X\right\Vert _{\infty}^{4}\right]\nonumber \\
 & \leq8\esp\left[\left\Vert X-\esp X\right\Vert _{\infty}^{4}\right]+8\gamma^{2} & \text{by Assumption \ref{ass:bornitude}}\nonumber \\
 & \leq8\esp\left[\max_{j\in[d]}\left|X_{j}-\esp X_{j}\right|^{4}\right]+8\gamma^{2}\nonumber \\
 & \leq8\sum_{j=1}^{d}\esp\left[\left|X_{j}-\esp X_{j}\right|^{4}\right]+8\gamma^{2}\nonumber \\
 & \leq8d\left(2!\left(4\mm\right)^{2}\right)+8\gamma^{2}\label{eq:use_charec}\\
 & \leq257d\gamma^{2}\label{eq:borne_Rk3-1}.
\end{align}
We have used moment's characterization \citep[Theorem 2.2]{boucheron2013concentration} in \eqref{eq:use_charec}. Using \eqref{eq:borne_Rk1}, \eqref{eq:borne_Rk2} and \eqref{eq:borne_Rk3-1},
we have 
\[
R_{\K_{D}}\leq64B^{2}d^{1/2}\left(\mm+1\right)\exp\left(-\frac{\left(D-\sqrt{\mm}\right)^{2}}{4\mm}\right).
\]
The choice $D=\sqrt{\gamma}+\sqrt{4\mm\log(n)}$ leads to:

\begin{equation}\label{eq:R_KD_bound}
  R_{\K_{D}}  \leq64B^{2}\left(\mm+1\right)\frac{d}{n}.  
\end{equation}

\end{itemize}

Since Assumptions of Corollary \ref{cor:gyorfi}  are satisfied, we have

\begin{align*}\esp\left[\left(T_{L}\hat{f}^{(D\textrm{-}\rm{LS})}(Z)-f^{*}(Z)\right)^{2}\right] & \leq c\left[\max\left[\sigma_{\text{na}}^{2},B^{2}(D+1)^{2}\right]\frac{2^{d}d(\log(n)+1)}{n}+64B^{2}\left(\mm+1\right)\frac{d}{n}\right] +8  A_{\F_b}\\
 & \leq c_{2}\max\left(\sigma_{\text{na}}^{2},L^{2}\right)\frac{2^{d}d\left(\log(n)+1\right)}{n}+8  A_{\F_b}.
\end{align*}

\subsection{Proof of Corollary \ref{cor:freeBoundAndLin}}\label{proof:freeBoundAndLin}
We just have to check the assumptions of the Theorem \ref{thm:Risque_Complet}. 
\begin{itemize}
    \item Under Assumptions of Proposition \ref{prop:f_mLineaire} $f_m^\star$ is linear:
    \begin{align*}
\left|f_{m}(x)-f_{m}(y)\right| & \leq\left|\left\langle x-y,\delta^{(m)}\right\rangle \right|\\
 & =\left\Vert x-y\right\Vert _{\infty}\left|\left\langle \frac{x-y}{\left\Vert x-y\right\Vert _{\infty}},\delta^{(m)}\right\rangle \right|\\
 & \leq\left\Vert x-y\right\Vert _{\infty}\sup_{\left\Vert u\right\Vert _{\infty}\leq1}\left|\left\langle u,\delta^{(m)}\right\rangle \right|\\
 & =\left\Vert x-y\right\Vert _{\infty}\left\Vert \delta^{(m)}\right\Vert _{1},
\end{align*}
thus \ref{ass:Lipsch} is verified with $B=  \max_{m\in\M}\max\left[|\delta_{0}^{(m)}|,\left\Vert \delta^{(m)}\right\Vert _{1}\right]$. 
    \item Let's check Assumption \ref{ass:bornitude}:
\begin{itemize}
    \item Under Assumption \ref{ass:Gaussian} and (\ref{ass:MAR} or \ref{ass:MCAR}), for $j\in[d]$, $X_j$ is Gaussian, thus $X_j-\esp{X_j}$ is $\mathbb{V}[X_j]$-Sub-Gaussian and 
    \[
\begin{cases}
\esp\left[X_{j}\right]^{2}\leq\gamma\\
\mathbb{V}\left[X_{j}\right]\leq\gamma,
\end{cases}
\]
where $\gamma=\max_{j\in[d]}\esp\left[X_{j}^{2}\right]$. 
\item Under Assumption \ref{ass:ass4.1}: Let $\mm$ such that for all $m\in\M$ and $j\in[d]$, 
\begin{equation}
\esp\left[X_{j}^{2}|M=m\right]\leq\gamma\label{eq:corpro}
\end{equation}

We will use moment characterisation \citep[Theorem 2.1]{boucheron2013concentration} to proove  that the tail of $X_{j}$
is Sub-Gausian. 
\begin{align*}
\esp\left[\left(X_{j}-\esp X_{j}\right)^{2q}\right] & \leq\esp\left[X_{j}^{2q}\right]\\
 & =\sum_{m\in\M}p_{m}\esp\left[X_{j}^{2q}|M=m\right]
\end{align*}

By assumption, $X_{j}|M=m$ is Gaussian, we denote by $e_{j,m}$ is
expectancy, we have $e_{j,m}^{2}\leq\gamma$, from this it follow
that
\begin{align*}
\esp\left[X_{j}^{2q}|M=m\right] & =\sum_{k=0}^{2q}\binom{2q}{k}\esp\left[\left(X_{j}-e_{j,m}\right)^{k}|M=m\right]e_{j,m}^{2q-k}\\
 & =\sum_{k=0}^{q}\binom{2q}{2k}\esp\left[\left(X_{j}-e_{j,m}\right)^{2k}|M=m\right]e_{j,m}^{2q-2k}\\
 & \leq\sum_{k=0}^{q}\binom{2q}{2k}\esp\left[\left(X_{j}-e_{j,m}\right)^{2k}|M=m\right]\mm^{2q-2k}\\
 & \leq\sum_{k=0}^{q}\binom{2q}{2k}\frac{(2k)!}{2^{k}k!}\mm^{2k}\mm^{2q-2k}\\
 & \leq\mm^{2q}(2q)!\sum_{k=0}^{q}\frac{1}{2^{k}}\\
 & \leq2\mm^{2q}(2q)!.
\end{align*}

Thus, 
\[
\esp\left[\left(X_{j}-\esp X_{j}\right)^{2q}\right]\leq\left(2\mm\right)^{2q}(2q)!.
\]
This conclude that $X_{j}-\esp X_{j}$ is $8\mm$-Sub-Gaussian.
\end{itemize}
\end{itemize}

\section{Proofs of the main result from \cref{sec:main_result_regression}}

\subsection{Intermediate results}
The following lemma allows to control a key quantity, that appears when we try to separate the frequent patterns among those that are less frequent.  
\begin{lemma}\label{lem:R_tau_d}
Let's define 
\[
R_{\tau,p}(n):=\esp\left[\sum_{m\in\M}p_{m}\left(\frac{\ind_{|E_{m}|>\tau}}{\left|E_{m}\right|}d+\ind_{|E_{m}|=0}\right)\right],
\]where $E_{m}=\left\{ i\in[n],M_{i}=m\right\} $. We have for $\tau\geq 1/n$:
\begin{equation}
    R_{\tau,p}(n)\leq 5\max\left(1,\frac{d}{n\tau}\right)\mathfrak{C}_p(\tau).
\end{equation}
\end{lemma}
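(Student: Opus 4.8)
The statement to prove is Lemma~\ref{lem:R_tau_d}, controlling
\[
R_{\tau,p}(n)=\esp\left[\sum_{m\in\M}p_{m}\left(\frac{\ind_{|E_{m}|>\tau}}{|E_{m}|}d+\ind_{|E_{m}|=0}\right)\right]
\]
by $5\max(1,\tfrac{d}{n\tau})\mathfrak{C}_p(\tau)$ for $\tau\ge 1/n$. The key observation is that each $|E_m|$ is a Binomial$(n,p_m)$ random variable, and the two terms inside the expectation are exactly the kind handled by Lemma~\ref{lem:inverse_bernoulli}: the term $d\,\ind_{|E_m|>0}/|E_m|$ is bounded in expectation by $2d/((n+1)p_m)$ via \eqref{eq:inverse_binome2}, and $\esp[\ind_{|E_m|=0}]=(1-p_m)^n$. (Note $\ind_{|E_m|>\tau}\le\ind_{|E_m|>0}$ since $\tau\ge 0$ and $|E_m|$ is integer-valued; in fact when $\tau\ge 1/n$ one can also keep the sharper $\ind_{|E_m|>n\tau}$, but $\ind_{|E_m|>0}$ suffices.) So the first step is to apply these bounds termwise and reduce to showing
\[
\sum_{m\in\M}\left(\frac{2d}{n+1}\wedge\text{(something)} + p_m(1-p_m)^n\right)\lesssim \max\!\left(1,\tfrac{d}{n\tau}\right)\mathfrak{C}_p(\tau),
\]
where $\mathfrak{C}_p(\tau)=\sum_m p_m\wedge\tau$.

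The heart of the argument is a pattern-by-pattern comparison: I would show that for each fixed $m$,
\[
p_m\wedge\frac{2d}{(n+1)} \quad\text{and}\quad p_m(1-p_m)^n
\]
are each at most a constant multiple of $\max(1,\tfrac{d}{n\tau})(p_m\wedge\tau)$. The natural split is on whether $p_m\le\tau$ or $p_m>\tau$. If $p_m\le\tau$: then $p_m\wedge\tau=p_m$; here $p_m(1-p_m)^n\le p_m$ trivially, and for the $1/|E_m|$-term one must observe that $\esp[\ind_{|E_m|>0}/|E_m|]\le\esp[\ind_{|E_m|>0}]\le n p_m$ (Markov / union bound on $|E_m|\ge 1$), so $d\cdot\esp[\ind_{|E_m|>0}/|E_m|]\le d n p_m\le \tfrac{d}{n\tau}\cdot n\tau\cdot\ldots$ — more carefully, $dnp_m = \tfrac{d}{n\tau}\,(n^2\tau)\,p_m$, which is not obviously $\lesssim \tfrac{d}{n\tau}p_m$. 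So I must instead use the finer bound $d\,\esp[\ind_{|E_m|>0}/|E_m|]\le \min\big(dnp_m,\ \tfrac{2d}{n+1}\big)$ (combining \eqref{eq:inverse_binome2} with the crude $\ind_{|E_m|>0}/|E_m|\le\ind_{|E_m|>0}$). When $p_m\le\tau\le 1/(n)$... actually $\tau\ge 1/n$, so $p_m\le\tau$ means $np_m\le n\tau$ could be large; the right move is $dnp_m\le \tfrac{d}{n\tau}\cdot n^2\tau p_m$ is bad, whereas $\tfrac{2d}{n+1}=\tfrac{2d}{n+1}\le \tfrac{2d}{n\tau}\cdot\tau\le\tfrac{2d}{n\tau}\cdot(p_m\vee\tau)$ — hmm, I want $p_m\wedge\tau=p_m$ on the right. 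The clean path: in the regime $p_m\le\tau$, bound $d\,\esp[\ind/|E_m|]\le \min(dnp_m,\tfrac{2d}{n+1})$; if $np_m\le 1$ use $dnp_m\le d\cdot(n\tau)\cdot\tfrac{p_m}{?}$... This is the delicate bookkeeping and is the main obstacle — see below. If $p_m>\tau$: then $p_m\wedge\tau=\tau$; the term $p_m(1-p_m)^n\le p_m(1-\tau)^n\le p_m e^{-n\tau}$, and since $n\tau\ge 1$ this is $\le p_m e^{-n\tau}$; I'd want $\le C\tau$ or $\le C\tfrac{d}{n\tau}\tau$; using $p_m\le 1$ and $e^{-n\tau}\le \tfrac{1}{n\tau}$ gives $p_m(1-p_m)^n\le\tfrac{1}{n\tau}\le\tfrac{d}{n\tau}=\tfrac{d}{n\tau}\cdot 1$, and since the number of patterns with $p_m>\tau$ is at most $1/\tau$, summing gives $\le\tfrac{1}{\tau}\cdot\tfrac{d}{n\tau}\cdot\tau = \tfrac{d}{n\tau}$ — but I need a factor $\mathfrak{C}_p(\tau)=\sum p_m\wedge\tau\ge\sum_{p_m>\tau}\tau$, so this is consistent. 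Meanwhile $d\,\esp[\ind/|E_m|]\le\tfrac{2d}{(n+1)p_m}\le\tfrac{2d}{(n+1)\tau}$, and summing over the $\le 1/\tau$ such patterns gives $\le\tfrac{2d}{(n+1)\tau^2}\lesssim\tfrac{d}{n\tau}\cdot\tfrac{1}{\tau}$; again comparing against $\mathfrak{C}_p(\tau)\ge(\#\{p_m>\tau\})\tau$ we'd need $\tfrac{d}{n\tau^2}\lesssim\tfrac{d}{n\tau}\cdot\tfrac{\mathfrak{C}_p(\tau)}{\tau}$, i.e. $\mathfrak{C}_p(\tau)\gtrsim\tau$, true whenever any pattern exceeds $\tau$. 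So the bookkeeping closes, but must be assembled carefully.

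Concretely, I would organize the proof as: (1) write $R_{\tau,p}(n)\le\sum_m\big(d\,\esp[\ind_{|E_m|>0}/|E_m|]+\esp[\ind_{|E_m|=0}]\big)$; (2) apply Lemma~\ref{lem:inverse_bernoulli}\eqref{eq:inverse_binome2} and the identity $\esp[\ind_{|E_m|=0}]=(1-p_m)^n\le p_m(1-p_m)^{n-1}\cdot\tfrac{1}{?}$... actually just $(1-p_m)^n$, and also crudely $(1-p_m)^n\le 1$ and $(1-p_m)^n\le e^{-np_m}$; (3) split the index set $\M=\{p_m\le\tau\}\sqcup\{p_m>\tau\}$ and, using $\ind_{|E_m|>0}/|E_m|\le 1$ together with \eqref{eq:inverse_binome2}, bound each group's contribution by a constant times $\max(1,\tfrac{d}{n\tau})$ times the corresponding piece of $\mathfrak{C}_p(\tau)=\sum_m p_m\wedge\tau$; (4) collect the constants and check they sum to at most $5$. \textbf{The main obstacle} is step (3) in the regime $p_m\le\tau$: one must play off the two available bounds $dnp_m$ and $\tfrac{2d}{n+1}$ on the $1/|E_m|$ term against the target $\tfrac{d}{n\tau}p_m$, which requires a sub-case on $np_m\lessgtr 1$ (or similar) and careful use of $\tau\ge 1/n$; this is where the precise constant $5$ and the appearance of the $\max(1,\tfrac{d}{n\tau})$ factor are forced. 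Everything else is routine application of the Binomial moment lemma already proved in the appendix.
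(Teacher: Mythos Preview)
Your overall architecture (split on $p_m\lessgtr\tau$, invoke Lemma~\ref{lem:inverse_bernoulli}) matches the paper's, but the ``main obstacle'' you flag in the $p_m\le\tau$ regime is real and your proposed workaround does not close. The root cause is that you are attacking a slightly wrong target.

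\textbf{The statement as written is not what the paper actually proves (and is in fact false).} The paper's own proof works from the first line with the quantity
\[
\esp\!\left[\sum_{m\in\M}p_m\!\left(\frac{\ind_{|E_m|>n\tau}}{|E_m|}\,d+\ind_{|E_m|\le n\tau}\right)\right],
\]
and this is also the quantity invoked in the proof of Theorem~\ref{thm:borne_reg_threhold}. So the threshold in the first indicator should be $n\tau$, not $\tau$, and the second indicator should be $\ind_{|E_m|\le n\tau}$, not $\ind_{|E_m|=0}$. With the statement read literally (so that for $\tau<1$ one has $\ind_{|E_m|>\tau}=\ind_{|E_m|>0}$, exactly the replacement you make), the bound fails: take $n=1000$, $d=100$, $\tau=0.9$, and $1000$ equiprobable patterns ($p_m=1/n$). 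Then $\max(1,d/(n\tau))=1$ and $\mathfrak{C}_p(\tau)=1$, so the right-hand side is $5$; but each $|E_m|$ is approximately $\mathrm{Poisson}(1)$, giving $d\,\esp[\ind_{|E_m|>0}/|E_m|]\approx 100\times 0.49\approx 49$, hence $R_{\tau,p}(n)\approx 49\gg 5$. Your parenthetical remark that ``$\ind_{|E_m|>0}$ suffices'' is therefore exactly the wrong intuition for the intended lemma.

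\textbf{The missing idea for the intended statement.} For the corrected quantity, the $p_m\le\tau$ regime needs \emph{no} probabilistic estimate at all. The paper uses the deterministic, pointwise bound
\[
\frac{d}{|E_m|}\,\ind_{|E_m|>n\tau}+\ind_{|E_m|\le n\tau}\ \le\ \max\!\left(1,\frac{d}{n\tau}\right),
\]
valid for every realisation: either $|E_m|>n\tau$ (first term $<d/(n\tau)$, second term $0$) or $|E_m|\le n\tau$ (first term $0$, second term $1$). Multiplying by $p_m$ and summing over $\{m:p_m\le\tau\}$ immediately yields $\max(1,d/(n\tau))\sum_{p_m\le\tau}p_m$, the matching piece of $\max(1,d/(n\tau))\,\mathfrak{C}_p(\tau)$. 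Your plan of relaxing $\ind_{|E_m|>n\tau}$ to $\ind_{|E_m|>0}$ discards precisely this leverage, which is why neither of your two candidate bounds $dnp_m$ or $2d/(n{+}1)$ can be made to fit. The binomial-moment lemma is used only on the $p_m>\tau$ side, essentially as you outline: $p_m\,\esp[d\,\ind_{|E_m|>0}/|E_m|]\le 2d/(n{+}1)$ via \eqref{eq:inverse_binome2}, and $p_m\,\P(|E_m|\le n\tau)$ is controlled by Markov on $\ind_{|E_m|>0}/|E_m|$ together with $\max_p p(1{-}p)^n\le 1/(n{+}1)$; summing over the $K_\tau$ such patterns and using $d/n\le\max(1,d/(n\tau))\,\tau$ and $1/n\le\tau$ gives $5\,\tau K_\tau$, completing $\mathfrak{C}_p(\tau)$.
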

\begin{proof}
 \begin{itemize}
    \item Case 1: if $\tau>1$ then $R_{\tau,p}\left(\tau\right)=1$ and $\mathfrak{C}_{p}\left(\tau\right)=1$
thus $$R_{\tau,p}(n)\leq5 \mathfrak{C}_{p}\left(\tau\right).$$
    \item Case 2: We suppose that $\tau\leq1$, we have
$\frac{d}{n\tau}=1$. We denote by $K_\tau$ the cardinal number of $\{m\in\M,p_{m}>\tau\}$,   

\begin{align*}
R_{\tau,p}(n) & \leq\esp\left[\sum_{m\in\M}p_{m}\left(\frac{\ind_{|E_{m}|>n\tau}}{|E_{m}|}d+\ind_{|E_{m}|<n\tau}\right)\right]\\
 & =\esp\left[\sum_{m:p_{m}>\tau}p_{m}\left(\frac{\ind_{|E_{m}|>n\tau}}{|E_{m}|}d+\ind_{|E_{m}|<n\tau}\right)\right]+\esp\left[\sum_{m:p_{m}\leq\tau}p_{m}\left(\frac{\ind_{|E_{m}|>n\tau}}{|E_{m}|}d+\ind_{|E_{m}|<n\tau}\right)\right].
\end{align*}
Using that $\frac{d}{n\tau}=1$, we have $\frac{\ind_{|E_{m}|>n\tau}}{|E_{m}|}d+\ind_{|E_{m}|<n\tau}\leq \max\left(1,\frac{d}{n\tau}\right)$. Thus, 

\begin{align}
R_{\tau,p}(n) & \leq\esp\left[\sum_{m:p_{m}>\tau}p_{m}\left(\frac{\ind_{|E_{m}|>n\tau}}{|E_{m}|}d+\ind_{|E_{m}|\leq n\tau}\right)\right]+\max\left(1,\frac{d}{n\tau}\right)\sum_{m:p_{m}\leq\tau}p_{m}\nonumber \\
 & \leq\esp\left[\sum_{m:p_{m}>\tau}p_{m}\frac{\ind_{|E_{m}|>n\tau}}{|E_{m}|}d\right]+\sum_{m:p_{m}>\tau}p_{m}\P\left(|E_{m}|\leq n\tau\right)+\max\left(1,\frac{d}{n\tau}\right)\sum_{m:p_{m}\leq\tau}p_{m}.\label{eq:preuve_dn_1}
\end{align}
\begin{itemize}
    \item First term: We use ~\cref{lem:inverse_bernoulli}, $\esp\left[\frac{\ind_{|E_{m}|>1}}{|E_{m}|}\right]\leq\frac{2}{p_{m}(n+1)}$
because $\left|E_{m}\right|\sim\mathcal{B}(n,p_{m}).$
\begin{equation}
\esp\left[\sum_{m:p_{m}>\tau}p_{m}\frac{\ind_{|E_{m}|>n\tau}}{|E_{m}|}d\right]\leq2K_{\tau}\frac{d}{n}\leq2K_{\tau}\tau.\label{eq:preuve_dn_2}
\end{equation}
\item Second term: 
\begin{align}
\sum_{m:p_{m}>\tau}p_{m}\P\left(|E_{m}|\leq n\tau\right) & \leq\sum_{m:p_{m}>\tau}p_{m}\left(\P\left(\left|E_{m}\right|=0\right)+\P\left(\frac{\ind_{\left|E_{m}\right|>0}}{\left|E_{m}\right|}\geq1/\tau n\right)\right)\nonumber \\
 & \leq\sum_{m:p_{m}>\tau}p_{m}\left((1-p_{m})^{n}+\tau n\esp\left[\frac{\ind_{\left|E_{m}\right|>0}}{\left|E_{m}\right|}\right]\right) \qquad  \text{(Markov)}\nonumber \\
 & \leq\sum_{m:p_{m}>\tau}p_{m}(1-p_{m})^{n}+\sum_{m:p_{m}>\tau}p_{m}\tau n\frac{2}{p_{m}(n+1)}\nonumber \\
 & \leq\frac{K_{\tau}}{n}+2\tau K_{\tau}. \qquad \qquad \quad  \text{by optimizing }\nonumber \\
 & \leq3\tau K_{\tau} \qquad \qquad \qquad \qquad \text{because } 1/n\leq\tau.
 \label{eq:preuve_dn_3}
\end{align}
\end{itemize}

Combining (\ref{eq:preuve_dn_1}),(\ref{eq:preuve_dn_2}) and (\ref{eq:preuve_dn_3}),
we find 
\[
R_{\tau,p}(n)\leq5\tau K_{\tau}+\max\left(1,\frac{d}{n\tau}\right)\sum_{m:p_{m}\leq\tau}p_{m}\leq5\max\left(1,\frac{d}{n\tau}\right)\mathfrak{C}_{p}(\tau).
\]
\end{itemize}
\end{proof}

The next lemma is particularly useful to show that the optimal threshold is $\tau=d/n$.
\begin{lemma}\label{lem:tau_opti}
For all $\tau>0$,
\begin{equation*}
    \mathfrak{C}_{p}\left(\frac{d}{n}\right)\leq \max\left(1,\frac{d}{n\tau}\right)\mathfrak{C}_{p}\left(\tau\right).
\end{equation*}
\end{lemma}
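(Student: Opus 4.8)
The plan is to work directly from the definition $\mathfrak{C}_p(\tau) = \sum_{m\in\M} (p_m \wedge \tau)$ and compare the two sums termwise, possibly after splitting into cases according to whether $\tau \ge d/n$ or $\tau < d/n$. First I would observe that the inequality is trivial when $\tau \ge d/n$: in that case $\max(1, d/(n\tau)) = 1$, and since $\tau \mapsto p_m \wedge \tau$ is nondecreasing in $\tau$ for each fixed $m$, we get $p_m \wedge (d/n) \le p_m \wedge \tau$ termwise, hence $\mathfrak{C}_p(d/n) \le \mathfrak{C}_p(\tau) = \max(1,d/(n\tau))\,\mathfrak{C}_p(\tau)$.

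The substantive case is $\tau < d/n$, where $\max(1,d/(n\tau)) = d/(n\tau)$ and we must show $\mathfrak{C}_p(d/n) \le \frac{d}{n\tau}\,\mathfrak{C}_p(\tau)$. Here I would again argue termwise: it suffices to check that for each $m$,
\[
p_m \wedge \frac{d}{n} \;\le\; \frac{d}{n\tau}\,\Bigl(p_m \wedge \tau\Bigr).
\]
To see this, distinguish the two possibilities for the right-hand minimum. If $p_m \le \tau$, the right side is $\frac{d}{n\tau}\,p_m \ge p_m \ge p_m \wedge \frac{d}{n}$ (using $\frac{d}{n\tau}\ge 1$). If $p_m > \tau$, the right side equals $\frac{d}{n\tau}\cdot\tau = \frac{d}{n} \ge p_m \wedge \frac{d}{n}$. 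In both subcases the termwise bound holds, and summing over $m\in\M$ gives the claim.

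I do not anticipate a genuine obstacle here — the statement reduces to an elementary pointwise inequality about the function $t\mapsto t\wedge c$ scaled by a constant $\ge 1$. The only mild care needed is the case split (on $\tau$ versus $d/n$, and then on $p_m$ versus $\tau$) and making sure the factor $\frac{d}{n\tau}\ge 1$ is used in the right place; everything else is monotonicity of the minimum and linearity of the sum.
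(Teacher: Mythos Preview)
Your proof is correct and follows essentially the same idea as the paper's: a termwise comparison using that $p_m\wedge(\lambda\tau)\le\lambda(p_m\wedge\tau)$ whenever $\lambda\ge1$. The only cosmetic difference is that the paper avoids the outer case split by writing $\frac{d}{n}=\frac{d}{n\tau}\cdot\tau\le\max\!\left(1,\frac{d}{n\tau}\right)\tau$ and then pulling out the factor $\max(1,d/(n\tau))\ge1$ in one stroke, whereas you separate the cases $\tau\ge d/n$ and $\tau<d/n$ explicitly; both arrive at the same termwise bound.
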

\begin{proof}
Remark that for any $\tau$, 
\begin{align*}\label{eq:comparison_tau}
\mathfrak{C}_{p}\left(\frac{d}{n}\right)
&=\sum_{m\in\M}p_m\wedge \frac{d}{n}
=\sum_{m\in\M}p_m\wedge \left(\frac{d}{n\tau} \tau\right) \\
&\leq \sum_{m\in\M}p_m\wedge\left( \max\left(1,\frac{d}{n\tau}\right) \tau \right)
\\
&\leq \max\left(1,\frac{d}{n\tau}\right)\mathfrak{C}_{p}\left(\tau\right).
\end{align*}

Using that $\max\left(1,\frac{d}{n\tau}\right)\geq 1$.
\end{proof}

\subsection{Proof of~\cref{thm:borne_reg_threhold}}\label{proof_of_reg_tau}

\begin{proof}
In this proof we use same notations and some results of the proof of~\cref{thm:Risque_Complet}. 
We consider on each $m\in\M$, $\K_{m,D}:=\left\{ x_{\rm{obs}(m)},\left\Vert x_{\rm{obs}(m)}\right\Vert _{\infty}\leq D\right\} $. From \eqref{eq:def_L}, we have for $L=(D+1)B$
\begin{equation}
    \forall x\in\K_{m,D},\qquad |f_m^\star(x)|\leq L.
\end{equation}

Let's begin by a decomposition of excess risk. 
\begin{align*}
\esp\left[\left(T_{L}\widehat{f}^{(\tau)}(Z)-f^{\star}(Z)\right)^{2}\right] & =\sum_{m\in\M}p_{m}\esp\left[\left(T_{L}\widehat{f}^{(\tau)}(Z)-f^{\star}(Z)\right)^{2}|M=m\right]\\
 & =\sum_{m\in\M}p_{m}\esp\left[\left(T_{L}\widehat{f}_{m}^{(\tau)}(X_{\rm{obs}(m)})-f_{m}^{\star}(X_{\rm{obs}(m)})\right)^{2}|M=m\right],
\end{align*}
where $f_{m}^{\star}(X_{\rm{obs}(m)}):=\widetilde{f}_{m}\left(X_{\rm{obs}(m)}\right)\ind_{\widehat{p}_{m}>\tau}$.
Using~\cref{cor:gyorfi} on each $m$ with $$\K=\K_{m,D}=\left\{ x_{\rm{obs}(m)},\left\Vert x_{\rm{obs}(m)}\right\Vert _{\infty}\leq D\right\},$$ we have

\begin{equation*}
    \begin{split}
        \esp\left[\left(T_{L}\widehat{f}_{m}^{(\tau)}(X_{\rm{obs}(m)})-f_{m}^{\star}(X_{\rm{obs}(m)})\right)^{2}|M=m,E_{m}\right]\leq\ind_{|E_{m}|\leq\tau n}\esp\left[f_{m}^{\star}(X_{\rm{obs}(m)})^{2}|M=m\right]\\+c\sigma_{\text{na}}^{2}\vee L^{2}\frac{d}{|E_{m}|}\ind_{|E_{m}|>\tau n}\\+8\text{Approx}\left(f_{m}^{\star},\F_{m}\right)+R_{\K_{m,D}}.
        \end{split}
\end{equation*}

We split the first term: 
\begin{equation}
\begin{split}
    \ind_{|E_{m}|\leq\tau n}\esp\left[f_{m}^{\star}(X_{\rm{obs}(m)})^{2}|M=m\right]  =\ind_{|E_{m}|\leq\tau n}\esp\left[\ind_{\left\Vert x_{\rm{obs}(m)}\right\Vert _{\infty}\leq D}f_{m}^{\star}(X_{\rm{obs}(m)})^{2}|M=m\right]\\+\ind_{|E_{m}|\leq\tau n}\esp\left[\ind_{\left\Vert x_{\rm{obs}(m)}\right\Vert _{\infty}>D}f_{m}^{\star}(X_{\rm{obs}(m)})^{2}|M=m\right].
    \end{split}
\end{equation}
If $X_{\rm{obs}(m)}\in \K_{m,D}$ then $f_{m}^{\star}(X_{\rm{obs}(m)})^{2}\leq L^2$ and the second term is smaller than $R_{ \K_{m,D}}$. Thus, 
\begin{equation}
    \ind_{|E_{m}|\leq\tau n}\esp\left[f_{m}^{\star}(X_{\rm{obs}(m)})^{2}|M=m\right]\leq\ind_{|E_{m}|\leq\tau n}L^{2}+R_{\K_{m,D}}.
\end{equation}

By combining, 

\begin{equation*}
    \begin{split}
        \esp\left[\left(T_{L}\widehat{f}_{m}^{(\tau)}(X_{\rm{obs}(m)})-f_{m}^{\star}(X_{\rm{obs}(m)})\right)^{2}|M=m,E_{m}\right]\leq c\sigma_{\text{na}}^{2}\vee L^{2}\left(\frac{d}{|E_{m}|}\ind_{|E_{m}|>\tau n}+\ind_{|E_{m}|\leq\tau n}\right)\\+8\text{Approx}\left(f_{m}^{\star},\F_{m}\right)+2R_{\K_{m,D}}.
    \end{split}
\end{equation*}

By summing and taking expectation, we obtain
\begin{equation}
\begin{split}
\esp\left[\left(T_{L}\widehat{f}^{(\tau)}(Z)-f^{\star}(Z)\right)^{2}\right]=c\sigma_{\text{na}}^{2}\vee L^{2}\sum_{m\in\M}p_{m}\esp\left[\left(\frac{d}{|E_{m}|}\ind_{|E_{m}|>\tau n}+\ind_{|E_{m}|\leq\tau n}\right)\right]\\+8 A_{\F_b}+2R_{\K_{D}}.\label{eq:reg_tau_proove_decompo}
\end{split}
\end{equation}
We have used that $\sum_{m\in\M}p_{m}\text{Approx}\left(f_{m}^{\star},\F_{m}\right)= A_{\F_b}$
and $\sum_{m\in\M}p_{m}R_{\K_{m,D}}=R_{\K_{D}}.$ From Lemma \ref{lem:R_tau_d} we have 
\begin{equation*}
    \sum_{m\in\M}p_{m}\esp\left[\left(\frac{d}{|E_{m}|}\ind_{|E_{m}|>\tau n}+\ind_{|E_{m}|\leq\tau n}\right)\right] \leq 5\max\left(1,\frac{d}{n\tau}\right)\mathfrak{C}_p(\tau).
\end{equation*}
We recall that we
have from \eqref{eq:R_KD_bound} and \eqref{ineq:Evsconstante}:
\begin{equation}
R_{\K_{D}}\leq64L^{2}\frac{d}{n}\leq64L^{2}\mathfrak{C}_p(d/n)\leq 64L^{2} \max\left(1,\frac{d}{n\tau}\right)\mathfrak{C}_p(\tau),\label{eq:bound_RK_tau}
\end{equation}
using \cref{lem:tau_opti}. This concludes on \eqref{eq:reg_bound_tau1}.

The optimal choice of $\tau$ to minimize the upper bound \eqref{eq:reg_bound_tau1} is $\tau=d/n$, by a direct application of \cref{lem:tau_opti}. 
\end{proof}

\section{ Properties of $\mathfrak{C}_p$ and examples}\label{app:complexity}

\subsection{Insight on $\mathfrak{C}_{p}$}
\label{subsec:insight_complexity}
In this section, we will enumerate a number of results on $\mathfrak{C}_{p}$. In particular, thanks to the link with the notion of entropy, and the properties linking structure and complexity of distribution $p$, we can deal with examples such as the homogeneous and heterogeneous Bernoulli Model.  
\subsubsection{Link with entropies}

Computing $\mathfrak{C}_{p}\left(\frac{d}{n}\right)$  explicitly can be tricky and requires the knowledge of the distribution $p$ of the missing data patterns. The purpose of the following development is to control this complexity with generic bounds. 

\begin{definition}
Let $b>0$, let $\mathcal{P}_b(\M)$ be the set of $p\in\mathcal{P}(\M)$ such that for all $m\in\M$, $p_{m}\leq b$.
We define $\mathcal{G}_b$ the set of  function $g:(1/b,+\infty)\longrightarrow\R_{+}^\star$ such
that 
\begin{align*}
\text{\ensuremath{\left(G_{1}\right)}}:\qquad   & x\longmapsto g(x) &  \text{\ensuremath{\text{is non decreasing}}}\\
\left(G_{2}\right): \qquad   & x\longmapsto xg(1/x) &  \text{is non decreasing}.
\end{align*}
And, for all $p\in\mathcal{P}_b(\M)$, set
\[
H_{g}(p):=\sum_{m\in\M}p_{m}g(1/p_{m}).
\]
\end{definition}

Depending on the choice of $g$, the quantities $H_{g}(p)$ can convey some characteristics of the distribution $p$. For example, if $g=\text{id}$, $H_{g}(p)$ falls down to the cardinal of the support.  If now $g=\log$, this leads to the standard Shannon entropy.  Note that if we rewrite \eqref{def:E} as 
\begin{equation}\label{def:E_inv}
\forall\tau\in(0,1),\quad\mathfrak{C}_{p}\left(\tau\right):=\sum_{m\in\M}p_{m}\min\left(1,\frac{\tau}{p_m}\right),
\end{equation}
then,  $\mathfrak{C}_{p}(\tau)=H_{g}(p)$ for $g(x)=\min(1,\tau x)$. This gives the intuition of the following result. 

\begin{theorem}\label{thm:inf_form}
Let $b>0$, for all $p\in\mathcal{P}_b(\M)$ and $\tau\in(0,b),$ 
\begin{equation}
\mathfrak{C}_{p}\left(\tau\right)=\inf_{g\in\mathcal{G}_{b}}\frac{H_{g}(p)}{g(1/\tau)}.
\end{equation}
\end{theorem}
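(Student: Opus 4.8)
The plan is to prove the two inequalities that pin down the infimum. For the easy direction, $\mathfrak{C}_p(\tau) \le \inf_{g \in \mathcal{G}_b} H_g(p)/g(1/\tau)$, I would exhibit a single $g$ that achieves equality. As the text already observes via \eqref{def:E_inv}, taking $g_\tau(x) := \min(1, \tau x)$ gives $H_{g_\tau}(p) = \mathfrak{C}_p(\tau)$, and $g_\tau(1/\tau) = \min(1, \tau/\tau) = 1$, so $H_{g_\tau}(p)/g_\tau(1/\tau) = \mathfrak{C}_p(\tau)$. So I would first verify that $g_\tau \in \mathcal{G}_b$: the map $x \mapsto \min(1, \tau x)$ is non-decreasing on $(1/b, +\infty)$, giving $(G_1)$; and $x \mapsto x \, g_\tau(1/x) = x \min(1, \tau/x) = \min(x, \tau)$ is non-decreasing, giving $(G_2)$. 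Since $g_\tau$ is an admissible candidate realizing value $\mathfrak{C}_p(\tau)$, the infimum is at most $\mathfrak{C}_p(\tau)$.

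For the reverse inequality, $\mathfrak{C}_p(\tau) \le H_g(p)/g(1/\tau)$ for \emph{every} $g \in \mathcal{G}_b$, I would argue termwise. Fix $g \in \mathcal{G}_b$ and $m \in \M$; it suffices to show
\begin{equation*}
p_m \min\!\left(1, \frac{\tau}{p_m}\right) \;\le\; \frac{p_m \, g(1/p_m)}{g(1/\tau)},
\end{equation*}
which after summing over $m$ and using \eqref{def:E_inv} yields the claim. I would split on whether $p_m \le \tau$ or $p_m > \tau$. If $p_m \le \tau$, then $1/p_m \ge 1/\tau$, and property $(G_1)$ (monotonicity of $g$) gives $g(1/p_m) \ge g(1/\tau)$; since the left side equals $p_m \cdot 1$ here, the inequality reduces to $g(1/\tau) \le g(1/p_m)$, which holds. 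If $p_m > \tau$, then the left side is $p_m \cdot (\tau/p_m) = \tau$, and I must show $\tau \, g(1/\tau) \le p_m \, g(1/p_m)$. Writing $\phi(x) := x \, g(1/x)$, this is exactly $\phi(\tau) \le \phi(p_m)$, which follows from property $(G_2)$ (monotonicity of $x \mapsto x\,g(1/x)$) since $\tau < p_m$. In both cases the termwise bound holds, so summing gives $\mathfrak{C}_p(\tau) \le H_g(p)/g(1/\tau)$.

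Combining the two directions, the infimum over $\mathcal{G}_b$ equals $\mathfrak{C}_p(\tau)$ and is attained at $g_\tau$. The main subtlety, and the step worth stating carefully, is ensuring that the two defining properties $(G_1)$ and $(G_2)$ are exactly what is needed in the two cases of the termwise comparison: $(G_1)$ handles the regime $p_m \le \tau$ and $(G_2)$ handles $p_m > \tau$. There is no hard analytic obstacle here; the only point requiring care is the domain condition $\tau \in (0,b)$ and $p \in \mathcal{P}_b(\M)$, which guarantees that all arguments $1/p_m$ and $1/\tau$ lie in $(1/b, +\infty)$ where $g$ is defined, so the monotonicity properties are applicable.
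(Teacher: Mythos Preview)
Your proof is correct and follows essentially the same approach as the paper: the paper isolates the termwise inequality $\min(p,\tau)\le pg(1/p)/g(1/\tau)$ as a lemma (proved by the same two-case split on $p\lessgtr\tau$ using $(G_1)$ and $(G_2)$), then sums over $m$, while the attainment at $g_\tau(x)=\min(1,\tau x)$ is noted in the text preceding the theorem. Your write-up is slightly more explicit in checking that $g_\tau\in\mathcal{G}_b$ and in flagging the domain condition, but the argument is the same.
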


The reformulation of $\mathfrak{C}_p$ provided by Theorem \ref{thm:inf_form} gives us a great diversity of possible upper bounds on $\mathfrak{C}_{p}$. 
The following table presents different upper bounds obtained for different choices of functions.  
\begin{table}[h]
    \centering
    \begin{tabular}{l l l l}
\hline 
Name & $g$ & $\mathfrak{C}_{p}\left(\tau\right)$ upper bound & Related entropy\tabularnewline
\hline 
Cardinal (or Hartley) & $g(x)=x$ & $\text{card}\left(\M\right)\tau$ & $\text{Ent}_{1}(p)=\log\left(\text{card}\left(\M\right)\right)$\tabularnewline
Shannon & $g(x)=\log x$ & $\frac{\text{Ent}_{0}(p)}{\log(1/\tau)}$ & $\text{Ent}_{0}(p)=\sum p_{m}\log\left(1/p_{m}\right)$\tabularnewline
$\alpha-$Renyi  & $g(x)=x^{1-\alpha}$ & $\left(\tau e^{\text{Ent}_{\alpha}(p)}\right)^{1-\alpha}$ & $\text{Ent}_{\alpha}(p)=\frac{1}{1-\alpha}\log\sum p_{m}^{\alpha}$\tabularnewline
$\alpha-$Bertrand & $g(x)=x^{1-\alpha}\log^{\alpha}(x)$ & $\frac{\tau^{1-\alpha}}{\log^{\alpha}(1/\tau)}\sum_{m\in\M}\left(p_{m}\log\left(1/p_{m}\right)\right)^{\alpha}$ & Na\tabularnewline
\hline 
\end{tabular}
    \caption{Upper bounds on $\mathfrak{C}_{p}\left(\tau\right)$. Note that the parameter $\alpha$ is in $(0,1)$ and that Shannon and Bertrand's upper bounds are verified only for $p\in\mathcal{P}_{1/e}$.}
    \label{tab:EvsEntropy}
\end{table}

The bound based on the cardinality of $\mathcal{M}$ is a classical one and suffers from the curse of dimensionality when the cardinality is too large. The Shannon bound is cardinal-free and adapts with the entropy of $p$. Therefore, even if the cardinal scales exponentially in the dimension, when the entropy is low, the corresponding bound is more relevant than the classical bound, all the more so as when $\tau$ is large (this dependence being only logarithmic). The Renyi bound, obtained with $g(x)=x^{1-\alpha}$, is a good compromise between the two previous ones: it is smaller than the cardinal for large $\tau$ and decreases rapidly as $\tau$ decreases. 
\begin{remark}[Rényi entropy]
Depending on the considered $\tau$, upper-bounds provided in Table \ref{tab:EvsEntropy} may be more or less relevant. 
First, 
note that the first three upper bounds (Hartley-Shannon-Renyi) of Table \ref{tab:EvsEntropy} are informative, i.e.\ strictly less than 1, if and only if
\begin{equation}
\label{eq:tau_entropy}
    \tau< e^{-\text{Ent}_\alpha(p)},
\end{equation}
for $\alpha\in[0,1]$, where Rényi's entropy is defined by
\begin{equation}
\text{Ent}_{\alpha}\left(p\right):=\frac{1}{1-\alpha}\log\left(\sum_{m\in\M}p_{m}^{\alpha}\right).
\label{eq:def:Renyi}
\end{equation}
Note that Shannon and Hartley's entropies can be reformulated as limiting cases of Renyi's entropy \citep{renyi1961measures} when $\alpha=1$ and $\alpha =0$. 
Note also that all of these entropies are one when the distribution $p$ of the missing patterns is uniform. As soon as the latter is non-uniform, different regimes for these entropies can be identified. Indeed, 
for very small $\tau$ (less than $\min_m p_m$), Hartley's bound (i.e.\ the cardinal-type bound) is the lowest one. For larger $\tau$, Rényi's bound is bounded from above by Hartley's one (i.e.\ the cardinal-type bound) and from below by Shannon's one.
Furthermore, remark that Renyi's Entropy is non-increasing in $\alpha$ (see \cite{renyi1961measures}), so given \eqref{eq:tau_entropy}, as $\tau$ decreases, the Shannon's bound is the first one to be informative (less than 1), followed by Rényi's one, in turn, followed by Hartley's one.
The advantage of an entropic form is that you can use the additivity property which is very useful for dealing with examples. 
\end{remark}
\begin{remark}
A number of properties other than \cref{thm:inf_form} are very useful for dealing with certain distributions that have a particular structure (for example defined as a tensor product). 
\end{remark}

The proof of \cref{thm:inf_form} is based on the following lemma. 
\begin{lemma}\label{lemma:minUpperBound}
Let $b>0$, $\tau,p\in (0,b)$ and $g\in\mathcal{G}_b$, one has 
\begin{equation}
    \min(p,\tau)\leq \frac{pg(1/p)}{g(1/\tau)}. 
\end{equation}
\end{lemma}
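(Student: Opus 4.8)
The plan is to prove the inequality $\min(p,\tau)\le \frac{p\,g(1/p)}{g(1/\tau)}$ by splitting into the two cases determined by which of $p$ and $\tau$ is smaller, and in each case invoking exactly one of the two monotonicity hypotheses $(G_1)$, $(G_2)$ defining the class $\mathcal G_b$. First I would record that all arguments $1/p$, $1/\tau$ lie in the domain $(1/b,+\infty)$ of $g$, which is guaranteed since $p,\tau\in(0,b)$, and that $g$ takes values in $\R_+^\star$, so dividing by $g(1/\tau)$ is legitimate and preserves the inequality direction.

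\emph{Case $\tau\le p$.} Here $\min(p,\tau)=\tau$, so the claim becomes $\tau\, g(1/\tau)\le p\,g(1/p)$. Writing $\phi(x):=x\,g(1/x)$, this is exactly $\phi(\tau)\le\phi(p)$, which follows from $(G_2)$ (the map $x\mapsto x\,g(1/x)$ is non-decreasing) together with $\tau\le p$.

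\emph{Case $p\le \tau$.} Here $\min(p,\tau)=p$, so the claim becomes $p\,g(1/\tau)\le p\,g(1/p)$, i.e. (dividing by $p>0$) $g(1/\tau)\le g(1/p)$. Since $p\le\tau$ implies $1/\tau\le 1/p$, this is immediate from $(G_1)$ (the map $x\mapsto g(x)$ is non-decreasing). Combining the two cases gives the lemma.

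There is essentially no obstacle here: the statement is designed precisely so that $(G_1)$ handles one regime and $(G_2)$ the other. The only point requiring a little care is bookkeeping the domain and positivity of $g$ so that the rearrangements (multiplying/dividing by $g(1/\tau)$ and by $p$) are valid and sign-preserving; once that is noted, each case is a one-line application of monotonicity. This lemma then feeds into the proof of \cref{thm:inf_form} by summing $\min(p_m,\tau)\le p_m g(1/p_m)/g(1/\tau)$ over $m\in\M$ to get $\mathfrak C_p(\tau)\le H_g(p)/g(1/\tau)$ for every $g\in\mathcal G_b$, the reverse inequality coming from the explicit choice $g(x)=\min(1,\tau x)$.
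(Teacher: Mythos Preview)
Your proof is correct and follows essentially the same approach as the paper: split into the two cases $p\le\tau$ and $\tau\le p$, applying $(G_1)$ in the former and $(G_2)$ in the latter. The only cosmetic difference is the order in which the cases are presented; your added remarks on the domain of $g$ and positivity (to justify dividing by $g(1/\tau)$) make the argument slightly more explicit than the paper's version.
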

\begin{proof} \hfill \\
 \begin{itemize}
 \item If $p< \tau$,
     \begin{align*}
\min(\tau,p) & =p\\
 & =\frac{pg(1/p)}{g(1/p)}\\
 & \leq\frac{pg(1/p)}{g(1/\tau)} & \text{using that }1/\tau<1/p\text{ and condition }(G_{1}).
\end{align*}
 \item If $\tau\leq p$,
 \begin{align*}
\min(\tau,p) & =\tau\\
 & =\frac{\tau g(1/\tau)}{g(1/\tau)}\\
 & \leq\frac{pg(1/p)}{g(1/\tau)} & \text{using that }\tau\leq p\text{ and condition }(G_{2}).
\end{align*}
 \end{itemize}
\end{proof}
\cref{thm:inf_form} is just an application of this lemma for each term of $\mathfrak{C}_p(\tau)$ in \eqref{def:E}.

\subsubsection{Some properties of $\mathfrak{C}_{p}\left(\tau\right)$}

\begin{propo}\label{tb:Emisc}
\hfill \\
\begin{enumerate}
\item $\mathfrak{C}_{p}$ is non-decreasing, concave, and for all $\lambda>1$
and $\tau\in(0,1)$:
\begin{equation}\label{ineq:Evsconstante}
  \mathfrak{C}_{p}\left(\lambda\tau\right)\leq\lambda\mathfrak{C}_{p}\left(\tau\right).  
\end{equation}
\item For $\tau\in(0,1)$:
\begin{equation}\label{eq:lowerE_p}
    \tau\leq \mathfrak{C}_p(\tau). 
\end{equation}

\item Let $p,q$ be two distributions with countable supports, for all $\tau\in(0,1)$,
\begin{equation}
\mathfrak{C}_{p\otimes q}\left(\tau\right)\leq\mathfrak{C}_{p}\left(\mathfrak{C}_{q}\left(\tau\right)\right).\label{ineq:Evstensorproduct}
\end{equation}
\item ``Data processing inequality'': Let $f:\M\longrightarrow\M'$, we
denote by $p^{f}$ the distribution of $f(M)$ when $M\sim p.$ We
have for all $\tau\in(0,1]$
\begin{equation}
\mathfrak{C}_{p^{f}}\left(\tau\right)\leq\mathfrak{C}_{p}\left(\tau\right).\label{ineq:Data_proc}
\end{equation}

\end{enumerate}
\end{propo}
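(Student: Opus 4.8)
The plan is to reduce all four statements to elementary properties of the scalar function $\phi_\tau:\R_{+}\to\R_{+}$, $\phi_\tau(x)=x\wedge\tau$, keeping in mind the representation $\mathfrak{C}_p(\tau)=\sum_{m\in\M}\phi_\tau(p_m)$. Since $\phi_\tau=\min(\mathrm{id},\tau)$ is a minimum of two affine maps, it is concave, non-decreasing, satisfies $\phi_\tau(0)=0$ and $\phi_\tau\le\tau$ pointwise; summing over $m$ transfers concavity and monotonicity to $\tau\mapsto\mathfrak{C}_p(\tau)$ (the series converges since it is bounded by $\sum_m p_m=1$), which gives the first half of item~1. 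For the bound $\mathfrak{C}_p(\lambda\tau)\le\lambda\mathfrak{C}_p(\tau)$ with $\lambda>1$, I would establish the termwise inequality $p_m\wedge\lambda\tau\le\lambda\,(p_m\wedge\tau)$ by splitting into the cases $p_m\le\tau$, $\tau<p_m\le\lambda\tau$, $p_m>\lambda\tau$ (equivalently, invoke concavity of $\phi_\tau$ together with $\phi_\tau(0)=0$ by writing $\tau=\lambda^{-1}(\lambda\tau)+(1-\lambda^{-1})\cdot 0$). Item~2 is then a two-line case distinction: if some $p_{m_0}\ge\tau$ that single summand already equals $\tau$; otherwise every summand equals $p_m$ and $\mathfrak{C}_p(\tau)=\sum_m p_m=1\ge\tau$.

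For item~3 the crucial observation is the exact factorization, valid for every $m$ with $p_m>0$,
\[
\sum_{m'}(p_m q_{m'})\wedge\tau \;=\; p_m\sum_{m'}\Big(q_{m'}\wedge\tfrac{\tau}{p_m}\Big)\;=\;p_m\,\mathfrak{C}_q\!\left(\tfrac{\tau}{p_m}\right),
\]
while indices $m$ with $p_m=0$ contribute nothing to either side; hence $\mathfrak{C}_{p\otimes q}(\tau)=\sum_{m:p_m>0}p_m\,\mathfrak{C}_q(\tau/p_m)$ (countability of the supports is only needed to make the double sum well defined and to reorder it, all terms being nonnegative). I would then bound each summand in two ways: on one hand $\mathfrak{C}_q\le 1$ gives $p_m\,\mathfrak{C}_q(\tau/p_m)\le p_m$; on the other hand the sublinearity just proven in item~1, applied with $\lambda=1/p_m\ge 1$ (the case $p_m=1$ being immediate since then $\mathfrak{C}_q\le1$), gives $\mathfrak{C}_q(\tau/p_m)\le p_m^{-1}\mathfrak{C}_q(\tau)$, i.e.\ $p_m\,\mathfrak{C}_q(\tau/p_m)\le\mathfrak{C}_q(\tau)$. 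Combining the two, $p_m\,\mathfrak{C}_q(\tau/p_m)\le p_m\wedge\mathfrak{C}_q(\tau)$, and summing over $m$ yields exactly $\mathfrak{C}_p(\mathfrak{C}_q(\tau))$.

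Finally, item~4 comes from subadditivity of $\phi_\tau$ on $\R_{+}$, namely $\big(\sum_i a_i\big)\wedge\tau\le\sum_i(a_i\wedge\tau)$ for $a_i\ge 0$ (a consequence of concavity with $\phi_\tau(0)=0$, or checked directly according to whether some $a_i$ exceeds $\tau$). Applying it to the family $\{p_m:m\in f^{-1}(m')\}$ for each fixed $m'\in\M'$ and summing over $m'$,
\[
\mathfrak{C}_{p^f}(\tau)=\sum_{m'\in\M'}\Big(\sum_{m\in f^{-1}(m')}p_m\Big)\wedge\tau\;\le\;\sum_{m'\in\M'}\sum_{m\in f^{-1}(m')}(p_m\wedge\tau)=\sum_{m\in\M}p_m\wedge\tau=\mathfrak{C}_p(\tau).
\]
None of the steps is genuinely delicate; the one point deserving attention is the rescaling in item~3, where the inequality from item~1 is used with $\lambda=1/p_m$ that may be arbitrarily large — but that inequality holds termwise for every $\lambda\ge 1$ with no upper restriction on $\lambda\tau$, so there is no obstruction.
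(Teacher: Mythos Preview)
Your proof is correct and close in spirit to the paper's, but two items are argued along slightly different lines that are worth noting. For item~2, the paper does not do a case split; instead it reuses item~1 with $\lambda=1/\tau>1$, giving $1=\mathfrak{C}_p(1)=\mathfrak{C}_p(\tau\cdot 1/\tau)\le \tau^{-1}\mathfrak{C}_p(\tau)$ in one line. For item~3, the paper factors out $q_{m'}$ rather than $p_m$: it writes $\sum_{m'}q_{m'}\min\bigl(p_m,\min(\tau/q_{m'},1)\bigr)$ (the inner clip at $1$ being harmless since $p_m\le 1$) and then applies Jensen to the concave map $x\mapsto \min(p_m,x)$ with weights $q_{m'}$, obtaining directly $\min\bigl(p_m,\mathfrak{C}_q(\tau)\bigr)$. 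Your route---factoring out $p_m$ to get $p_m\,\mathfrak{C}_q(\tau/p_m)$ and then invoking the sublinearity of item~1 with $\lambda=1/p_m$---is equally valid and has the pleasant feature of recycling an already-proven statement rather than appealing to Jensen; your remark that the termwise bound holds with no upper restriction on $\lambda\tau$ is exactly what makes this legitimate. Items~1 and~4 match the paper's argument essentially verbatim (termwise inequality for item~1, subadditivity $\min(a+b,c)\le\min(a,c)+\min(b,c)$ for item~4).
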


The first inequality reads backward, it is less expensive to increase the argument than to increase the factor before $\mathfrak{C}_{p}$. This inequality is illustrated in \cref{thm:borne_reg_threhold} with the optimal choice of threshold. The second inequality \eqref{eq:lowerE_p} gives us a lower bound. Inequalities \eqref{ineq:Evstensorproduct} and \eqref{ineq:Data_proc} will help us to deal with examples that involve several combined processes of missing data generation such as the database merge model of \cref{sec:database}.

\begin{proof}
 \begin{itemize}
\item Proof of \eqref{ineq:Evsconstante}: For $\lambda\geq1$, $\min(p_{m},\lambda\tau)\leq\lambda\min(p_{m},\tau)$,
this conclude that $\mathfrak{C}_{p}(\lambda\tau)\leq\lambda\mathfrak{C}_{p}(\tau).$
\item Proof of \eqref{eq:lowerE_p}: We use \eqref{ineq:Evsconstante} with $\lambda=1/\tau>1.$
\item Proof of \eqref{ineq:Evstensorproduct}: 
\begin{align*}
\mathfrak{C}_{p\otimes q}(\tau) & =\sum_{m,m'}\min(p_{m}q_{m'},\tau)\\
 & =\sum_{m}\sum_{m'}q_{m'}\min\left(p_{m},\frac{\tau}{q_{m'}}\right)\\
 & =\sum_{m}\sum_{m'}q_{m'}\min\left(p_{m},\min\left(\frac{\tau}{q_{m'}},1\right)\right) & \text{because }p_{m}\leq1\\
 & \leq\sum_{m}\min\left(p_{m},\sum_{m'}q_{m'}\min\left(\frac{\tau}{q_{m'}},1\right)\right) & \text{using Jensen inequality}\\
 & \leq\mathfrak{C}_{p}\left(\mathfrak{C}_{q}(\tau)\right) & \text{using definition.}
\end{align*}
\item Proof of \eqref{ineq:Data_proc}: We will use $\min(a+b,c)\leq\min(a,c)+\min(b,c)$ for
$a,b,c\geq0.$
\begin{align*}
\mathfrak{C}_{p^{f}}(\tau) & =\sum_{k\in\text{Supp}(p^{f})}\min\left(p_{k}^{f},\tau\right)\\
 & \leq\sum_{k\in\text{Supp}(p^{f})}\min\left(\sum_{m:f(m)=k}p_{m},\tau\right)\\
 & \leq\sum_{k\in\text{Supp}(p^{f})}\sum_{m:f(m)=k}\min\left(p_{m},\tau\right)\\
 & \leq\sum_{m\in\text{Supp}(p)}\min\left(p_{m},\tau\right)\\
 & \leq\mathfrak{C}_{p}(\tau).
\end{align*}
\end{itemize}
\end{proof}

\subsection{Bernoulli Model} 
It is assumed that the components of $M$ are independent, and for
$j\in[d],$ $M_{j}\sim\mathcal{B}(\epsilon_{j})$ where $\epsilon_{j}\in\left[0,1\right]$.
The distribution $p$ of missing value pattern is $p=\mathcal{B}(\epsilon_{1})\otimes\cdots\otimes\mathcal{B}(\epsilon_{d})$. Let's define $\bar{\epsilon}:=\frac{1}{d}\sum_{j=1}^{d}\epsilon_j$, the average proportion of missing values. When $\epsilon_{1}=\epsilon_{2}=\cdots=\epsilon_{d}=\bar{\epsilon} $, the model is homogenous, otherwise it is heterogenous.
\subsubsection{Numerical experiments.} \label{sec:bernoulli_num}
The quantity $\mathfrak{C}_p\left(\frac{d}{n}\right)$ can be compared graphically for different missing pattern distributions of the Bernoulli model. In particular, we have chosen $d=4$ and 

\begin{itemize}
    \item $p_A$: Homogeneous Bernoulli with $\bar{\epsilon}=0.5$,
    \item $p_B$: Homogeneous Bernoulli with $\bar{\epsilon}=0.15$,
    \item $p_C$: Heterogeneous Bernoulli with $\bar{\epsilon}=0.15$  ($\epsilon_1=0.3,\epsilon_2=0.1,\epsilon_3=0.05,\epsilon_4=0.05$),
    \item $p_D$: Homogeneous Bernoulli with $\bar{\epsilon}=0.10$.
\end{itemize}
Note that $p_A$ matches with the uniform distribution over all missing patterns.   
\begin{figure}[h]
    \centering
    \includegraphics[width=10cm]{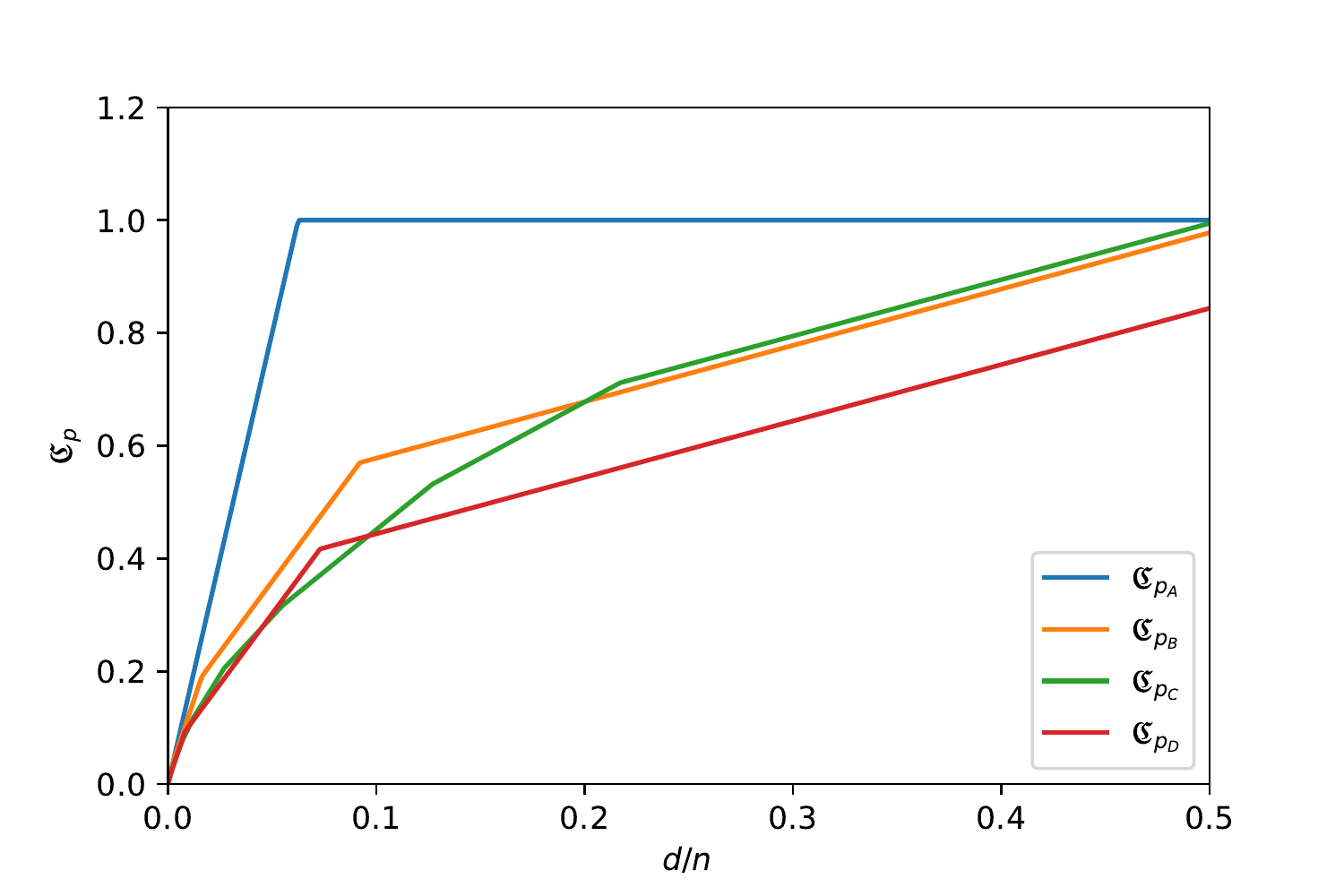}
    \caption{$\mathfrak{C}_{p}$ as a function of $\frac{d}{n}$. }
    \label{fig:diffE}
\end{figure}
\cref{fig:diffE} highlights three key points: 
\begin{enumerate}
    \item The distribution $p_A$, which corresponds to the uniform distribution on $\M$, is the worst in terms of complexity $\mathfrak{C}_{p}$.  
    \item The complexity seems to increase with the proportion of missing data $\bar{\epsilon}$ for homogeneous Bernoulli. 
    \item The comparison between homogeneous and heterogeneous does not seem relevant because $p_B$ and $p_C$ have the same proportion of missing values and each has a regime with a better $\mathfrak{C}_{p}$ than the other.
\end{enumerate}

\subsubsection{Proof of ~\cref{prop:BernoulliEBound} on the Homogeneous case}
\begin{proof}
From \eqref{eq:adaptabilityBernoulli}, we must bound $|\mathcal{B}_s|$ and $\delta_{s}$. For $|\mathcal{B}_s|$,
from \citep[Proposition 2.5]{massart2007concentration}

\begin{equation}\label{eq:sumBinom}
|\mathcal{B}_s|=\sum_{k=0}^{s}\binom{n}{k}\leq\left(\frac{ed}{s}\right)^{s}.
\end{equation}

Let $B\sim\mathcal{B}(\epsilon,d)$, we have $\delta_{s}=\P\left(B>s\right)$. Let
$t>0$, from Markov inequality
\begin{align*}
\P\left(B>s\right)=\P\left(t^{B}>t^{s}\right) & \leq\frac{\esp\left[t^{B}\right]}{t^{s}}\\
 & =\frac{\left(\epsilon t+(1-\epsilon)\right)^{d}}{t^{s}}\\
 & \leq\frac{\left(\epsilon t+1\right)^{d}}{t^{s}}\\
 & =\exp\left(d\log(1+\epsilon t)-s\log t\right)\\
 & \leq\exp\left(d\epsilon t-s\log t\right).
\end{align*}
The optimal choice $t=\frac{s}{\epsilon d}$, leads to 
\begin{equation}\label{eq:chernoffBin}
\P\left(B>s\right)\leq\epsilon^{s}\left(\frac{d}{s}\right)^{s}.
\end{equation}
Combining \eqref{eq:chernoffBin} and \eqref{eq:sumBinom}, we have, 
\[
\mathfrak{C}_{p}\left(\frac{d}{n}\right)\leq\left(\epsilon^{s}+\frac{d}{n}\right)\left(\frac{ed}{s}\right)^{s}.
\]
\end{proof}
\subsubsection{Heterogeneous Case}\label{sec:hetero}
Under a certain constraint, the same result as in the homogeneous case can be formulated for the heterogeneous case.
\begin{propo}\label{prop:BernoulliEBound2}
Under the assumptions of~\cref{thm:Risque_Complet}, and the Heterogeneous Bernoulli Model, if $$ s_{\bar\epsilon}\left(d/n\right)=1 \vee \left \lfloor\frac{\log\left(\frac{n}{d}\right)}{\log(\bar\epsilon^{-1})}\right\rfloor \wedge d \geq \bar\epsilon d,$$ then 
\begin{equation*}
\begin{split}
  \esp\left[\mathcal{E}\left(T_L\hat{f}^{(d/n)}\right)\right]\lesssim a_n \left(\frac{ed}{s_{\bar\epsilon}\left(d/n\right)}\right)^{s_{\bar\epsilon}\left(d/n\right)}\frac{d}{n}+ A_{\F_b}.
  \label{ineq:HomoBernoulliBestAlph}
\end{split}
\end{equation*}
\end{propo}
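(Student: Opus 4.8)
The plan is to reduce, via \cref{thm:borne_reg_threhold}, to an estimate on the missing‑pattern complexity $\mathfrak{C}_p(d/n)$ in the heterogeneous Bernoulli model, and then to transcribe the homogeneous argument of \cref{prop:BernoulliEBound}, the only change being that the individual missingness rates $\epsilon_j$ get aggregated into the average rate $\bar\epsilon$ through a convexity step. Concretely, \eqref{eq:reg_bound_tau_opti} gives $\esp[\mathcal{E}(T_L\widehat f^{(d/n)})]\lesssim a_n\,\mathfrak{C}_p(d/n)+A_{\F_b}$, so it suffices to show $\mathfrak{C}_p(d/n)\lesssim (ed/s_{\bar\epsilon}(d/n))^{s_{\bar\epsilon}(d/n)}\,d/n$; the term $A_{\F_b}$ is carried through unchanged.

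As in the Bernoulli example (\cref{sec:bernoulli}), for $s\in[d]$ I would take $\mathcal{B}_s$ to be the set of patterns with fewer than $s$ missing coordinates, so that \cref{lem:Cp_characterization} — equivalently \eqref{eq:adaptabilityBernoulli} — gives $\mathfrak{C}_p(d/n)\le |\mathcal{B}_s|\,\tfrac dn+\delta_s$, with $\delta_s=\P(|M|>s)$ and $|M|=\sum_{j=1}^d M_j$. The cardinality bound $|\mathcal{B}_s|\le (ed/s)^s$ is \eqref{eq:sumBinom} and is distribution‑free, hence unchanged. The one genuinely new ingredient is the tail bound on $\delta_s$: here $|M|$ is a \emph{Poisson–binomial} variable (a sum of independent, non‑identical Bernoullis), so for $t>1$ Markov applied to $t^{|M|}$ yields $\delta_s\le t^{-s}\prod_{j=1}^d(\epsilon_j t+1-\epsilon_j)$. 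Bounding $\epsilon_j t+1-\epsilon_j\le 1+\epsilon_j t\le e^{\epsilon_j t}$ and using $\sum_j\epsilon_j=d\bar\epsilon$ collapses the product to $\delta_s\le\exp(d\bar\epsilon t-s\log t)$, i.e.\ exactly the homogeneous bound from the proof of \cref{prop:BernoulliEBound} with $\epsilon$ replaced by $\bar\epsilon$. The minimizer is $t^\star=s/(d\bar\epsilon)$, and this is where the standing hypothesis enters: $s_{\bar\epsilon}(d/n)\ge\bar\epsilon d$ is precisely the condition guaranteeing that, at the relevant value $s=s_{\bar\epsilon}(d/n)$, one has $t^\star\ge 1$, so that Markov is applied in the legitimate range. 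Plugging $t^\star$ back in gives $\delta_s\le(ed\bar\epsilon/s)^s=\bar\epsilon^s(ed/s)^s$, the heterogeneous counterpart of \eqref{eq:chernoffBin}, and combining with the cardinality bound produces the heterogeneous analogue of \cref{lem:bernoulliOracle}: $\mathfrak{C}_p(d/n)\le(\tfrac dn+\bar\epsilon^s)(ed/s)^s$ for every $s\in[d]$ with $s\ge\bar\epsilon d$.

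Finally I would take $s=s_{\bar\epsilon}(d/n)$, which is admissible exactly by the hypothesis $s_{\bar\epsilon}(d/n)\ge\bar\epsilon d$; by construction of $s_{\bar\epsilon}$ as $1\vee\lfloor\log(n/d)/\log(\bar\epsilon^{-1})\rfloor\wedge d$ the quantity $\bar\epsilon^{s_{\bar\epsilon}(d/n)}$ is of the same order as $d/n$, so the bracket $\tfrac dn+\bar\epsilon^{s_{\bar\epsilon}(d/n)}$ is $\lesssim d/n$ and $\mathfrak{C}_p(d/n)\lesssim(ed/s_{\bar\epsilon}(d/n))^{s_{\bar\epsilon}(d/n)}\,d/n$; substituting into \eqref{eq:reg_bound_tau_opti} yields the claim. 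I expect the Poisson–binomial Chernoff step — together with the bookkeeping that pins down $s_{\bar\epsilon}(d/n)\ge\bar\epsilon d$ as exactly the condition making the optimal $t^\star$ valid — to be the only subtle point; everything else is a term‑by‑term copy of the homogeneous proof with $\epsilon$ replaced by $\bar\epsilon$.
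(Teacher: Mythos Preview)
Your argument is correct, but it follows a genuinely different route from the paper's own proof. You extend the combinatorial argument of \cref{prop:BernoulliEBound} directly: bound $|\mathcal{B}_s|$ by \eqref{eq:sumBinom}, and handle $\delta_s=\P(|M|>s)$ via a Chernoff bound on the Poisson--binomial variable $|M|=\sum_j M_j$, where the product $\prod_j(1+\epsilon_j t)\le e^{t\sum_j\epsilon_j}=e^{td\bar\epsilon}$ collapses the heterogeneity into the average $\bar\epsilon$. The hypothesis $s_{\bar\epsilon}(d/n)\ge\bar\epsilon d$ is then exactly the condition $t^\star=s/(d\bar\epsilon)\ge1$ needed for Markov on $t^{|M|}$ to be valid. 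The paper instead uses the R\'enyi entropy framework of \cref{thm:inf_form} and Table~\ref{tab:EvsEntropy}: from the R\'enyi bound $\mathfrak{C}_p(\tau)\le(\tau e^{\text{Ent}_\alpha(p)})^{1-\alpha}$, it invokes additivity of R\'enyi entropy over the product $p=\bigotimes_j\mathcal{B}(\epsilon_j)$ and then Jensen's inequality (concavity of $\epsilon\mapsto\text{Ent}_\alpha(\mathcal{B}(\epsilon))$) to replace the $\epsilon_j$'s by $\bar\epsilon$, before optimizing in $\alpha\in(0,1)$; the condition $s_{\bar\epsilon}(d/n)>\bar\epsilon d$ reappears as the requirement $\alpha^\star\in(0,1)$. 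Your approach is more elementary and makes the role of the standing hypothesis very transparent; the paper's approach exploits and illustrates the entropy machinery developed in Appendix~\ref{app:complexity}. Both proofs share the same mild sloppiness in the last step, where the floor in the definition of $s_{\bar\epsilon}$ means $\bar\epsilon^{s_{\bar\epsilon}(d/n)}$ equals $d/n$ only up to a factor $\bar\epsilon^{-1}$; this is absorbed in $\lesssim$ in both arguments.
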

\begin{proof}[Proof of \ref{prop:BernoulliEBound2}]
Let's $\tau=d/n$, it is sufficient to show that 
\begin{equation*}
\mathfrak{C}_{p}\left(\tau\right)\leq\left(\frac{ed}{s_{\bar\epsilon}(\tau)}\right)^{s_{\bar\epsilon}(\tau)}\tau.
\end{equation*}
We remark that $\epsilon\longmapsto \text{Ent}_{\alpha}\left(\mathcal{B}(\epsilon)\right) =\frac{1}{1-\alpha}\log\left(\epsilon^{\alpha}+\left(1-\epsilon\right)^{\alpha}\right) $ is concave for $\alpha\in(0,1)$. 
Thus Renyi entropy of $p$ takes the form 
\begin{align*}
\text{Ent}_{\alpha}\left(p\right) & =\sum_{i=1}^n\text{Ent}_{\alpha}\left(\mathcal{B}(\epsilon_i)\right)& \text{using additivity of Renyi entropy}\\
& \leq d \text{Ent}_{\alpha}\left(\mathcal{B}(\bar\epsilon)\right) & \text{using Jensen inequality} \\
 & =\frac{d}{1-\alpha}\log\left(\bar\epsilon^{\alpha}+\left(1-\bar\epsilon\right)^{\alpha}\right),
\end{align*}
using Jensen inequality. From  Renyi's bound of Table \ref{tab:EvsEntropy}, for all $\alpha\in(0,1)$ this leads
to 
\begin{align*}
\mathfrak{C}_{p}\left(\tau\right) & \leq\left(\bar\epsilon^{\alpha}+\left(1-\bar\epsilon\right)^{\alpha}\right)^{d}\tau^{1-\alpha}\label{ineq:BernoulliHomoRenyi}\\
 & \leq\left(\bar\epsilon^{\alpha}+1\right)^{d}\tau^{1-\alpha}\\
 & \leq e^{\log(1+\bar\epsilon^{\alpha})d}\tau^{1-\alpha}\\
 & \leq e^{\bar\epsilon^{\alpha}d+\left(1-\alpha\right)\log(\tau)}.
\end{align*}
We can minimize function $\psi(\alpha)=\bar\epsilon^{\alpha}d+\left(1-\alpha\right)\log(\tau)$
on $\left(0,1\right)$: 
\[
\psi'(\alpha)=\log\left(\bar\epsilon\right)\bar\epsilon^{\alpha}d-\log\tau.
\]
The first order condition gives us that 
\begin{align*}
\log\left(\bar\epsilon\right)\bar\epsilon^{\alpha^{\star}}d-\log\tau & =0.
\end{align*}
And then,
\[
\bar\epsilon^{\alpha^{\star}}d=\frac{\log\tau}{\log\bar\epsilon}=s_{\bar\epsilon}(\tau).
\]
Thus, 
\[
\alpha^{\star}=\frac{\log\left(\frac{s_{\bar\epsilon}(\tau)}{d}\right)}{\log\left(\bar\epsilon\right)}.
\]
We have $\alpha^{\star}\in\left(0,1\right)$ if and only if $d\bar\epsilon<s_{\bar\epsilon}(\tau)<d$.
Under this condition, we have 
\begin{align*}
\psi(\alpha) & =s_{\bar\epsilon}(\tau)+\log(\tau)\left(1-\frac{\log\left(\frac{s_{\bar\epsilon}(\tau)}{d}\right)}{\log\left(\bar\epsilon\right)}\right)\\
 & =s_{\bar\epsilon}(\tau)+\frac{\log(\tau)}{\log(\bar\epsilon)}\left(\log\left(\bar\epsilon\right)-\log\left(\frac{s_{\bar\epsilon}(\tau)}{d}\right)\right)\\
 & =s_{\bar\epsilon}(\tau)\left(1+\log\left(\frac{d\bar\epsilon}{s_{\bar\epsilon}(\tau)}\right)\right)\\
 & =s_{\bar\epsilon}(\tau)\log\left(\frac{ed\bar\epsilon}{s_{\bar\epsilon}(\tau)}\right).
\end{align*}
The upper bound is therefore 
\begin{equation}
\mathfrak{C}_{p}\left(\tau\right)\leq\left(\frac{ed\bar\epsilon}{s_{\bar\epsilon}(\tau)}\right)^{s_{\bar\epsilon}(\tau)}=\left(\frac{ed}{s_{\bar\epsilon}(\tau)}\right)^{s_{\bar\epsilon}(\tau)}\bar\epsilon^ {s_{\bar\epsilon}(\tau)}=\left(\frac{ed}{s_{\bar\epsilon}(\tau)}\right)^{s_{\bar\epsilon}(\tau)}\tau.\label{ineq:HomoBernoulliBestAlpha}
\end{equation}
\end{proof}
\subsection{Proof of \cref{prop:datamerge}}\label{proof:database}
\begin{proof}
We denote by $p_P$ (resp. $p_N$) the distribution of $P$ (resp. $N$). Using \eqref{ineq:Data_proc}, we have 
\[ \mathfrak{C}_p\leq \mathfrak{C}_{p_H\otimes p_N}. \]
Furthermore, \eqref{ineq:Evstensorproduct} leads to,
\begin{align*}
    \mathfrak{C}_p \left( \frac{d}{n}\right)&\leq \mathfrak{C}_{p_H}\left(\mathfrak{C}_{p_N}\left( \frac{d}{n}\right)\right)\\
    &\leq h\mathfrak{C}_{p_N}\left( \frac{d}{n}\right) & \text{because  } |\rm{Supp}(H)|\leq h\\
    &\leq\left(\frac{ed}{s_{\eta}\left(d/n\right)}\right)^{s_{\eta}\left(d/n\right)}h\frac{d}{n},
\end{align*}
using \cref{prop:BernoulliEBound} on $\mathfrak{C}_{p_N}$.

\end{proof}
\subsection{Proof of \cref{lem:Cp_characterization}}\label{proof:Cp_characterization}
\begin{proof}
Let $\mathcal{B}$ be a subset of $\M$, we have
\begin{align*}
\mathfrak{C}_{p}\left(\frac{d}{n}\right) & =\sum_{m\in\M}p_{m}\wedge\left(\frac{d}{n}\right)\\
 & =\sum_{m\in\mathcal{B}}p_{m}\wedge\left(\frac{d}{n}\right)+\sum_{m\in\mathcal{B}^{c}}p_{m}\wedge\left(\frac{d}{n}\right)\\
 & \leq\sum_{m\in\mathcal{B}}\frac{d}{n}+\sum_{m\in\mathcal{B}^{c}}p_{m}\\
 & \leq\left|\mathcal{B}\right|\frac{d}{n}+\P\left(M\in\mathcal{B}\right).
\end{align*}
We obtain equality with $\mathcal{B} =\{m\in\M, p_m>d/n\}$. Thus, 
\begin{equation*}
    \mathfrak{C}_p\left(\frac{d}{n}\right)=\inf_{\mathcal{B}\subset\M}\left\{{\rm{Card}}(\mathcal{B})\frac{d}{n}+\P\left(M\in \mathcal{B}^c\right) \right\}.
\end{equation*}
\end{proof}
\section{Proof of Section \ref{sec:minimax}}
The purpose of this part is to establish the lower bounds of~\cref{sec:minimax}. 
\subsection{Preliminary lemmas.}
We consider a set of identifiable models:

\[
\text{\ensuremath{\mathcal{P}_{\mathcal{I}}:=\left\{ \P_{\mu},\mu\in\mathcal{I}\right\} ,}}
\]
where $\P_{\mu}$ is identifiable and $\mathcal{I}$ is a set of parameters.
Let $X_{1},...,X_{n}$ be i.i.d.\  observations of $\P_{\mu}$. We define the quadratic risk of an estimator $\widehat{\mu}$ as:
\begin{equation}
    r(\mu,\widehat{\mu}):=\esp_{\mathcal{P_\mu}}\left[\left(\widehat{\mu}_{n}-\mu\right)^{2}\right].
\end{equation}
The first step is to lower bound the integrated quadratic risk according to a distribution $\Pi$ on the set of parameters. 
\begin{lemma}\label{lem:gauss_mini}
We consider the class of models \[
\text{\ensuremath{\mathcal{P}:=\left\{ \P_{\mu}\sim\mathcal{N}(\mu,\sigma^2),\mu\in\mathcal{\R}\right\} }},
\]
with $\sigma^2$ known.
Let $\lambda>0$ and consider $\Pi\sim \mathcal{N}(0,\lambda^2)$ as a prior distribution for $\mu$. Then 
\begin{equation}
\label{eq:gauss_bayes_mini}
    \inf_{\widehat{\mu}}\esp_{\mu\sim\Pi}\left[r\left(\widehat{\mu},\mu\right)\right]=\frac{\lambda^{2}\sigma^{2}}{\sigma^{2}+\lambda^{2}n},
\end{equation}
where the infinimum is over all $\sigma(X_1,...,X_n)$-measurable estimator  $\widehat{\mu}$. 
\end{lemma}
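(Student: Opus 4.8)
The plan is to identify the Bayes estimator under squared loss and compute its risk directly using conjugacy of the Gaussian family. Throughout, write $\bar X_n := \tfrac1n\sum_{i=1}^n X_i$ and $\mu^\star := \esp[\mu\mid X_1,\dots,X_n]$ for the posterior mean, and recall that here $\mu$ is random with $\mu\sim\Pi=\mathcal N(0,\lambda^2)$ and $X_1,\dots,X_n\mid\mu$ are i.i.d.\ $\mathcal N(\mu,\sigma^2)$.

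First I would establish the standard reduction to the posterior variance. For any $\sigma(X_1,\dots,X_n)$-measurable estimator $\widehat\mu$, decompose
\begin{equation*}
\esp_{\mu\sim\Pi}\!\left[r(\widehat\mu,\mu)\right]=\esp\!\left[(\widehat\mu-\mu)^2\right]=\esp\!\left[(\widehat\mu-\mu^\star)^2\right]+\esp\!\left[(\mu^\star-\mu)^2\right],
\end{equation*}
the cross term vanishing because, conditionally on $X_1,\dots,X_n$, the factor $\widehat\mu-\mu^\star$ is constant while $\esp[\mu^\star-\mu\mid X_1,\dots,X_n]=0$. The first term is nonnegative and equals $0$ for $\widehat\mu=\mu^\star$, which is indeed $\sigma(X_1,\dots,X_n)$-measurable, so the infimum over all such estimators is exactly $\esp[(\mu^\star-\mu)^2]=\esp\!\left[\V{\mu\mid X_1,\dots,X_n}\right]$.

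Second I would compute this posterior variance. Since $\bar X_n$ is sufficient with $\bar X_n\mid\mu\sim\mathcal N(\mu,\sigma^2/n)$, Gaussian--Gaussian conjugacy gives that $\mu\mid X_1,\dots,X_n$ is Gaussian with precision $n/\sigma^2+1/\lambda^2$ (and mean $\tfrac{\lambda^2 n\,\bar X_n}{\sigma^2+\lambda^2 n}$, though only the variance is needed); equivalently one completes the square in the product of the $n$ likelihood densities and the prior density, or invokes the conditional-variance formula for the jointly Gaussian vector $(\mu,X_1,\dots,X_n)$. In either case,
\begin{equation*}
\V{\mu\mid X_1,\dots,X_n}=\frac{1}{\frac{n}{\sigma^2}+\frac{1}{\lambda^2}}=\frac{\lambda^2\sigma^2}{\sigma^2+\lambda^2 n}.
\end{equation*}
Crucially this does not depend on the observations, so taking expectation leaves it unchanged, and combining with the previous paragraph yields $\inf_{\widehat\mu}\esp_{\mu\sim\Pi}[r(\widehat\mu,\mu)]=\frac{\lambda^2\sigma^2}{\sigma^2+\lambda^2 n}$, which is \eqref{eq:gauss_bayes_mini}.

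The argument is entirely routine; the only step requiring a little care is the posterior computation itself, i.e.\ correctly combining the prior precision $1/\lambda^2$ with the data precision $n/\sigma^2$ (or, equivalently, the algebra of completing the square), together with the trivial but necessary observation that the posterior mean is a legitimate competitor in the infimum because it is $\sigma(X_1,\dots,X_n)$-measurable.
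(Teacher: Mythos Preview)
Your proof is correct and follows the same overall strategy as the paper: reduce the Bayes risk to the posterior mean, then identify the minimum as the (expected) posterior variance. The only real difference lies in how that posterior variance is computed. The paper works with the full joint covariance matrix of $(\mu,X_1,\dots,X_n)$ and applies the Sherman--Morrison formula to invert $\sigma^2 I_n+\lambda^2 uu^\top$, whereas you reduce to the sufficient statistic $\bar X_n$ and use the standard Gaussian--Gaussian conjugacy, adding the prior precision $1/\lambda^2$ to the data precision $n/\sigma^2$. Your route is shorter and avoids matrix algebra; the paper's matrix approach would generalize more readily to non-i.i.d.\ or multivariate observations, but that generality is not needed here.
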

\begin{proof}
 \begin{align*}
\inf_{\widehat{\mu}}\esp_{\mu\sim\Pi}\left[r\left(\widehat{\mu},\mu\right)\right] & =\inf_{\widehat{\mu}}\esp_{\mu\sim\Pi}\left[\esp_{\mu}\left[\left(\widehat{\mu}-\mu\right)^{2}\right]\right]\\
 & =\inf_{\widehat{\mu}}\esp\left[\esp\left[\left(\widehat{\mu}-\mu\right)^{2}|X_{1},...,X_{n}\right]\right]\\
 & =\esp\left[\left(\esp\left[\mu|X_{1},...,X_{n}\right]-\mu\right)^{2}\right]\\
 & =\mathbb{V}\left[\esp\left[\mu|X_{1},...,X_{n}\right]\right],
\end{align*}
because Bayes estimator $\esp\left[\mu|X_{1},...,X_{n}\right]$ is
optimal for the integrated Risk and unbiased. According prior $\Pi$,
$\left(\mu,X_{1},...,X_{n}\right)$ is a gaussian vector with the
following covariance matrix,
\[
\Gamma=\left(\begin{array}{cccc}
\lambda^{2} & \lambda^{2} & \cdots & \lambda^{2}\\
\lambda^{2} & \lambda^{2}+\sigma^{2} & \cdots & \lambda^{2}\\
\vdots & \vdots & \ddots & \vdots\\
\lambda^{2} & \lambda^{2} & \cdots & \lambda^{2}+\sigma^{2}
\end{array}\right).
\]
Thus, the variance of $\esp\left[\mu|X_{1},...,X_{n}\right]$ is 
\begin{align*}
\mathbb{V}\left[\esp\left[\mu|X_{1},...,X_{n}\right]\right] & =\lambda^{2}-\left(\lambda^{2},\cdots,\lambda^{2}\right)\left(\begin{array}{ccc}
\lambda^{2}+\sigma{{}^2} & \cdots & \lambda^{2}\\
\vdots & \ddots & \vdots\\
\lambda^{2} & \cdots & \lambda^{2}+\sigma{{}^2}
\end{array}\right)^{-1}\left(\begin{array}{c}
\lambda^{2}\\
\vdots\\
\lambda^{2}
\end{array}\right)\\
 & =\lambda^{2}-\left(\lambda^{2},\cdots,\lambda^{2}\right)\left(\sigma^{2}I_{n}+\lambda uu^{T}\right)^{-1}\left(\begin{array}{c}
\lambda^{2}\\
\vdots\\
\lambda^{2}
\end{array}\right),
\end{align*}
where $u=(1,...,1)^{T}$. The  Sherman-Morrison formula (see \citep{petersen2008matrix} for example) gives 
\begin{align*}
\left(\sigma^{2}I_{n}+\lambda uu^{T}\right)^{-1} & =\frac{1}{\sigma^{2}}I_{n}+\frac{\lambda^{2}uu^{T}/\sigma^{2}}{1+\frac{\lambda^{2}u^{T}u}{\sigma^{2}}}\\
 & =\frac{1}{\sigma^{2}}I_{n}+\frac{\lambda^{2}uu^{T}}{\sigma^{2}+\lambda^{2}n}.
\end{align*}
Thus,
\begin{align*}
\mathbb{V}\left[\esp\left[\mu|X_{1},...,X_{n}\right]\right] & =\lambda^{2}-\left(\frac{n\lambda^{4}}{\sigma^{2}}+\frac{n^{2}\lambda^{4}}{\sigma^{2}+\lambda^{2}n}\right)\\
 & =\frac{\lambda^{2}\sigma^{2}}{\sigma^{2}+\lambda^{2}n}.
\end{align*}
Thus, 
\begin{equation*}
\inf_{\widehat{\mu}}\esp_{\mu\sim\Pi}\left[\esp_{\mu}\left[\left(\widehat{\mu}-\mu\right)^{2}\right]\right]=\frac{\lambda^{2}\sigma^{2}}{\sigma^{2}+\lambda^{2}n}.
\end{equation*}
\end{proof}

\begin{remark}
Using the comparison between minimax and Bayes risks, this result can be used to prove that
\begin{equation*}
    \inf_{\widehat{\mu}}\sup_{\mu\in\R}r\left(\widehat{\mu},\mu\right)\geq \frac{\lambda^{2}\sigma^{2}}{\sigma^{2}+\lambda^{2}n}.
\end{equation*}
We obtain the classical result of the minimax estimation of a Gaussian mean where $\lambda\to \infty$:
\begin{equation*}
    \inf_{\widehat{\mu}}\sup_{\mu\in\R}r\left(\widehat{\mu},\mu\right)=\frac{\sigma^{2}}{n}. 
\end{equation*}
\end{remark}
Note that this lower bound is only valid when there are no constraints on the parameter space. However, we are interested in guarantees when $\mu$ is bounded, this is the purpose of the following result.
\begin{lemma}\label{lem:comparison}
Let $\lambda>0$ and $\Pi\sim \mathcal{N}(0,\lambda^2)$. Then
\begin{equation}\label{eq:comparison_truncated}
    \left|\esp_{\mu\sim\Pi}[r\left(\mu,T_{R}\widehat{\mu}\right)]-\esp_{\mu\sim\Pi}[r\left(T_{R}\mu,T_{R}\widehat{\mu}\right)]\right|\leq8\lambda^{2}e^{-\frac{1}{4}\left(\frac{R}{\lambda}\right)^{2}}.
\end{equation}
\end{lemma}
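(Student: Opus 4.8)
The plan is to exploit that the clipping operator $T_R$ acts as the identity on $[-R,R]$, so that the two integrated risks differ only through the event $\{|\mu|>R\}$, whose second-moment contribution under the Gaussian prior $\Pi\sim\mathcal{N}(0,\lambda^2)$ is exponentially small.

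First I would rewrite the quantity in \eqref{eq:comparison_truncated}, using the definition of $r$, as
\[
\left|\esp_{\mu\sim\Pi}\esp_{\mathcal{P}_\mu}\!\left[\left(T_R\widehat\mu-\mu\right)^2-\left(T_R\widehat\mu-T_R\mu\right)^2\right]\right| ,
\]
so it suffices to bound the integrand in absolute value, jointly in the data and in $\mu$. If $|\mu|\le R$ then $T_R\mu=\mu$ and the integrand vanishes. On $\{|\mu|>R\}$, set $a=T_R\widehat\mu-\mu$ and $b=T_R\widehat\mu-T_R\mu$ and use $a^2-b^2=(a-b)(a+b)$ with $a-b=T_R\mu-\mu$ and $a+b=2T_R\widehat\mu-\mu-T_R\mu$. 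Since $|T_R\mu|\le R<|\mu|$ we get $|a-b|=|\mu|-R\le|\mu|$, and since moreover $|T_R\widehat\mu|\le R<|\mu|$ we get $|a+b|\le 2R+|\mu|+R\le 4|\mu|$. Hence the integrand is bounded pointwise by $4\mu^2\ind_{|\mu|>R}$, and therefore
\[
\left|\esp_{\mu\sim\Pi}[r(\mu,T_R\widehat\mu)]-\esp_{\mu\sim\Pi}[r(T_R\mu,T_R\widehat\mu)]\right|\le 4\,\esp_{\mu\sim\Pi}\!\left[\mu^2\ind_{|\mu|>R}\right].
\]

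It then remains to prove the Gaussian tail-moment estimate $\esp_{\mu\sim\Pi}[\mu^2\ind_{|\mu|>R}]\le 2\lambda^2 e^{-R^2/(4\lambda^2)}$. Writing $\mu=\lambda g$ with $g\sim\mathcal{N}(0,1)$ and $t=R/\lambda$, an integration by parts (from $\frac{d}{dx}(x\phi(x))=\phi(x)-x^2\phi(x)$, with $\phi$ the standard normal density and $\overline{\Phi}$ its tail) gives $\esp[g^2\ind_{|g|>t}]=2\big(t\phi(t)+\overline{\Phi}(t)\big)$. Then $\overline{\Phi}(t)\le\tfrac12 e^{-t^2/2}\le\tfrac12 e^{-t^2/4}$, while $t\phi(t)=\tfrac{1}{\sqrt{2\pi}}\,t e^{-t^2/4}e^{-t^2/4}\le\tfrac{1}{\sqrt{2\pi}}\sqrt{2/e}\,e^{-t^2/4}$ because $\sup_{s\ge0}se^{-s^2/4}=\sqrt{2/e}$; summing yields $\esp[g^2\ind_{|g|>t}]\le 2e^{-t^2/4}$, hence the claimed moment bound and, combined with the previous display, the factor $8\lambda^2 e^{-\frac14(R/\lambda)^2}$.

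The argument is essentially elementary, the only point requiring some care being the bookkeeping of numerical constants in the Gaussian tail-moment step so that the final constant is exactly $8$; any cruder sub-Gaussian bound (for instance Cauchy--Schwarz applied to $\esp[\mu^4]$ and $\prob{|\mu|>R}$) would give the same statement up to the value of a universal constant.
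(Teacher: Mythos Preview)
Your proof is correct and follows the same overall strategy as the paper: both observe that the integrand vanishes on $\{|\mu|\le R\}$ and reduce to bounding $\esp_\Pi[\mu^2\ind_{|\mu|>R}]$. Your execution is in fact a little tighter---the paper bounds the difference by $|A-B|\le A+B$ (giving a factor $8$ at that stage) and then applies Cauchy--Schwarz with $\esp_\Pi[\mu^4]=3\lambda^4$ (which strictly yields $8\sqrt{3}\,\lambda^2 e^{-R^2/(4\lambda^2)}$, slightly overshooting the stated constant), whereas your $a^2-b^2$ factorization and direct integration-by-parts recover the constant $8$ exactly.
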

\begin{proof}
\begin{align}
\begin{split}\left|\esp_{\mu\sim\Pi}r\left(\mu,T_{R}\widehat{\mu}\right)-\esp_{\mu\sim\Pi}r\left(T_{R}\mu,T_{R}\widehat{\mu}\right)\right| 
 & \leq\begin{aligned}[t]
\left|\left(T_{R}\widehat{\mu}+R\right)^{2}\Pi\left[\mu<-R\right]+\left(T_{R}\widehat{\mu}-R\right)^{2}\Pi\left[\mu>R\right]\right|\\+\left|\int_{\left|\mu\right|>R}\left(T_{R}\widehat{\mu}+\mu\right)^{2}d\Pi\right|\nonumber
 \end{aligned}
\end{split} \\
 & \leq\int_{\left|\mu\right|>R}\left(6R^{2}+2\mu^{2}\right)d\Pi\nonumber \\
 & \leq8\esp_{\Pi}\left[\ind_{\left|\mu\right|>R}\mu^{2}\right]\label{eq:compa_1}\\
 & \leq8\sqrt{\Pi\left(\left|\mu\right|>R\right)\esp_{\Pi}\left[\mu^{4}\right]}\label{eq:compa_2}\\
 & \leq8\sqrt{e^{-\frac{R^{2}}{2\lambda^{2}}}\times3\frac{\sigma^{4}}{n{{}^2}}}\label{eq:compa_3}\\
 & \leq8\lambda^{2}e^{-\frac{1}{4}\left(\frac{R}{\lambda}\right)^{2}}\nonumber.
\end{align}
We have used Cauchy Schwarz inequality in \eqref{eq:compa_2}, moment and tail upper
bound of Gaussian distribution in \eqref{eq:compa_3}.
\end{proof}

\subsection{Minimax estimation of a value per missing pattern}
We consider the following Problem, 

\begin{equation}\label{prob:onlyM}
    Y=f^\star(M) +\epsilon,
\end{equation}
with $f^\star$ a deterministic function of the missing pattern M. We define $\widetilde{\mathcal{P}}_{p}(\sigma,R)$ as the set of $\P$ that satisfies:
\begin{enumerate}
    \item $\P(M=m)=p_m$.
    \item $\epsilon\sim \mathcal{N}(0,\sigma^2)$ and $\epsilon$ is independent of $M$. 
    \item $\max_{m\in\M}|f^\star(m)|\leq R$
\end{enumerate}
We denote by $\P_f$ the probability that satisfies the two first conditions with $f^\star=f$. We have the following minimax result on the estimation of $f^{\star}$. 
\begin{propo}\label{thm:minimax_onlyMP}
Let $R,\sigma,c>0$ such that $c\leq 16e^{-\frac{1}{4}\left(\frac{R}{\sigma}\right)^{2}} $, then 
\begin{equation}
    \inf_{\hat{f}}\sup_{\P\in\widetilde{\mathcal{P}}(\sigma,R)}\esp\left[\left(\widehat{f}(M)-f^\star(M)\right)^{2}\right]\geq(1-c)\sigma^{2}\mathfrak{C}_{p}\left(\frac{1}{n}\right).
\end{equation}
\end{propo}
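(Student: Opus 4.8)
The plan is to reduce the minimax estimation of the vector $(f^\star(m))_{m\in\M}$ to a collection of independent one-dimensional Gaussian mean estimation problems, one per missing pattern, and then lower-bound each of them using the Bayes-risk computations of Lemmas~\ref{lem:gauss_mini} and~\ref{lem:comparison}. First I would observe that, conditionally on the data $(M_i,Y_i)_{i\in[n]}$, the only information about $f^\star(m)$ is carried by the sub-sample $E_m=\{i : M_i=m\}$, and on that sub-sample we face exactly the model $Y_i = f^\star(m)+\epsilon_i$ with $\epsilon_i\sim\mathcal N(0,\sigma^2)$ i.i.d. Thus the problem decouples across patterns: $\esp[(\widehat f(M)-f^\star(M))^2] = \sum_{m\in\M} p_m\, \esp[(\widehat f(m)-f^\star(m))^2]$, and it suffices to lower-bound the risk of estimating a single scalar $\mu_m:=f^\star(m)$ from $|E_m|$ Gaussian observations, where $|E_m|\sim\mathcal{B}(n,p_m)$.

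Next, for each fixed $m$ I would put the prior $\Pi_m\sim\mathcal N(0,\lambda_m^2)$ on $\mu_m$ and apply the standard bound (minimax $\geq$ Bayes). Because the truncated parameter $T_R\mu_m$ satisfies the constraint $|T_R\mu_m|\le R$ required by $\widetilde{\mathcal P}_p(\sigma,R)$, I replace the risk at $\mu_m$ by the risk at $T_R\mu_m$ at the cost controlled by Lemma~\ref{lem:comparison}: the discrepancy is at most $8\lambda_m^2 e^{-\frac14 (R/\lambda_m)^2}$. Conditioning on $|E_m|=k$, Lemma~\ref{lem:gauss_mini} gives the Bayes risk $\lambda_m^2\sigma^2/(\sigma^2+\lambda_m^2 k)$; averaging over $k\sim\mathcal B(n,p_m)$ and using Jensen (the map $k\mapsto 1/(\sigma^2+\lambda_m^2 k)$ is convex, or alternatively Lemma~\ref{lem:inverse_bernoulli}) yields a lower bound of order $\lambda_m^2\sigma^2/(\sigma^2+\lambda_m^2 n p_m)$. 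The natural choice is $\lambda_m^2$ large enough that the posterior essentially ignores the prior scaling but $\lambda_m \asymp \sigma/\sqrt{n}$ when $p_m$ is large; concretely, taking $\lambda_m^2 = \sigma^2/(n p_m \wedge 1)$ — equivalently balancing $\lambda_m^2$ against $\sigma^2\cdot(p_m\wedge \tfrac1n)$ — makes the per-pattern Bayes risk of order $\sigma^2 (p_m\wedge \tfrac1n)$, and the truncation error term becomes negligible precisely under the hypothesis $c\ge 16 e^{-\frac14(R/\sigma)^2}$ after summing; this is where the constant $(1-c)$ enters.

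Summing the per-pattern lower bounds gives $\sum_m p_m\cdot \sigma^2(p_m\wedge\tfrac1n)\cdot(\text{const}) \gtrsim \sigma^2\sum_m (p_m\wedge\tfrac1n) = \sigma^2\,\mathfrak C_p(\tfrac1n)$, once one checks $p_m\cdot(p_m\wedge\tfrac1n)\asymp$ ... — actually the cleanest route is to note $p_m\,\esp[\lambda_m^2\sigma^2/(\sigma^2+\lambda_m^2|E_m|)] \gtrsim \sigma^2 (p_m\wedge \tfrac1n)$ directly with the above choice of $\lambda_m$, using $\esp[1/(1+|E_m|)]\ge 1/(1+np_m)$ from Lemma~\ref{lem:inverse_bernoulli}. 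The main obstacle I anticipate is bookkeeping the truncation: one must verify that the total truncation error $\sum_m p_m\cdot 8\lambda_m^2 e^{-\frac14(R/\lambda_m)^2}$ is dominated by $c$ times the main term $\sigma^2\mathfrak C_p(\tfrac1n)$ uniformly in the (possibly infinitely supported) distribution $p$, which forces the specific relation between $c$, $R$ and $\sigma$ in the hypothesis and requires that $\lambda_m \le \sigma$ (so that $e^{-\frac14(R/\lambda_m)^2}\le e^{-\frac14(R/\sigma)^2}$) — hence the constraint $np_m\ge 1$ is handled by capping $\lambda_m^2$ at $\sigma^2$, and patterns with $np_m<1$ contribute to both sides proportionally to $p_m$. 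Assembling these estimates yields the claimed bound $(1-c)\sigma^2\mathfrak C_p(1/n)$.
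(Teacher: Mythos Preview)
Your proposal is correct and follows essentially the same route as the paper: a product Gaussian prior $\bigotimes_m \mathcal N(0,\lambda_m^2)$, Lemma~\ref{lem:gauss_mini} applied per pattern after conditioning on $(M_i)_{i\in[n]}$, Jensen to replace $|E_m|$ by $np_m$, the truncation comparison of Lemma~\ref{lem:comparison}, and the same choice $\lambda_m^2=\sigma^2/(np_m\vee 1)$ (your formula should read $\vee$, not $\wedge$, as your own surrounding discussion makes clear) so that $p_m\lambda_m^2=\sigma^2(p_m\wedge 1/n)$ and $\lambda_m\le\sigma$. Your momentary confusion about the \emph{unweighted} per-pattern Bayes risk is resolved by your own self-correction that the $p_m$-weighted contribution satisfies $p_m\cdot\frac{\lambda_m^2\sigma^2}{\sigma^2+\lambda_m^2 np_m}\ge \tfrac{\sigma^2}{2}(p_m\wedge\tfrac1n)$, which is precisely the computation the paper carries out in its Step~4.
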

\begin{proof}
 \textbf{Step 1}: Comparison with integrated risk and decomposition.

Let $\widehat{f}$ a estimator of $f^{\star}$. Without loss of generality,
we can assume that $\widehat{f}$ belongs to $B_{R}:=\left\{ f|\forall m\in\M,\left|f(m)\right|\leq R\right\} $.
Note $\Pi$ a prior distribution for $f^\star$.
\begin{align*}
\sup_{\P\in\widetilde{\mathcal{P}}(\sigma,R)}\esp_{\P}\left[\left(\widehat{f}(M)-f^\star(M)\right)^{2}\right] & =\sup_{f^{\star}\in B_{R}}\esp_{\P_{f^\star}}\left[\left(\widehat{f}(M)-f^\star(M)\right)^{2}\right]\\
 %& \geq\esp_{f^{\star}\sim\Pi'}\esp_{\P_{f^{\star}}}\left[\left(\widehat{f}(M)-f^\star(M\right)^{2}\right] \\
 & \geq \esp_{f^{\star}\sim\Pi}\esp_{\P_{f^{\star}}}\left[\left(\widehat{f}(M)-T_R f^\star (M)\right)^{2}\right].
\end{align*}
%\cb{rewrite with $T_R$}
We denote by $\esp_{\Pi}=\esp_{f^{\star}\sim\Pi}\esp_{\P_{f^{\star}}}$.
\begin{align}
\sup_{\P\in\mathcal{P}(\sigma,R)}\esp_{\P}\left[\left(\widehat{f}(M)-f^\star(M)\right)^{2}\right] & \geq\esp_{\Pi}\left[\left(\widehat{f}(M)-T_R f^\star(M)\right)^{2}\right]\nonumber \\
\begin{split}
  &\geq\begin{aligned}[t]
 -\left|\esp_{\Pi}\left[\left(\widehat{f}(M)-T_Rf^\star(M)\right)^{2}\right]
 -\esp_{\Pi}\left[\left(\widehat{f}(M)-f^\star(M)\right)^{2}\right]\right|
 \\+\esp_{\Pi}\left[\left(\widehat{f}(M)-f^\star(M)\right)^{2}\right].\label{eq:step1}\end{aligned}
\end{split}
\end{align}
\textbf{Step 2}: Lower bound of the first term.

We choose $\Pi=\bigotimes_{m\in\M}\mathcal{N}(0,\lambda_{m}^2)$ where
$0\leq\lambda_{m}\leq\sigma$. Conditioning by $M$ and using
Fubini theorem, we obtain, 
\begin{align}
\esp_{\Pi}\left[\left(\widehat{f}(M)-f^\star(M)\right)^{2}\right] & =\esp_{f^{\star}\sim\Pi}\sum_{m\in\M}p_{m}\esp_{\P_{f^{\star}}}\left[\left(\widehat{f}(m)-f^\star(m)\right)^{2}\right]\nonumber \\
 & =\sum_{m\in\M}p_{m}\esp_{\Pi}\left[\left(\widehat{f}(m)-f^\star(m)\right)^{2}\right]\nonumber \\
 & =\sum_{m\in\M}p_{m}\esp\left[\esp_{\Pi}\left[\left(\widehat{f}(m)-f^\star(m)\right)^{2}\right]|\left(M_{i}\right)_{i\in[n]}\right]\nonumber \\
 & \geq\sum_{m\in\M}p_{m}\esp\left[\esp_{\Pi}\left[\left(\esp\left[f^\star(m)|(Y_{i})_{i\in[n]}\right]-f^\star(m)\right)^{2}\right]|\left(M_{i}\right)_{i\in[n]}\right]\label{eq:variation_esp}\\
 & =\sum_{m\in\M}p_{m}\esp\left[\esp_{\Pi}\left[\left(\esp\left[f^\star(m)|(Y_{i})_{i\in E_{m}}\right]-f^\star(m)\right)^{2}\right]|\left(M_{i}\right)_{i\in[n]}\right]\label{eq:indep_condi}\\
 & =\sum_{m\in\M}p_{m}\esp\left[\esp_{\Pi}\left[\mathbb{V}\left[f^\star(m)|(Y_{i})_{i\in E_{m}}\right]\right]|\left(M_{i}\right)_{i\in[n]}\right].\nonumber 
\end{align}
We have used variational definition of $\esp\left[f(m)|(Y_{i})_{i\in[n]}\right]$
in \eqref{eq:variation_esp} and for a prior distribution $\Pi$, $Y_{i}$ and $Y_{j}$ are independent provided that
$M_{i}\neq M_{j}$ in \eqref{eq:indep_condi}. Using \cref{lem:gauss_mini}, we obtain, 
\[
\esp_{\Pi}\left[\left(\widehat{f}(M)-f^\star(M)\right)^{2}\right]\geq\sum_{m\in\M}p_{m}\esp\left[\frac{\lambda_{m}^{2}\sigma^{2}}{\sigma^{2}+\lambda_{m}^{2}\left|E_{m}\right|}\right].
\]
Using Jensen inequality (and \cref{lem:inverse_bernoulli}  with $|E_m| \sim \mathcal{B}(n,p_m)$), we have
\begin{equation}
\esp_{\Pi}\left[\left(\widehat{f}(M)-f^\star(M)\right)^{2}\right]\geq\sum_{m\in\M}p_{m}\frac{\lambda_{m}^{2}\sigma^{2}}{\sigma^{2}+\lambda_{m}^{2}np_{m}}.
\label{eq:step2}
\end{equation}
\textbf{Step 3}: Lower bound of the second term.

Using ~\cref{lem:comparison}, for $A=\left|\esp_{\Pi}\left[\left(\widehat{f}(M)-T_Rf^\star(M)\right)^{2}\right]-\esp_{\Pi}\left[\left(\widehat{f}(M)-f^\star(M)\right)^{2}\right]\right|$ , we have
\begin{align}
A & \leq\sum_{m\in\M}p_{m}\left|\esp_{\Pi}\left[\left(\widehat{f}(m)-T_Rf^\star(m)\right)^{2}\right]-\esp_{\Pi}\left[\left(\widehat{f}(m)-f^\star(m)\right)^{2}\right]\right|\nonumber \\
 & \leq\sum_{m\in\M}p_{m}8\lambda_{m}^{2}e^{-\frac{1}{4}\left(\frac{R}{\lambda_{m}}\right)^{2}}\nonumber \\
 & \leq\frac{c}{2}\sum_{m\in\M}p_{m}\lambda_{m}^{2},
 \label{eq:step3}
\end{align}
with $c\leq 16e^{-\frac{1}{4}\left(\frac{R}{\sigma}\right)^{2}} $ and since $\lambda_{m}\leq\sigma$.

\noindent\textbf{Step 4}: Choice of $\lambda_{m}$ and conclusion.

Combining \eqref{eq:step2} and \eqref{eq:step3}
in \eqref{eq:step1}, we obtain 
\[
\sup_{\P\in\widetilde{\mathcal{P}}(\sigma,R)}\esp_{\P}\left[\left(\widehat{f}(M)-f^\star(M)\right)^{2}\right]\geq\sum_{m\in\M}p_{m}\frac{\lambda_{m}^{2}\sigma^{2}}{\sigma^{2}+\lambda_{m}^{2}np_{m}}-\frac{c}{2}\sum_{m\in\M}p_{m}\lambda_{m}^{2}.
\]
We choose 
\[
\begin{cases}
\lambda^2_{m}=\frac{\sigma^{2}}{p_{m}n} & \text{if }p_{m}>1/n,\\
\lambda^2_{m}=\sigma^{2} & \text{if }p_{m}\leq1/n.
\end{cases}
\]
The condition $\lambda_{m}\leq\sigma$ holds and $p_{m}\lambda_{m}^{2}=\sigma^{2}\min\left(p_{m},1/n\right)$
, thus 
\begin{align*}
\sup_{\P\in\mathcal{P}(\sigma,R)}\esp_{\P}\left[\left(\widehat{f}(M)-f^\star(M)\right)^{2}\right] & \geq\sum_{m\in\M}p_{m}\frac{\sigma^{2}\min\left(p_{m},1/n\right)}{1+\min\left(p_{m},1\right)}-\frac{c}{2}\sigma^{2}\sum_{m\in\M}\min\left(p_{m},1/n\right)\\
 & \geq\sum_{m\in\M}p_{m}\frac{\sigma^{2}\min\left(p_{m},1/n\right)}{2}-\frac{c}{2}\sigma^{2}\sum_{m\in\M}\min\left(p_{m},1/n\right)\\
 & \geq\frac{\sigma^2}{2}\left(1-c\right)\mathfrak{C}_{p}\left(1/n\right).
\end{align*}
\end{proof}

\subsection{Proof of Section \ref{sec:minimax}} \label{proof:minimaxNAR}
\begin{proof}[Proof of ~\cref{thm:minimaxNAR}]
The idea is to reduce the prediction problem on class $\mathcal{P}_p(R,\sigma)$ to an estimation problem on class $\widetilde{\mathcal{P}}_p(R/d,\sigma)$ and then use \cref{thm:minimax_onlyMP}.
We denote by $m_0$ the missing pattern without missing values. 

Let $a\in[-1,1]^\M$, we consider $\P_a\in {\mathcal{P}}_p(R,\sigma)$ which satisfies: 
\begin{enumerate}
    \item $\beta_0=Ra_{m_0}$
    \item $\beta=R\left(1/d,...,1/d\right)^T$.
    \item For all $m\neq m_0$, $X|M=m\sim\delta_{\mu^{(m)}}$ where $\mu_{mis(m)}^{(m)}=(a_m-a_{m_0})(1,0,...,0)^T$ and $\mu_{\rm{obs}(m)}^{(m)}=0$ ($\delta$ denote the Dirac distribution).
\end{enumerate}
These problems satisfy Assumption \ref{ass:bornitude} with $\gamma =2$.

\noindent\textbf{Step 1}: Recall that the Bayes predictor is given by
\begin{align*}
f_{m}^{\star}\left(X_{\rm{obs}(m)}\right) & =\esp\left[Y|X_{\rm{obs}(m)},M=m\right]\\
&=\esp\left[\left\langle X,\beta\right\rangle |X,M=m\right]\\
&=\left\langle X_{\rm{obs}(m)},\beta_{\rm{obs}(m)}\right\rangle +\esp\left[\left\langle X_{mis(m)},\beta_{mis(m)}\right\rangle |X,M=m\right]
\end{align*}
Using $X|M=m\sim\delta_{\mu^{(m)}}$, we have  
\[
f_{m}^{\star}\left(X_{\rm{obs}(m)}\right)=Ra_m.\]
We have $f_{m}^{\star}$ $R-$lipschitz for $\ell_\infty$-norm (because $f_m$ is constant) and $\left|f_{m}^{\star}(0)\right|\leq R$ then $\P_a$ satisfies Assumption \ref{ass:Lipsch} with $B=R$ and $B^2(\gamma+1)\leq 3R^2$. 

\noindent\textbf{Step 2}: Problem reduction. 
For $\P_a$, $X_{\rm{obs}(M)}=0$ $\P_a$-a.s., then there are no information in $X_{\rm{obs}(M)}$, all the information is contained in the missing patterns and $Z=\left(X_{\rm{obs}(M)},M\right)=(0,M)$ $\P_a$-a.s.
i.e. we can ignore $X_{\rm{obs}(M)}$. The Bayes predictor is \begin{equation*}
    f^\star (M)=Ra_M,
\end{equation*}
and
\begin{equation*}
    Y= f^\star (M)+\epsilon.
\end{equation*}
This corresponds to Problem \eqref{prob:onlyM}, and varying $a\in[-1,1]^\M$, we obtain the set $\widetilde{\mathcal{P}}_p(R/d,\sigma)$. Thus, using \cref{thm:minimax_onlyMP}

\begin{align*}
\max_{\mathbb{P}\in{\mathcal{P}}_{p}(\sigma, R)}\esp_{\P}\left[\left(f^\star(Z)-\hat{f}(Z)\right)^{2}\right]&=  \max_{a\in[-1,1]^\M}\esp_{\P_a}\left[\left(f^\star(M)-\hat{f}(M)\right)^{2}\right]\\
&= \max_{\P\in\widetilde{\mathcal{P}}_p(R/d,\sigma)}\esp_{\P}\left[\left(f^\star(M)-\hat{f}(M)\right)^{2}\right]\\
&\geq (1-c)\frac{\sigma^{2}}{2}\mathfrak{C}_{p}\left(\frac{1}{n}\right).
\end{align*}
 
\end{proof}

\begin{proof}[Proof of \cref{cor:minimaxMAR}]

We will use the same method as in the previous proof. We need to find a subclass of problem MAR included in $\mathcal{P}_{p}(\sigma, R)$. Let $a\in[-1,1]^\M$, we denote by $\P_a$ the following problem.
\begin{enumerate}
    \item $X_1\sim\mathcal{N}(0,1)$.
    \item $M=h(X_1)$ a.s. where $h$ satisfies $\P\left(h(X_1)=m\right)=p_m$.
    \item $X_{2:d}|X_1\sim \delta_{\mu^{(h(X_1))}}$ where $\mu_{mis(m)}^{(h(X_1))}= a_{h(X_1)}(1,0,...,0)^T$ and $\mu_{(obs(m)}^{(h(X_1))}=0$.
    \item $\beta=R\left(0,1/(d-1),...,1/(d-1)\right)^T$.
\end{enumerate}
By construction, $\P_a$ is MAR, and Assumption \ref{ass:bornitude} holds with $\mm=1$. 
With this new choice of $\P_a$, the rest of the proof is similar to the proof of \cref{thm:minimaxNAR}. 
 
% \noindent\textbf{Step 1}: We denote $x_{2:d}$ by $\widetilde{x}$ for $x\in\R^d$. Using $X_{2:d}|X_1\sim\mathcal{N}\left(0,\mu_{h(X_1)}\right)$ and $\beta_1=0$, we have 
% \begin{align*}
% f_{m}^{\star}\left(X_{\rm{obs}(m)}\right) & =\esp\left[Y|X_{\rm{obs}(m)},M=m\right]\\
% & =\esp\left[Y|\widetilde{X}_{\rm{obs}(m)},h(X_1)=m\right]\\
% &=\left\langle \widetilde{X}_{\rm{obs}(m)},\widetilde{\beta}_{\rm{obs}(m)}\right\rangle +\esp\left[\left\langle \widetilde{X}_{mis(m)},\widetilde{\beta}_{mis(m)}\right\rangle |\widetilde{X}_{\rm{obs}(m)},h(X_1)=m\right]\\
% &=\left\langle \widetilde{X}_{\rm{obs}(m)},\widetilde{\beta}_{\rm{obs}(m)}\right\rangle +\left\langle \mu_{mis(m)}^{(h(X_1))},\widetilde{\beta}_{mis(m)}\right\rangle \\
% &=\left\langle \widetilde{X}_{\rm{obs}(m)},\widetilde{\beta}_{\rm{obs}(m)}\right\rangle +\frac{R}{d}a_{m}.
% \end{align*}
% We have $f_{m}^{\star}$ $R-$lipschitz for $\ell_\infty$-norm (because $\left\Vert\widetilde{\beta}_{\rm{obs}(m)}\right\Vert_1\leq \left\Vert\beta\right\Vert_1\leq R$) and $\left|f_{m}^{\star}(0)\right|\leq R$ then $\P_a$ satisfies Assumption \ref{ass:Lipsch} with $B=R$ and $B^2(\gamma+1)\leq 2R^2$.

% \noindent\textbf{Step 2}: Using similar arguments as in step 2 of the previous proof, we have
% \begin{align*}
% \max_{\mathbb{P}\in\widetilde{\mathcal{P}}_{p}(\sigma, R)}\esp_{\P}\left[\left(f^\star(Z)-\hat{f}(Z)\right)^{2}\right]&\geq  \max_{a\in[-1,1]^\M}\esp_{\P_a}\left[\left(f^\star(Z)-\hat{f}(Z)\right)^{2}\right]\\
% &\geq (1-c)\frac{\sigma^{2}}{2}\mathfrak{C}_{p}\left(\frac{1}{n}\right).
% \end{align*}
\end{proof}

\section{Details on numerical experiments of \cref{sec:experiments}}
The codes of our numerical experiments are all available in Github\footnote{https://github.com/AlexisAyme/minimax\_linear\_na}.
\subsection{Details on data generation setting}
In order for the simulations to be reproducible, here are the useful parameters to generate the dataset of \cref{sec:experiments}.

Let $U\in\R^{8\times 8}$ be the diagonal matrix per block with each block equal to $(1)_{i,j\in[2]}$. For all scenarios $\beta=(1,1,1,1,1,1,1,1)$ and $\beta_0=0$. 
\paragraph{(a) MCAR} $\mu=(1,1,1,1,1,1,1,1)$, $\Sigma= U$, and $\sigma=0.1$.

\paragraph{(b) MAR} $\Sigma= U$ on corresponding block and $\sigma=0.5$. 

\paragraph{(c) GPMM.} \label{sec:GMMdetails}
$(X,M)$ is distributed according to \cref{ass:ass4.1} with $\sigma=1$ and 
\begin{itemize}
    \item $p_{m_1}=0.6$, $m_1=(0,1,0,1,0,0,0,0)$, $\mu_{m_1}=(0,5,4,-1,0,0,0,0)$, and  $\Sigma_{m_1}=U$.
    \item $p_{m_2}=0.3$, $m_2=(1,0,1,1,0,0,0,0)$, $\mu_{m_2}=(1,3,0,2,0,0,0,0)$, and  $\Sigma_{m_2}= (1)_{i,j\in[8]}$.
    \item $p_{m_3}=0.02$, $m_3=(0,1,1,1,0,0,0,0)$, $\mu_{m_3}=(0,5,4,-1,0,0,0,0)$, and  $\Sigma_{m_3}=I_8$.
    \item $p_{m_4}=0.02$, $m_4=(1,1,0,1,0,0,0,0)$, $\mu_{m_4}=(0,5,0,-1,0,0,0,0)$, and  $\Sigma_{m_4}=I_8$.
    \item $p_{m_5}=0.02$, $m_5=(1,1,0,0,0,0,0,0)$, $\mu_{m_5}=(0,-10,7,-1,0,0,0,0)$, and  $\Sigma_{m_5}=I_8$.
    \item $p_{m_6}=0.02$, $m_6=(0,1,0,0,0,0,0,0)$, $\mu_{m_6}=(0,9,0,-1,0,0,0,0)$, and  $\Sigma_{m_6}=I_8$.
    \item $p_{m_7}=0.02$, $m_7=(0,0,1,0,0,0,0,0)$, $\mu_{m_7}=(3,0,0,-1,0,0,0,0)$, and  $\Sigma_{m_7}=I_8$.
    
\end{itemize}

\subsection{Training Time}\label{sec:learningTime}

\cref{fig:time_XP} corresponds to the training time of the simulations in \cref{sec:experiments} and are associated with the curve in \cref{fig:excess_risk_XP}. NeuMiss has a much more limiting training time than other methods. The most time-efficient method is also the most biased. Indeed, Cst-imp+LR does not adapt to any scenario (see \cref{fig:excess_risk_XP}). The training times are similar for the other methods, but MICE+LR is only relevant for scenario (a).

\begin{figure*}[h]
    \centering
    \begin{tabular}{ccc}
        \includegraphics[width=0.3\linewidth]{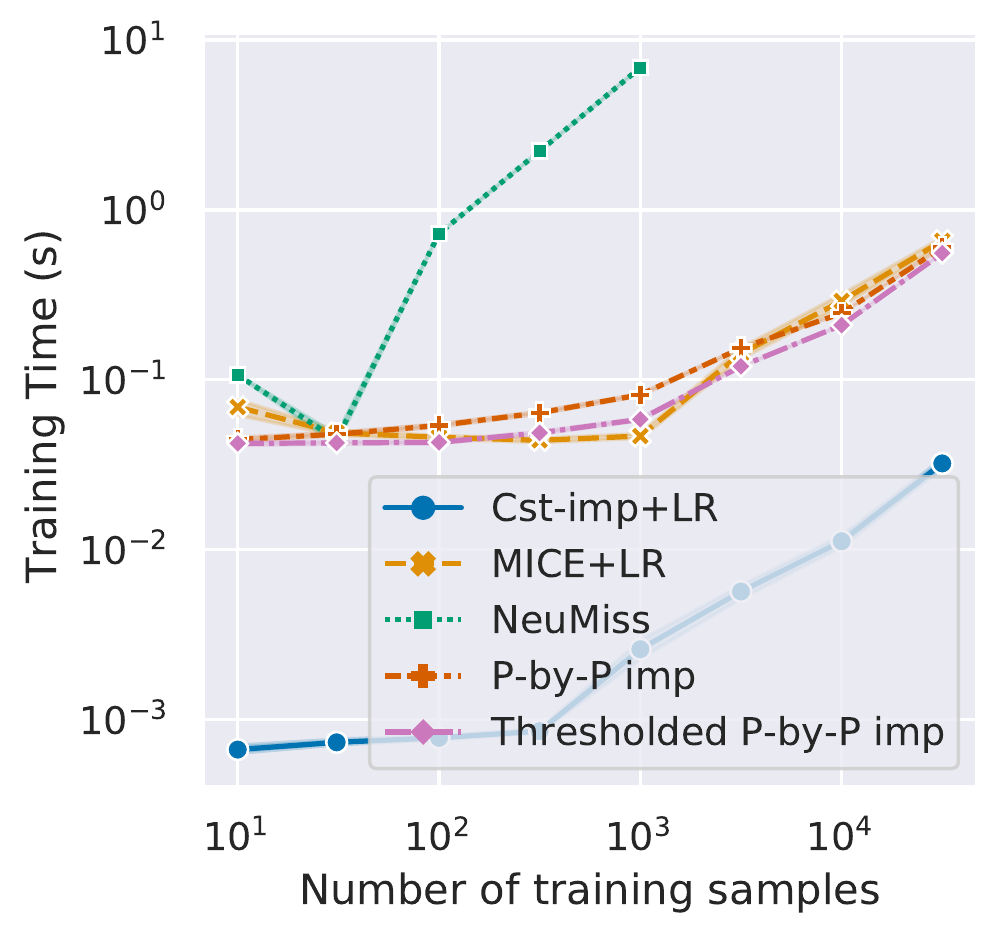}&
        \includegraphics[width=0.3\linewidth]{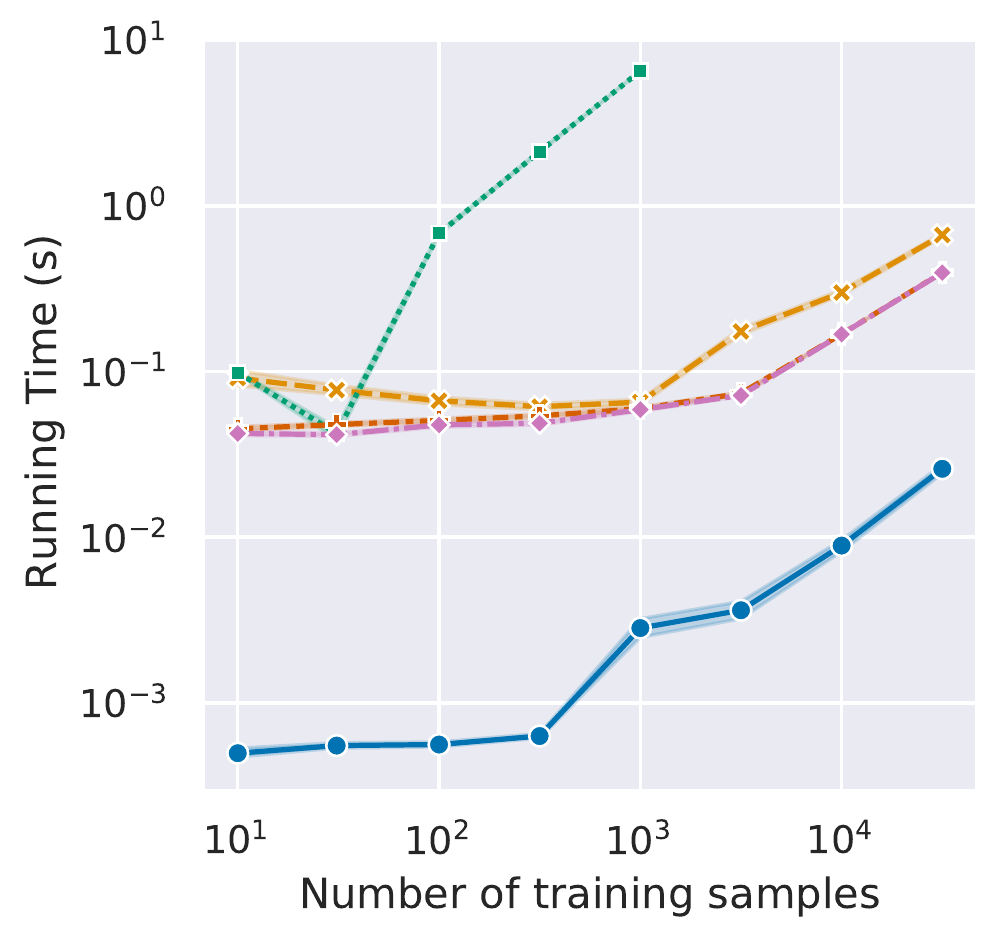}&
        \includegraphics[width=0.3\linewidth]{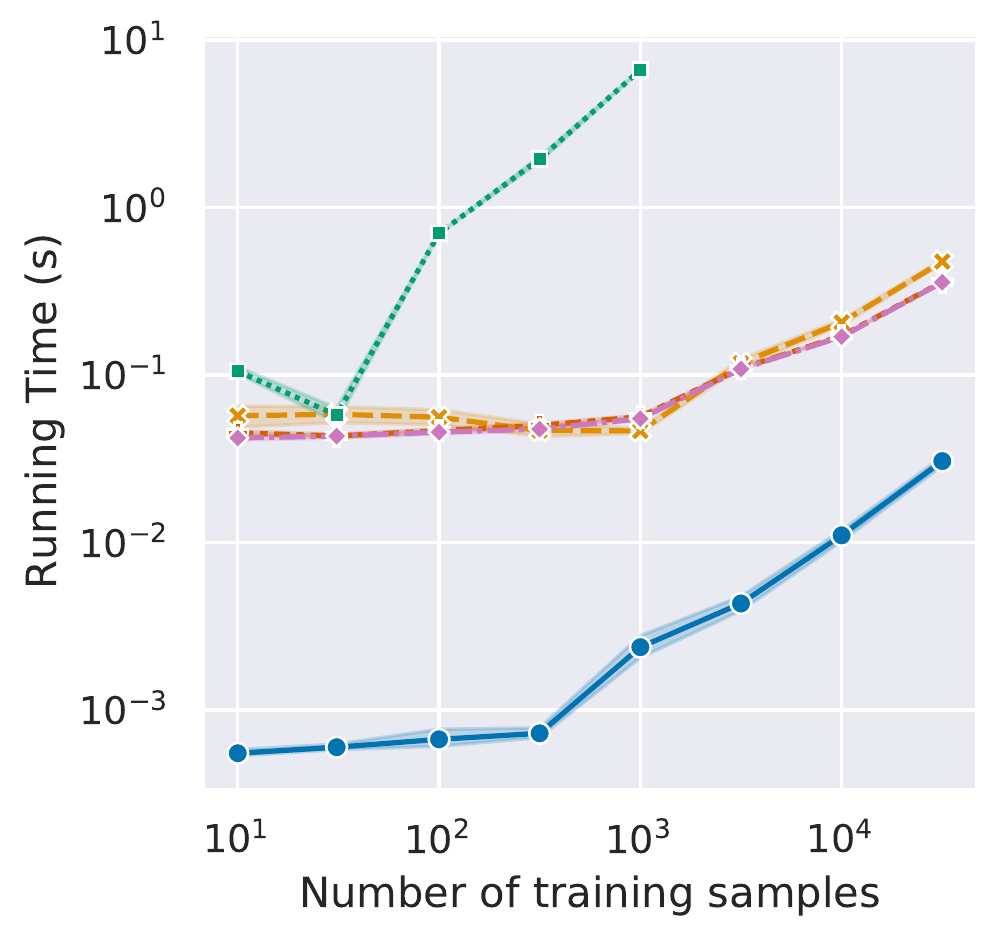} \\
        {(a) MCAR} & {(b) MAR} & {(c) MNAR-GPMM}
    \end{tabular}
    \caption{\label{fig:time_XP} Training time w.r.t.~the number of training samples.}
\end{figure*}

\end{document}